\newif\ifdraft \draftfalse
\newif\iffull \fulltrue
\let\hide\iffalse
\newcommand\co{\textbf{o}}
\newcommand\RR{\mathbb{R}}
\newcommand\cB{\mathcal{B}}
\newcommand\cE{\mathcal{E}}
\newcommand\cL{\mathcal{L}}
\newcommand\cF{\mathcal{F}}
\newcommand\cH{\mathcal{H}}
\newcommand\cM{\mathcal{M}}
\newcommand\cO{\mathcal{O}}
\newcommand\cA{\mathcal{A}}
\newcommand\cP{\mathcal{P}}
\newcommand\cps{\cA^{prsm}_\cO(\mathcal{S})}
\newcommand\cpsp{\cA^{prsm}_\cO(\mathcal{S'})}
\newcommand\cQ{\mathcal{Q}}
\newcommand\cQd{\mathcal{Q}_{\mathrm{dual}}}
\newcommand\cX{\mathcal{X}}
\newcommand\cD{\mathcal{D}}
\newcommand\cN{\mathcal{N}}
\newcommand\cV{\mathcal{V}}
\newcommand\E{\mathbb{E}}
\newcommand\CFTPL{\text{Context-FTPL }}
\newcommand{\Lap}[1]{\text{Lap}(#1)}
\newcommand\Reg{\mathrm{Reg}}
\newcommand\OracleQuery{\mathrm{OracleQuery}}
\DeclareMathOperator{\polylog}{polylog}
\DeclareMathOperator*{\Expectation}{\mathbb{E}}
\newcommand{\Ex}[2]{\Expectation_{#1}\left[#2\right]}
\newcommand{\prob}[1]{\Pr\left[#1\right]}
\newcommand{\eps}{\varepsilon}
\def\epsilon{\varepsilon}
\DeclareMathOperator*{\argmin}{\mathrm{argmin}}
\DeclareMathOperator*{\argmax}{\mathrm{argmax}}
\newcommand{\INDSTATE}[1][1]{\STATE\hspace{#1\algorithmicindent}}
\theoremstyle{plain}
\newcommand{\sn}[1]{\textcolor{purple}{[Seth: #1]}}
\newcommand{\ar}[1]{\ifdraft \textcolor{red}{[Aaron: #1]}\fi}
\newtheorem{definition}{Definition}
\newtheorem{corollary}{Corollary}
\newtheorem{lemma}{Lemma}
\newtheorem{theorem}{Theorem}
\newtheorem{remark}{Remark}
\newtheorem{claim}{Claim}
\newtheorem{fact}{Fact}
\title{How to Use Heuristics for Differential Privacy}
\author{Seth Neel\thanks{Wharton Statistics Department, University of Pennsylvania. Email: \texttt{sethneel@wharton.upenn.edu}. Supported in part by an NSF Graduate Research Fellowship} \and Aaron Roth\thanks{Department of Computer and Information Sciences, University of Pennsylvania. Email: \texttt{aaroth@cis.upenn.edu}. Supported in part by the Sloan foundation, the DARPA Brandeis project, and NSF awards 1253345 and 1513694.} \and Zhiwei Steven Wu\thanks{Computer Science and Engineering Department, University of Minnesota. Email: \texttt{zsw@umn.edu}}}
\begin{document}
\maketitle

\begin{abstract}
We develop theory for using \emph{heuristics} to solve computationally hard problems in differential privacy. Heuristic approaches have enjoyed tremendous success in machine learning, for which performance can be empirically evaluated. However, privacy guarantees cannot be evaluated empirically, and must be proven --- without making heuristic assumptions. We show that learning problems over broad classes of functions --- those that have polynomially sized universal identification sets --- can be solved privately and efficiently, assuming the existence of a non-private oracle for solving the same problem. Our first algorithm yields a privacy guarantee that is contingent on the correctness of the oracle. We then give a reduction which applies to a class of heuristics which we call \emph{certifiable}, which allows us to convert oracle-dependent privacy guarantees to worst-case privacy guarantee that hold even when the heuristic standing in for the oracle might fail in adversarial ways. Finally, we consider classes of functions for which both they and their dual classes have small universal identification sets. This includes most classes of simple boolean functions studied in the PAC learning literature, including conjunctions, disjunctions, parities, and discrete halfspaces. We show that there is an efficient algorithm for privately constructing synthetic data for any such class, given a non-private learning oracle. This in particular gives the first oracle-efficient algorithm for privately generating synthetic data for contingency tables. The most intriguing question left open by our work is whether or not \emph{every problem} that can be solved differentially privately can be privately solved with an oracle-efficient algorithm. While we do not resolve this, we give a barrier result that suggests that any generic oracle-efficient reduction must fall outside of a natural class of algorithms (which includes the algorithms given in this paper).

\end{abstract}

\thispagestyle{empty} \setcounter{page}{0}
\clearpage
\tableofcontents
\singlespacing
\thispagestyle{empty} \setcounter{page}{0}
\clearpage

\section{Introduction}
Differential privacy is compatible with a tremendous number of powerful data analysis tasks, including essentially any statistical learning problem \cite{KLNRS08,CMS11,BST14} and the generation of synthetic data consistent with exponentially large families of statistics \cite{BLR08,RR10,PMW,GRU12,NTZ13}. Unfortunately, it is also beset with a comprehensive set of computational hardness results. Of course, it inherits all of the computational hardness results from the (non-private) agnostic learning literature: for example, even the simplest learning tasks --- like finding the best conjunction or linear separator to approximately minimize classification error --- are hard \cite{FGKP09,FGRW12,DRSW11}. In addition, tasks that are easy absent privacy constraints can become hard when these constraints are added. For example, although information theoretically, it is possible to privately construct synthetic data consistent with all $d$-way marginals for $d$-dimensional data, privately constructing synthetic data for even $2$-way marginals is computationally hard \cite{hardsynth}.  These hardness results extend even to providing numeric answers to more than quadratically many statistical queries \cite{Ull16}.

How should we proceed in the face of pervasive computational hardness? We might take inspiration from machine learning, which has not been slowed, despite the fact that its most basic problems (e.g. learning linear separators) are already hard even to approximate. Instead, the field has employed heuristics with tremendous success --- including exact optimization of convex surrogate loss functions (as in the case of SVMs), decision tree heuristics, gradient based methods for differentiable but non-convex problems (as in back-propogation for training neural networks), and integer programming solvers (as in recent work on interpretable machine learning \cite{BR16}). Other fields such as operations research similarly have developed sophisticated heuristics including integer program solvers and SAT solvers that are able to routinely solve problems that are hard in the worst case.

The case of private data analysis is different, however. If we are only concerned with performance (as is the case for most machine learning and combinatorial optimization tasks), we have the freedom to try different heuristics, and evaluate our algorithms in practice. Thus the design of heuristics that perform well in practice can be undertaken as an empirical science. In contrast, differential privacy is an inherently worst-case guarantee that cannot be evaluated empirically  (see \cite{GM18} for lower bounds for black-box testing of privacy definitions).

In this paper, we build a theory for how to employ \emph{non-private} heuristics (of which there are many, benefitting from many years of intense optimization) to solve computationally hard problems in differential privacy. Our goal is to guide the design of practical algorithms about which we can still prove theorems:

\begin{enumerate}
\item We will aim to prove accuracy theorems \emph{under the assumption that our heuristics solve some non-private problem optimally}. We are happy to make this assumption when proving our accuracy theorems, because accuracy is something that can be empirically evaluated on the datasets that we are interested in. An assumption like this is also necessary, because we are designing algorithms for problems that are computationally hard in the worst case. However:
\item We aim to prove that our algorithms are differentially private in the worst case, even under the assumption that our heuristics might fail in an adversarial manner.
\end{enumerate}

\subsection{Overview of Our Results}
Informally, we give a collection of results showing the existence of \emph{oracle-efficient} algorithms for privately solving learning and synthetic data generation problems defined by discrete classes of functions $\cQ$ that have a special (but common) combinatorial structure. One might initially ask whether it is possible to give a direct reduction from a non-private but efficient algorithm for solving a learning problem to an efficient private algorithm for solving the same learning problem \emph{without requiring any special structure at all}. However, this is impossible, because there are classes of functions (namely those that have finite VC-dimension but infinite Littlestone dimension) that are known to be learnable absent the constraint of privacy, but are not privately learnable in an information-theoretic sense \cite{intervals, littlestoneprivacy}. The main question we leave open is whether \emph{being information theoretically learnable under the constraint of differential privacy} is sufficient for oracle-efficient private learning. We give a barrier result suggesting that it might not be.

Before we summarize our results in more detail, we give some informal definitions.

\subsubsection{Definitions}
We begin by defining the kinds of \emph{oracles} that we will work with, and end-goals that we will aim for. We will assume the existence of oracles for (non-privately) solving learning problems: for example, an oracle which can solve the empirical risk minimization problem for discrete linear threshold functions. Because ultimately oracles will be implemented using heuristics, we consider two types of oracles:
\begin{enumerate}
\item \emph{Certifiable} heuristic oracles might fail, but when they succeed, they come with a certificate of success. Many heuristics for solving integer programs are certifiable, including cutting planes methods and branch and bound methods. SAT Solvers (and any other heuristic for solving a decision problem in NP) are also certifiable.
\item On the other had, some heuristics are \emph{non-certifiable}. These heuristics might produce incorrect answers, without any indication that they have failed. Support vector machines and logistic regression are examples of non-certifiable heuristic oracles for learning linear threshold functions.
\end{enumerate}
We define an oracle-efficient \textit{non-robustly} differentially private algorithm to be an algorithm that runs in polynomial time in all relevant parameters given access to an oracle for some problem, and has an accuracy guarantee and a differential privacy guarantee which may both be \emph{contingent} on the guarantees of the oracle --- i.e. if the oracle is replaced with a heuristic, the algorithm may no longer be differentially private. Although in certain situations (e.g when we have very high confidence that our heuristics actually do succeed on all instances we will ever encounter) it might be acceptable to have a privacy guarantee that is contingent on having an infallible oracle, we would much prefer a privacy guarantee that held in the worst case. We say that an oracle-efficient algorithm is \emph{robustly} differentially private if its privacy guarantee is not contingent on the behavior of the oracle, and holds in the worst case, even if an adversary is in control of the heuristic that stands in for our oracle.

\subsubsection{Learning and Optimization}
Our first result is a reduction from efficient non-private learning to efficient private learning over any class of functions $\cQ$ that has a small universal identification set \cite{goldman1993exact}. A universal identification set of size $m$ is a set of $m$ examples such that the labelling of these examples by a function $q \in \cQ$ is enough to uniquely identify $q$. Equivalently, a universal identification set can be viewed as a \emph{separator set} \cite{oracle16}: for any pair of functions $q \neq q' \in \cQ$, there must be some example $x$ in the universal identification set such that $q(x) \neq q(x')$. We will use these terms interchangeably throughout the paper. We show that if $\cQ$ has a universal identification set of size $m$, then given an oracle which solves the empirical risk minimization problem (non-privately) over $\cQ$, there is an $\epsilon$-differentially private algorithm with additional running time scaling linearly with $m$ and error scaling linearly with $m^2/\epsilon$ that solves the private empirical risk minimization problem over $\cQ$. The error can be improved to $O(m^{1.5}\sqrt{\log 1/\delta}/\epsilon)$, while satisfying $(\epsilon,\delta)$-differential privacy.  Many well studied discrete concept classes $\cQ$ from the PAC learning literature have small universal identification sets. For example, in $d$ dimensions, boolean conjunctions, disjunctions, parities, and halfspaces defined over the hypercube have universal identification sets of size $d$.  This means that for these classes, our oracle-efficient algorithm has error that is larger than the generic optimal (and computationally inefficient) learner from \cite{KLNRS08} by a factor of $O(\sqrt{d})$. Other classes of functions also have small universal identification sets --- for example, decision lists have universal identification sets of size $d^2$.

The reduction described above has the disadvantage that not only its accuracy guarantees --- but also its proof of privacy --- depend on the oracle correctly solving the empirical risk minimization problem it is given; it is \textit{non-robustly} differentially private. This shortcoming motivates our main technical result: a generic reduction that takes as input any oracle-efficient non-robustly differentially private algorithm (i.e. an algorithm whose privacy proof might depend on the proper functioning of the oracle) and produces an oracle-efficient \emph{robustly} differentially private algorithm, \emph{whenever the oracle is implemented with a certifiable heuristic}. As discussed above, this class of heuristics includes the integer programming algorithms used in most commercial solvers. In combination with our first result, we obtain robustly differentially private oracle-efficient learning algorithms for conjunctions, disjunctions, discrete halfspaces, and any other class of functions with a small universal identification set. 
\subsubsection{Synthetic Data Generation}
We then proceed to the task of constructing synthetic data consistent with a class of queries $\cQ$. Following \cite{Hsu13,dualquery}, we view the task of synthetic data generation as the process of computing an equilibrium of a particular zero sum game played between a data player and a query player. In order to compute this equilibrium, we need to be able to instantiate two objects in an oracle-efficient manner:
\begin{enumerate}
\item a private \emph{learning} algorithm for $\cQ$ (this corresponds to solving the best response problem for the ``query player''), and
\item  a \emph{no-regret learning algorithm} for a dual class of functions $\cQd$ that results from swapping the role of the data element and the query function (this allows the ``data player'' to obtain a diminishing regret bound in simulated play of the game).
\end{enumerate}

The no-regret learning algorithm need not be differentially private. From our earlier results, we are able to construct an oracle-efficient robustly differentially private learning algorithm for $\cQ$ whenever it has a small universal identification set. On the other hand, Syrgkanis et al. \cite{oracle16} show how to obtain an oracle-efficient no regret learning algorithm for a class of functions under the same condition. Hence, we obtain an oracle-efficient robustly differentially private synthetic data generation algorithm for any class of functions $\cQ$ for which both $\cQ$ and $\cQd$ have small universal identification sets. Fortunately, this is the case for many interesting classes of functions, including boolean disjunctions, conjunctions, discrete halfspaces, and parity functions. The result is that we obtain oracle-efficient algorithms for generating private synthetic data for all of these classes. We note that the oracle used by the data player need not be certifiable.

\subsubsection{A Barrier Result}
Finally, we exhibit a barrier to giving oracle-efficient private learning algorithms for \emph{all} classes of functions $\cQ$ known to be privately learnable. We identify a class of private learning algorithms called \emph{perturbed empirical risk minimizers} (pERMs) which output the query that \emph{exactly} minimizes some perturbation of their empirical risk on the dataset. This class of algorithms includes the ones we give in this paper, as well as many other differentially private learning algorithms, including the exponential mechanism and report-noisy-min. We show that any private pERM can be efficiently used as a no-regret learning algorithm with regret guarantees that depend on the scale of the perturbations it uses. This allows us to reduce to a lower bound on the running time of oracle-efficient online learning algorithms due to Hazan and Koren \cite{HK16}. The result is that there exist finite classes of queries $\cQ$ such that any oracle-efficient differentially private pERM algorithm must introduce perturbations that are polynomially large in the size of $|Q|$, whereas any such class is information-theoretically privately learnable with error that scales only with $\log|\cQ|$.

The barrier implies that \emph{if} oracle-efficient differentially private learning algorithms are as powerful as inefficient differentially private learning algorithms, then these general oracle efficient private algorithms must not be perturbed empirical risk minimizers. We conjecture that the set of problems solvable by oracle-efficient differentially private learners is strictly smaller than the set of problems solvable information theoretically under the constraint of differential privacy, but leave this as our main open question.
\subsection{Additional Related Work}
Conceptually, the most closely related piece of work is the ``DualQuery'' algorithm of \cite{dualquery}, which in the terminology of our paper is a robustly private oracle-efficient algorithm for generating synthetic data for $k$-way marginals for constant $k$. The main idea in \cite{dualquery} is to formulate the private optimization problem that needs to be solved so that the only computationally hard task is one that does not depend on private data. There are other algorithms that can straightforwardly be put into this framework, like the projection algorithm from \cite{NTZ13}. This approach immediately makes the privacy guarantees independent of the correctness of the oracle, but significantly limits the algorithm design space. In particular, the DualQuery algorithm (and the oracle-efficient version of the projection algorithm from \cite{NTZ13}) has running time that is proportional to $|\cQ|$, and so can only handle polynomially sized classes of queries (which is why $k$ needs to be held constant). The main contribution of our paper is to be able to handle private optimization problems in which the hard computational step is \emph{not} independent of the private data. This is significantly more challenging, and is what allows us to give oracle-efficient robustly private algorithms for constructing synthetic data for exponentially large families $\cQ$. It is also what lets give oracle-efficient private \emph{learning} algorithms over exponentially large $\cQ$ for the first time.

A recent line of work starting with the ``PATE'' algorithm \cite{pate} together with more recent theoretical analyses of similar algorithms by Dwork and Feldman, and Bassily, Thakkar, and Thakurta \cite{DF18,BTT18} can be viewed as giving oracle-efficient algorithms for an easier learning task, in which the goal is to produce a finite number of private \emph{predictions} rather than privately output the model that makes the predictions. These can be turned into oracle efficient algorithms for outputting a private model \emph{under the assumption} that the mechanism has access to an additional source of unlabeled data drawn from the same distribution as the private data, but that does not need privacy protections. In this setting, there is no need to take advantage of any special structure of the hypothesis class $\cQ$, because the information theoretic lower bounds on private learning proven in \cite{intervals,littlestoneprivacy} do not apply. In contrast, our results apply without the need for an auxiliary source of non-private data. 

Privately producing \emph{contingency tables}, and synthetic data that encode them --- i.e. the answers to statistical queries defined by conjunctions of features --- has been a key challenge problem in differential privacy at least since \cite{contingency}. Since then, a number of algorithms and hardness results have been given \cite{hardsynth, conjunctions, contingency2,marginals,marginals2,conjunctions2,conjunctions3}. This paper gives the first oracle-efficient algorithm for generating synthetic data consistent with a full contingency table, and the first oracle-efficient algorithm for answering arbitrary conjunctions to near optimal error.

Technically, our work is inspired by Syrgkanis et al. \cite{oracle16} who show how a small separator set (equivalently a small universal identification set) can be used to derive oracle-efficient no-regret algorithms in the contextual bandit setting. The small separator property has found other uses in online learning, including in the oracle-efficient construction of nearly revenue optimal auctions \cite{oracleauction}. Hazan and Koren \cite{HK16} show lower bounds for oracle-efficient no-regret learning algorithms in the experts setting, which forms the basis of our barrier result. More generally, there is a rich literature studying oracle-efficient algorithms in machine learning \cite{red1,red2,red3} and optimization \cite{oracleopt} as a means of dealing with worst-case hardness, and more recently, for machine learning subject to fairness constraints \cite{Agarwal18,KNRW18,AIK18}.

We also make crucial use of a property of differentially private algorithms, first shown by \cite{CLNRW16}: That when differentially private algorithms are run on databases of size $n$ with privacy parameter $\epsilon \approx 1/\sqrt{n}$, then they have similar output distributions when run on datasets that are \emph{sampled from the same distribution}, rather than just on neighboring datasets. In \cite{CLNRW16}, this was used as a tool to show the existence of \emph{robustly generalizing} algorithms (also known as \emph{distributionally private} algorithms in \cite{BLR08}). We prove a new variant of this fact that holds when the datasets are not sampled i.i.d. and use it for the first time in an analysis to prove differential privacy. The technique might be of independent interest.  
\section{Preliminaries}
\subsection{Differential Privacy Tools}
\label{sec:DPprelims}

Let $\cX$ denote a $d$-dimensional data domain (e.g. $\mathbb{R}^d$ or
$\{0, 1\}^d$).  We write $n$ to denote the size of a dataset $S$. We
call two \emph{data sets} $S, S' \in \cX^n$ \emph{neighbors} (written
as $S \sim S'$) if $S$ can be derived from $S'$ by replacing a single
data point with some other element of
$\cX$.

\begin{definition}[Differential Privacy \cite{DMNS06,delta}]
Fix $\eps,\delta \geq 0$. A randomized algorithm $A:\cX^*\rightarrow \mathcal{O}$ is $(\eps,\delta)$-differentially private if for every pair of neighboring data sets $S \sim S' \in \cX^*$, and for every event $\Omega \subseteq \mathcal{O}$:
$$\Pr[A(S) \in \Omega] \leq \exp(\eps)\Pr[A(S') \in \Omega] + \delta.$$
\end{definition}
Differentially private computations enjoy two nice properties:
\begin{theorem}[Post Processing \cite{DMNS06,delta}]
Let $A:\cX^*\rightarrow \mathcal{O}$ be any $(\eps,\delta)$-differentially private algorithm, and let $f:\mathcal{O}\rightarrow \mathcal{O'}$ be any function. Then the algorithm $f \circ A: \cX^*\rightarrow \mathcal{O}'$ is also $(\eps,\delta)$-differentially private.
\end{theorem}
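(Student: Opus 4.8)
The plan is to reduce directly to the definition of $(\eps,\delta)$-differential privacy by pulling events back through $f$. First I would handle the case of a deterministic $f$. Fix an arbitrary pair of neighboring datasets $S \sim S' \in \cX^*$ and an arbitrary event $\Omega' \subseteq \mathcal{O}'$, and define the preimage $\Omega = f^{-1}(\Omega') = \{o \in \mathcal{O} : f(o) \in \Omega'\} \subseteq \mathcal{O}$. The key observation is that the events $\{(f\circ A)(S) \in \Omega'\}$ and $\{A(S) \in \Omega\}$ are one and the same, since $f(A(S)) \in \Omega'$ holds if and only if $A(S) \in f^{-1}(\Omega')$. Hence
\begin{align*}
\Pr[(f \circ A)(S) \in \Omega'] = \Pr[A(S) \in \Omega] \le e^{\eps}\Pr[A(S') \in \Omega] + \delta = e^{\eps}\Pr[(f\circ A)(S') \in \Omega'] + \delta ,
\end{align*}
where the single inequality is exactly the privacy guarantee of $A$ instantiated on the event $\Omega$. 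Since $S \sim S'$ and $\Omega'$ were arbitrary, $f \circ A$ is $(\eps,\delta)$-differentially private.

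To cover a \emph{randomized} post-processing map $f$ (the version actually invoked in the applications), I would write $f$ as a mixture of deterministic maps: let $r$ denote the internal coins of $f$, drawn from a distribution independent of the randomness of $A$, and let $f_r$ be the deterministic function obtained by fixing $r$. The deterministic case shows that $f_r \circ A$ is $(\eps,\delta)$-differentially private for every fixed $r$. Conditioning on $r$ and using linearity of expectation together with the independence of $r$ from $A$'s coins then gives, for any event $\Omega'$,
\begin{align*}
\Pr[(f \circ A)(S) \in \Omega'] &= \E_{r}\!\left[\Pr[(f_r \circ A)(S) \in \Omega']\right] \\
&\le \E_{r}\!\left[e^{\eps}\Pr[(f_r \circ A)(S') \in \Omega'] + \delta\right] = e^{\eps}\Pr[(f \circ A)(S') \in \Omega'] + \delta .
\end{align*}

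I do not expect any substantive obstacle, as the statement is essentially definitional. The only points that warrant a bit of care are that $\Omega = f^{-1}(\Omega')$ must be a legitimate event (automatic when $\mathcal{O}$ is countable, and otherwise handled by the standing convention that $f$ is measurable), and that in the randomized case the coins of $f$ be independent of those of $A$ so the conditioning step composes cleanly; both are routine.
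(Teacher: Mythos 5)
Your proof is correct and is the standard textbook argument for post-processing. The paper itself does not prove this theorem --- it is stated as a cited fact from \cite{DMNS06,delta} --- so there is no in-paper proof to compare against; your reduction via preimages for deterministic $f$ followed by averaging over the coins of a randomized $f$ is exactly the argument one finds in the standard references.
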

Post-processing implies that, for example, every \emph{decision} process based on the output of a differentially private algorithm is also differentially private.

\begin{theorem}[Basic Composition \cite{DMNS06,delta}]\label{composition}
Let $A_1:\cX^*\rightarrow \mathcal{O}$, $A_2:\mathcal{O}\times \cX^*\rightarrow \mathcal{O}'$ be such that $A_1$ is $(\eps_1,\delta_1)$-differentially private, and $A_2(o,\cdot)$ is $(\eps_2,\delta_2)$-differentially private for every $o \in \mathcal{O}$. Then the algorithm $A:\cX^*\rightarrow \mathcal{O'}$ defined as $A(x) = A_2(A_1(x),x)$ is $(\eps_1+\eps_2,\delta_1+\delta_2)$-differentially private.
\end{theorem}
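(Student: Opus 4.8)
The plan is to reduce the joint statement about $A$ to the two hypotheses by \emph{conditioning on the intermediate output} $o = A_1(S)$. Write $\mu_S$ for the output distribution of $A_1(S)$ and $\lambda_{o,S}$ for the output distribution of $A_2(o,S)$ (using that $A_2$'s internal coins are fresh and independent of $A_1$'s). Then for any event $\Omega \subseteq \mathcal{O}'$ we have $\Pr[A(S)\in\Omega] = \Ex{o\sim\mu_S}{\lambda_{o,S}(\Omega)}$. The hypothesis on $A_2$ gives, \emph{pointwise in $o$}, that $\lambda_{o,S}(\Omega)\le e^{\eps_2}\lambda_{o,S'}(\Omega)+\delta_2$. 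The hypothesis on $A_1$ only speaks about events in $\mathcal{O}$, so the remaining work is to convert it into a statement about the bounded function $o\mapsto\lambda_{o,S'}(\Omega)$, and to arrange the $\delta$ bookkeeping so the two error terms add rather than picking up spurious $e^{\eps}$ factors.

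First I would record the elementary ``layer-cake'' lemma: if a mechanism $M$ is $(\eps,0)$-differentially private and $g:\mathcal{O}\to[0,1]$, then $\Ex{o\sim M(S)}{g(o)}\le e^{\eps}\,\Ex{o\sim M(S')}{g(o)}$, proved by writing $\E[g(o)] = \int_0^1\Pr[g(o)>t]\,dt$ and applying the privacy guarantee to each threshold event $\{o : g(o)>t\}$ (equivalently, this is post-processing applied to the randomized map $o\mapsto\mathrm{Bernoulli}(g(o))$). For the pure-DP case $\delta_1=\delta_2=0$ this already finishes the proof: $\Pr[A(S)\in\Omega]\le e^{\eps_2}\Ex{o\sim\mu_S}{\lambda_{o,S'}(\Omega)} \le e^{\eps_1+\eps_2}\Ex{o\sim\mu_{S'}}{\lambda_{o,S'}(\Omega)} = e^{\eps_1+\eps_2}\Pr[A(S')\in\Omega]$.

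The hard part is the $\delta$ accounting, since naively chaining the two guarantees produces a stray factor like $e^{\eps_1}\delta_2$. The clean fix I would use is the standard characterization of approximate indistinguishability: if $P(\Omega)\le e^{\eps}Q(\Omega)+\delta$ for all $\Omega$, then there is a distribution $\widehat P$ with statistical distance at most $\delta$ from $P$ satisfying $\widehat P(\Omega)\le e^{\eps}Q(\Omega)$ for all $\Omega$. This is obtained by capping the density $dP/dQ$ at $e^{\eps}$ and redistributing the excess mass --- which has total weight $\int (dP/dQ - e^{\eps})_+\,dQ \le \delta$ (the singular part of $P$ relative to $Q$ handled the same way) --- into the regions where the density lies below $e^{\eps}$, where there is enough slack since $e^{\eps}\ge 1$. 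Applying this to $A_1$ yields $\widehat\mu$ with $\mathrm{SD}(\mu_S,\widehat\mu)\le\delta_1$ and $\widehat\mu(\Omega)\le e^{\eps_1}\mu_{S'}(\Omega)$; applying it to each $A_2(o,\cdot)$ yields $\widehat\lambda_o$ with $\mathrm{SD}(\lambda_{o,S},\widehat\lambda_o)\le\delta_2$ and $\widehat\lambda_o(\Omega)\le e^{\eps_2}\lambda_{o,S'}(\Omega)$.

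Finally I would set $\widehat P = \Ex{o\sim\widehat\mu}{\widehat\lambda_o}$ and verify two things. First, $\mathrm{SD}(\text{dist of }A(S),\,\widehat P)\le\delta_1+\delta_2$, by inserting the intermediate mixture $\Ex{o\sim\mu_S}{\widehat\lambda_o}$ and using the two standard facts that statistical distance between mixtures is bounded by the average statistical distance of the components (giving $\le\delta_2$) and by the statistical distance of the mixing weights (giving $\le\delta_1$), together with the triangle inequality. Second, $\widehat P(\Omega)\le e^{\eps_1+\eps_2}\Pr[A(S')\in\Omega]$ for all $\Omega$, which is exactly the pure-DP chaining of the second paragraph applied now to $\widehat\mu$ and $\widehat\lambda_o$ (the layer-cake lemma used with $g(o)=\lambda_{o,S'}(\Omega)\in[0,1]$). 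Combining the two gives $\Pr[A(S)\in\Omega]\le\widehat P(\Omega)+\delta_1+\delta_2\le e^{\eps_1+\eps_2}\Pr[A(S')\in\Omega]+\delta_1+\delta_2$; rerunning the argument with $S$ and $S'$ swapped covers the other ordering of the neighboring pair. The only genuinely delicate step is the density-capping construction in the characterization lemma; everything else is routine bookkeeping.
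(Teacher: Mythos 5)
The paper does not prove this theorem --- it is stated as a standard fact with citations to \cite{DMNS06,delta}, so there is no paper proof to compare against. Your proof is correct. The reduction to conditioning on $o=A_1(S)$, the layer-cake lemma for the pure-DP chaining, and the ``capped-density'' characterization lemma (that $(\eps,\delta)$-closeness to $Q$ is equivalent to being within statistical distance $\delta$ of some $\widehat P$ with $\widehat P(\Omega)\le e^{\eps}Q(\Omega)$ pointwise) are exactly the ingredients needed to get the additive $\delta_1+\delta_2$ rather than the $e^{\eps_2}\delta_1+\delta_2$ that naive chaining yields; all the intermediate steps check out, including the bookkeeping in the characterization lemma (the excess mass is $\le\delta$ by testing the privacy inequality on the event where the density exceeds $e^{\eps}$ together with the singular support, and $e^{\eps}\ge 1$ guarantees enough slack to absorb it). This is essentially the argument behind Lemma 3.17 of the Dwork--Roth monograph, and is the standard route to the tight additive composition bound.
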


The Laplace distribution plays a fundamental role in differential privacy.
The Laplace Distribution centered at $0$ with scale $b$ is the
distribution with probability density function
$\Lap{z|b} = \frac{1}{2b}e^{-\frac{|z|}{b}}$.
We write $X \sim \Lap{b}$ when $X$ is a random variable drawn from a Laplace distribution with scale
$b$. Let $f\colon \cX^n \rightarrow \RR^k$ be an arbitrary function. The {\em $\ell_1$ sensitivity} of $f$ is defined
to be $\Delta_1(f) = \max_{S\sim S'} \|f(S) - f(S')\|_1$.
The {\em Laplace mechanism} with parameter $\eps$ simply adds noise drawn independently
from $\Lap{\frac{\Delta_1(f)}{\eps}}$ to each coordinate of $f(S)$.

\begin{theorem}[\citep{DMNS06}]
  The Laplace mechanism is $\eps$-differentially private.
\end{theorem}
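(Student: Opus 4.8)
The plan is to work directly with probability densities. Fix $f\colon \cX^n \to \RR^k$ with $\ell_1$ sensitivity $\Delta = \Delta_1(f)$, and let $M$ denote the Laplace mechanism, so that on input $S$ it outputs $M(S) = f(S) + (Y_1,\dots,Y_k)$ where the $Y_i$ are i.i.d.\ draws from $\Lap{\Delta/\eps}$. By independence of the coordinates, the output $M(S)$ has density $p_S(z) = \prod_{i=1}^k \frac{\eps}{2\Delta}\exp\!\left(-\frac{\eps |f(S)_i - z_i|}{\Delta}\right)$ at any point $z \in \RR^k$. First I would fix a pair of neighboring datasets $S \sim S'$ and an arbitrary point $z$, and bound the pointwise ratio $p_S(z)/p_{S'}(z)$.

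Taking the ratio, the normalizing constants cancel and we get
\[
\frac{p_S(z)}{p_{S'}(z)} = \prod_{i=1}^k \exp\!\left(\frac{\eps\bigl(|f(S')_i - z_i| - |f(S)_i - z_i|\bigr)}{\Delta}\right).
\]
The key step is the triangle inequality: $|f(S')_i - z_i| - |f(S)_i - z_i| \le |f(S)_i - f(S')_i|$ for each coordinate $i$. Substituting, the ratio is at most $\exp\!\left(\frac{\eps}{\Delta}\sum_{i=1}^k |f(S)_i - f(S')_i|\right) = \exp\!\left(\frac{\eps \, \|f(S) - f(S')\|_1}{\Delta}\right)$, which is at most $\exp(\eps)$ by the definition of $\ell_1$ sensitivity (since $\|f(S)-f(S')\|_1 \le \Delta_1(f) = \Delta$ for neighbors).

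Finally I would integrate this pointwise bound over an arbitrary measurable event $\Omega \subseteq \RR^k$: $\Pr[M(S) \in \Omega] = \int_\Omega p_S(z)\,dz \le \exp(\eps)\int_\Omega p_{S'}(z)\,dz = \exp(\eps)\Pr[M(S') \in \Omega]$, which is exactly the $(\eps,0)$-differential privacy condition. Note the argument applies symmetrically to the ordered pair $(S',S)$ as well, so no separate case is needed. There is no real obstacle here — the only point requiring minor care is the coordinate-wise application of the triangle inequality and then summing to invoke the $\ell_1$ sensitivity bound; everything else is bookkeeping with the Laplace density.
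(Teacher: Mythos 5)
Your proof is correct and is the standard density-ratio argument for the Laplace mechanism (as in \cite{DMNS06} and the Dwork--Roth monograph); the paper itself only cites the result without reproducing a proof. The three steps you identify — the reverse triangle inequality per coordinate, summing to invoke the $\ell_1$ sensitivity bound, and integrating the pointwise density ratio over an arbitrary event — are exactly the right ones, and the symmetry remark closes the argument cleanly.
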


\subsection{Statistical Queries and Separator Sets}
We study learning (optimization) and synthetic data generation problems for statistical queries defined over a data universe $\cX$. A statistical query over $\cX$ is a function $q:\cX \rightarrow \{0,1\}$. A statistical query can represent, e.g. any binary classification model or the binary loss function that it induces. Given a dataset $S \in \cX^n$, the value of a statistical query $q$ on $S$ is defined to be $q(S) = \frac{1}{n}\displaystyle\sum_{i=1}^n q(S_i)$. In this paper, we will generally think about query classes $\cQ$ that represent standard \emph{hypothesis classes} from learning theory -- like conjunctions, disjunctions, halfspaces, etc.

In this paper, we will make crucial use of \emph{universal identification sets} for classes of statistical queries.
Universal identification sets are equivalent to \emph{separator sets}, defined (in a slightly more general form) in \cite{oracle16}.

\begin{definition}[\cite{goldman1993exact,oracle16}]\label{def:separator}
A set $U \subseteq \cX$ is a \emph{universal identification set} or \emph{separator set} for a class of statistical queries $\cQ$ if for every pair of distinct queries $q, q' \in \cQ$, there is an $x \in U$ such that:
$$q(x) \neq q(x')$$
If $|U| = m$, then we say that $\cQ$ has a separator set of size $m$.
\end{definition}

Many classes of statistical queries defined over the boolean hypercube have separator sets of size proportional to their VC-dimension. For example, boolean conjunctions, disjunctions, halfspaces defined over the hypercube, and parity functions in $d$ dimensions all have separator sets of size $d$. When we solve learning problems over these classes, we will be interested in the set of queries that define the 0/1 loss function over these classes: but as we observe in Appendix \ref{app:separator}, if a hypothesis class has a separator set of size $m$, then so does the class of queries representing the empirical loss for functions in that hypothesis class.

\subsection{Learning and Synthetic Data Generation}
We study private learning as empirical risk minimization (the connection between in-sample risk and out-of-sample risk is standard, and follows from e.g. VC-dimension bounds \cite{KV} or directly from differential privacy (see e.g. \cite{BST14,DFHPRR15})). Such problems can be cast as finding a function $q$ in a class $\cQ$ that minimizes $q(S)$, subject to differential privacy (observe that the empirical risk of a hypothesis is a statistical query --- see Appendix \ref{app:separator}). We will therefore study minimization problems over classes of statistical queries generally:
\begin{definition}
We say that a randomized algorithm $M:\cX^n\rightarrow \cQ$ is an $(\alpha,\beta)$-minimizer for $\cQ$ if for every dataset $S \in \cX^n$, with probability $1-\beta$, it outputs $M(S) = q$ such that:
$$q(S) \leq \arg\min_{q^* \in \cQ} q^*(S) + \alpha$$
\end{definition}

Synthetic data generation, on the other hand, is the problem of constructing a \emph{new} dataset $\hat S$ that approximately agrees with the original dataset with respect to a fixed set of statistical queries:

\begin{definition}
We say that a randomized algorithm $M:\cX^n\rightarrow \cX^*$ is an $(\alpha,\beta)$-accurate synthetic data generation algorithm for $\cQ$ if for every dataset $S \in \cX^n$, with probability $1-\beta$, it outputs $M(S) = \hat S$ such that for all $q \in \cQ$:
$$|q(S) - q(\hat S)| \leq \alpha$$
\end{definition}

\subsection{Oracles and Oracle Efficient Algorithms}
We discuss several kinds of oracle-efficient algorithms in this paper. It will be useful for us to study oracles that solve weighted generalizations of the minimization problem, in which each datapoint $x_i \in S$ is paired with a real-valued weight $w_i$. In the literature on oracle-efficiency in machine learning, these are widely employed, and are known as \emph{cost-sensitive classification oracles}. Via a simple translation and re-weighting argument, they are no more powerful than unweighted minimization oracles, but are more convenient to work with.
\begin{definition}
A weighted optimization oracle for a class of statistical queries $\cQ$ is a function $\cO^*:(\cX \times \mathbb{R})^*\rightarrow \cQ$ takes as input a weighted dataset $WD \in (\cX\times \mathbb{R})^*$ and outputs a query $q = \cO^*(WD)$ such that
$$q \in \argmin_{q^* \in \cQ} \displaystyle\sum_{(x_i,w_i) \in WD}w_i q^*(x_i).$$
\end{definition}

In this paper, we will study algorithms that have access to weighted optimization oracles for learning problems that are computationally hard. Since we do not believe that such oracles have worst-case polynomial time implementations, in practice, we will instantiate such oracles with heuristics that are not guaranteed to succeed. There are two failure modes for a heuristic: it can fail to produce an output at all, or it can output an incorrect query. The distinction can be important. We call a heuristic that might fail to produce an output, but never outputs an incorrect solution a certifiable heuristic optimization oracle:

\begin{definition}
A certifiable heuristic optimization oracle for a class of queries $\cQ$ is a polynomial time algorithm $\cO:(\cX \times \mathbb{R})^*\rightarrow (\cQ\cup \bot)$ that takes as input a weighted dataset $WD \in (\cX\times \mathbb{R})^*$ and either outputs $\cO(WD) = q \in \argmin_{q^* \in \cQ} \displaystyle\sum_{(x_i,w_i) \in WD}w_i q^*(x_i)$ or else outputs $\bot$ (``Fail''). If it outputs a statistical query $q$, we say the oracle has succeeded.
\end{definition}

In contrast, a heuristic optimization oracle (that is not certifiable) has no guarantees of correctness. Without loss of generality, such oracles never need to return ``Fail'' (since they can always instead output a default statistical query in this case).

\begin{definition}
A (non-certifiable) heuristic optimization oracle for a class of queries $\cQ$ is an arbitrary polynomial time algorithm $M:(\cX \times \mathbb{R})^*\rightarrow \cQ$. Given a call to the oracle defined by a weighted dataset $WD \in (\cX\times \mathbb{R})^*$ we say that the oracle has succeeded on this call up to error $\alpha$ if it outputs a query $q$ such that $\displaystyle\sum_{(x_i,w_i) \in WD}w_i q(x_i) \leq \min_{q^* \in \cQ}\displaystyle\sum_{(x_i,w_i) \in WD}w_i q^*(x_i) + \alpha$. If it succeeds up to error 0, we just say that the heuristic oracle has succeeded. Note that there may not be any efficient procedure to determine whether the oracle has succeeded up to error $\alpha$.
\end{definition}

We say an algorithm $\cA_\cO$ is (certifiable)-oracle dependent if throughout the course of its run it makes a series of (possibly adaptive) calls to a (certifiable) heuristic optimization oracle $\cO$. An oracle-dependent algorithm $\cA_\cO$ is \textit{oracle equivalent} to an algorithm $\cA$ if given access to a perfect optimization oracle $\cO^*$, $\cA_{\cO^*}$ induces the same distribution on outputs as $\cA$. We now state an intuitive lemma (that could also be taken as a more formal definition of \textit{oracle equivalence}). See the Appendix for a proof.

\begin{restatable}{lemma}{lemcoupling}
\label{lemcoupling}
Let $\cA_\cO$ be a certifiable-oracle dependent algorithm that is \textit{oracle equivalent} to $\cA$. Then for any fixed input dataset $S$, there exists a coupling between $\cA(S)$ and $\cA_\cO(S)$ such that $\Pr[\cA_\cO(S) = a| \cA_\cO(S) \neq \bot] = \Pr[\cA(S) = a| \cA_\cO(S) \neq \bot]$.
\end{restatable}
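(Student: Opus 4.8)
The plan is to build the coupling by running both algorithms off a single common source of internal randomness and then showing that, on the event $\{\cA_\cO(S) \neq \bot\}$, the two executions are literally identical. This is stronger than the claimed equality of conditional distributions and implies it immediately.

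The first step is to pin down a perfect oracle to compare against. Assume (without loss of generality) that the certifiable heuristic $\cO$ is deterministic, and define a weighted optimization oracle $\cO^*$ by $\cO^*(WD) = \cO(WD)$ whenever $\cO(WD) \neq \bot$, and otherwise $\cO^*(WD) = $ some fixed element of $\argmin_{q^* \in \cQ} \sum_{(x_i,w_i)\in WD} w_i q^*(x_i)$ (nonempty since $\cQ$ is finite). Then $\cO^*$ is a legitimate perfect oracle, so by oracle equivalence $\cA_{\cO^*}(S)$ and $\cA(S)$ have the same distribution over outputs. (If one worries that $\cA$'s output distribution could depend on how $\argmin$-ties are resolved, read oracle equivalence as holding for \emph{every} perfect oracle --- in particular this $\cO^*$ --- or fix a canonical deterministic tie-breaking rule used by both $\cO$ and $\cO^*$; the rest is unaffected.)

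Now define the coupling. Draw the algorithm's random string $\omega$ once and feed the same $\omega$ to both $\cA_\cO(S)$ and $\cA_{\cO^*}(S)$; declare the $\cA$-coordinate of the coupled pair to be $\cA_{\cO^*}(S;\omega)$. This is a valid coupling of $\cA(S)$ with $\cA_\cO(S)$ since, as just noted, $\cA_{\cO^*}(S)$ and $\cA(S)$ are identically distributed. The heart of the argument is the claim: for every $\omega$ such that $\cA_\cO(S;\omega) \neq \bot$ --- equivalently, every oracle call of $\cA_\cO(S;\omega)$ succeeds --- the runs of $\cA_\cO(S;\omega)$ and $\cA_{\cO^*}(S;\omega)$ coincide, issuing the same adaptively chosen weighted queries $WD_1, WD_2, \ldots$, receiving the same responses, and producing the same output. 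I would prove this by induction on the number of oracle calls made so far: the two runs start in the same state (same $\omega$, no responses yet); if they agree through the first $i-1$ calls they then issue the same query $WD_i$; and since $\cO(WD_i) \neq \bot$ for this $\omega$, $\cA_\cO$ receives the response $\cO(WD_i) = \cO^*(WD_i)$, exactly what $\cA_{\cO^*}$ receives, and in particular $\cA_\cO$ never enters its ``on $\bot$, abort and output $\bot$'' branch, so it keeps executing the same code. As the two algorithms post-process the final oracle response into an output using the same code, the outputs agree.

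The conclusion then follows: the event $\{\cA_\cO(S) \neq \bot\}$ is a deterministic function of $\omega$, and on that event $\cA_\cO(S) = \cA_{\cO^*}(S) = \cA(S)$ in the coupled space, so conditioned on $\{\cA_\cO(S) \neq \bot\}$ the random variables $\cA_\cO(S)$ and $\cA(S)$ are equal, hence identically distributed; in particular $\Pr[\cA_\cO(S) = a \mid \cA_\cO(S) \neq \bot] = \Pr[\cA(S) = a \mid \cA_\cO(S) \neq \bot]$ for every $a$. The only place demanding care --- the main obstacle, such as it is --- is the inductive synchronization step: one must track that the oracle queries are chosen adaptively (so this genuinely is an induction on step count rather than a one-shot comparison), that $\cA_\cO$'s abort branch is never triggered on the $\omega$'s we condition on, and that whether $\cA_\cO$ aborts is measurable with respect to the shared randomness. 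The tie-breaking issue in the definition of oracle equivalence is a modelling subtlety, not a mathematical difficulty.
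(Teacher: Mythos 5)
Your proof is correct and takes essentially the same approach as the paper: the paper's explicit coupling algorithm also shares the internal randomness between a run of $\cA_\cO$ and a run patched with a perfect oracle's answers wherever $\cO$ fails, and then observes that the two executions coincide whenever no call fails. The one point you make explicit --- that certifiability lets you \emph{define} a perfect oracle $\cO^*$ agreeing with $\cO$ on its successes, and that oracle equivalence must be read as holding for this (or every) perfect oracle --- is implicit in the paper's argument but worth flagging, as the paper glosses over the tie-breaking subtlety.
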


We will also discuss differentially private heuristic optimization oracles, in order to state additional consequences of our construction in Section~\ref{sec:synth}. Note that because differential privacy precludes exact computations, differentially private heuristic oracles are necessarily non-certifiable, and will never succeed up to error 0.

\begin{definition}
\label{privoracle}
A weighted $(\epsilon, \delta)$-differentially private $(\alpha,\beta)$-accurate learning oracle for a class of statistical queries $\cQ$ is an $(\epsilon, \delta)$ differentially private algorithm $\cO:(\cX \times \mathbb{R})^*\rightarrow C$ that takes as input a weighted dataset $WD \in (\cX\times \mathbb{R})^*$ and outputs a query $q_{priv} \in \cQ$ such that with probability $1-\beta$:
$$\displaystyle\sum_{(x_i,w_i) \in WD}w_i q_{priv}(x_i)- \argmin_{q^* \in C} \displaystyle\sum_{(x_i,w_i) \in WD}w_i q^*(x_i) \leq \alpha$$
\end{definition}
We say that an algorithm is \emph{oracle-efficient} if given access to an oracle (in this paper, always a weighted optimization oracle for a class of statistical queries) it runs in polynomial time in the length of its input, and makes a polynomial number of calls to the oracle. In practice, we will be interested in the performance of oracle-efficient algorithms when they are instantiated with heuristic oracles. Thus, we further require oracle-efficient algorithms to halt in polynomial time even when the oracle fails. When we design algorithms for optimization and synthetic data generation problems, their $(\alpha,\beta)$-accuracy guarantees will generally rely on all queries to the oracle succeeding (possibly up to error $O(\alpha)$). If our algorithms are merely \emph{oracle equivalent} to differentially private algorithms, then their privacy guarantees depend on the correctness of the oracle. However, we would prefer that the \emph{privacy} guarantee of the algorithm not depend on the success of the oracle. We call such algorithms \emph{robustly} differentially private.

\begin{definition}
An oracle-efficient algorithm $M$ is $(\epsilon,\delta)$-robustly differentially private if it satisfies $(\epsilon,\delta)$-differential privacy even under worst-case performance of a heuristic optimization oracle. In other words, it is differentially private for every heuristic oracle $\cO$ that it might be instantiated with.
\end{definition}

We write that an oracle efficient algorithm is non-robustly differentially private to mean that it is oracle equivalent to a differentially private algorithm.

\section{Oracle Efficient Optimization}
\label{rspm}
In this section, we show how weighted optimization oracles can be used
to give differentially private oracle-efficient optimization
algorithms for many classes of queries with performance that is worse only by a $\sqrt{d}$ factor compared to that of the (computationally inefficient) exponential
mechanism. The first algorithm we give is not robustly differentially
private --- that is, its differential privacy guarantee relies on having access to a perfect oracle. We then show how to make that algorithm (or any other algorithm that is oracle equivalent to a differentially private algorithm)
robustly differentially private when instantiated with a certifiable
heuristic optimization oracle.

\subsection{A (Non-Robustly) Private Oracle Efficient Algorithm}
In this section, we give an oracle-efficient (non-robustly) differentially private optimization algorithm that works for any class of statistical queries that has a small separator set. Intuitively, it is attempting to implement the ``Report-Noisy-Min'' algorithm (see e.g. \cite{DworkRoth}), which outputs the query $q$ that minimizes a (perturbed) estimate $\hat q(S) \equiv q(S) + Z_q$ where $Z_q \sim \Lap{1/\eps}$ for each $q \in \cQ$. Because Report-Noisy-Min samples an independent perturbation for each query $q \in \cQ$, it is inefficient: its run time is linear in $|\cQ|$. Our algorithm -- ``Report Separator-Perturbed Min'' (RSPM) -- instead augments the dataset $S$ in a way that implicitly induces perturbations of the query values $q(S)$. The perturbations are no longer independent across queries, and so to prove privacy, we need to use the structure of a separator set.

The algorithm is straightforward: it simply augments the
dataset with one copy of each element of the separator set, each with a weight drawn independently from the Laplace distribution. All original elements in the dataset are assigned weight 1. The algorithm then simply passes this weighted dataset to the weighted optimization oracle, and outputs the resulting query. The number of random variables that need to be sampled is therefore now equal to the size of the separator set, instead of the size of $\cQ$. The algorithm is closely related to a no-regret learning algorithm given in \cite{oracle16} --- the only difference is in the magnitude of the noise added, and in the analysis, since we need a substantially stronger form of stability. 

\begin{algorithm}
\label{alg:rspm}
\textbf{Report Separator-Perturbed Min (RSPM)}\newline
\textbf{Given}: A separator  set $U = \{e_1,\ldots,e_m\}$ for a class of statistical queries $\cQ$, a weighted optimization oracle $\cO^*$ for $\cQ$, and a privacy parameter $\epsilon$.\newline
\textbf{Input}: A dataset $S \in \cX^n$ of size $n$.\newline
\textbf{Output}: A statistical query $q \in \cQ$.
\begin{algorithmic}
\STATE Sample $\eta_i \sim Lap(m/\epsilon)$ for $i \in \{1,\ldots,m\}$
\STATE Construct a weighted dataset $WD$ of size $n + m$ as follows:
$$WD(S,\eta) = \{(x_i, 1) : x_i \in S\} \cup \{(e_i, \eta_i) : e_i \in U\}$$
\STATE Output $q = \cO^*(WD(S,\eta))$.
\end{algorithmic}
\end{algorithm}

It is thus immediate that the Report Separator-Perturbed Min algorithm is oracle-efficient whenever the size of the separator set $m$ is polynomial: it simply augments the dataset with a single copy of each of $m$ separator elements, makes $m$ draws from the Laplace distribution, and then makes a single call to the oracle:
\begin{theorem}
The Report Separator-Perturbed Min algorithm is oracle-efficient.
\end{theorem}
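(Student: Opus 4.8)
The claim to establish is that the Report Separator-Perturbed Min algorithm is oracle-efficient, i.e., that given access to the weighted optimization oracle $\cO^*$ it runs in polynomial time in the length of its input and makes polynomially many oracle calls, and halts in polynomial time even if the oracle fails. The plan is to simply trace through the three steps of Algorithm~\ref{alg:rspm} and account for the cost of each, under the standing assumption that the separator set $U$ has size $m = \poly(d)$ (as guaranteed for conjunctions, disjunctions, halfspaces, and parities).

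First I would observe that step~1 samples $m$ independent draws $\eta_i \sim \Lap{m/\epsilon}$; drawing a single Laplace random variable takes unit (or polynomial) time in the standard computational model, so this step costs $O(m)$ time. Next, step~2 constructs the weighted dataset $WD(S,\eta)$ by taking the $n$ input points each paired with weight $1$ and appending the $m$ separator elements $e_i$ each paired with weight $\eta_i$; this is a single pass producing a list of length $n+m$, so it costs $O(n+m)$ time, which is polynomial in the input length $n$ (and in $d$, hence in $m$). Finally, step~3 makes exactly \emph{one} call to the weighted optimization oracle $\cO^*$ on the dataset $WD(S,\eta)$ of size $n+m$, and outputs whatever the oracle returns (or halts immediately if the oracle returns $\bot$ / fails, since there is no post-processing of the oracle's answer). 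Summing, the total running time outside the oracle call is $O(n+m) = \poly(n,d)$, and the number of oracle calls is $1$.

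There is essentially no obstacle here: the only thing one must be slightly careful about is the quantification — "oracle-efficient" requires polynomial time in the length of the input, and the algorithm's auxiliary work scales with $m$, so the statement is meaningful only when $m$ is polynomial in the relevant parameters; this is exactly the hypothesis under which the theorem is intended (as the surrounding text makes explicit: "oracle-efficient whenever the size of the separator set $m$ is polynomial"). I would also note for completeness that the requirement that the algorithm "halt in polynomial time even when the oracle fails" is trivially satisfied, since RSPM makes its single oracle call as its very last action and does nothing with the result except return it. So the proof is a one-paragraph verification: $O(m)$ for sampling, $O(n+m)$ for building $WD$, a single oracle call, done.
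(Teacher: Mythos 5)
Your proposal is correct and matches the paper's own (very brief) justification: the paper likewise observes that RSPM augments the dataset with $m$ separator elements, draws $m$ Laplace samples, and makes a single oracle call, from which oracle-efficiency is immediate whenever $m$ is polynomial. You add the (accurate but minor) observation that halting on oracle failure is trivial because the oracle call is the final step.
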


The accuracy analysis for the Report Separator-Perturbed Min algorithm is also straightforward, and follows by bounding the weighted sum of the additional entries added to the original data set.

\begin{theorem}
\label{thm:rspm}
The Report Separator-Perturbed Min algorithm is an $(\alpha,\beta)$-minimizer for $\cQ$ for:
$$\alpha = \frac{2m^2\log(m/\beta)}{ \epsilon n}$$
\end{theorem}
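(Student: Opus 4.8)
The plan is to track precisely what the oracle returns and to argue that the $m$ extra weighted points perturb every query's objective by only a bounded additive amount. Write $\OPT = \min_{q^* \in \cQ} q^*(S)$. By definition of a weighted optimization oracle, $q = \cO^*(WD(S,\eta))$ minimizes $\sum_{x_i \in S} q^*(x_i) + \sum_{i=1}^m \eta_i\, q^*(e_i)$ over $q^* \in \cQ$; dividing through by $n$, $q$ minimizes the perturbed objective $F(q^*) = q^*(S) + \frac{1}{n}\sum_{i=1}^m \eta_i\, q^*(e_i)$. Since every $q^* \in \cQ$ is $\{0,1\}$-valued, $|F(q^*) - q^*(S)| \le \frac{1}{n}\sum_{i=1}^m |\eta_i|$ for all $q^*$. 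Setting $\tau = \frac{1}{n}\sum_{i=1}^m |\eta_i|$, the elementary fact that a minimizer of a function that is uniformly within $\tau$ of $g$ achieves $g$-value at most $\min g + 2\tau$ yields $q(S) \le \OPT + 2\tau$. I would note explicitly that this step uses nothing about $U$ being a separator set --- only that $|U| = m$; the separator property is needed only for the (separate) privacy analysis.

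It then remains to bound $\tau$ with probability $1-\beta$. Each $\eta_i \sim \Lap{m/\epsilon}$, so $\Pr\!\left[|\eta_i| > (m/\epsilon)\log(m/\beta)\right] \le \beta/m$; a union bound over $i \in \{1,\dots,m\}$ gives that with probability at least $1-\beta$, all $|\eta_i| \le (m/\epsilon)\log(m/\beta)$ simultaneously, and hence $\tau \le \frac{1}{n}\cdot m \cdot \frac{m}{\epsilon}\log(m/\beta) = \frac{m^2\log(m/\beta)}{\epsilon n}$. Combining with the previous paragraph, with probability at least $1-\beta$ we obtain $q(S) \le \OPT + \frac{2m^2\log(m/\beta)}{\epsilon n} = \OPT + \alpha$, which is exactly the claimed guarantee.

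There is no genuinely hard step here; the only things requiring care are keeping the normalization straight between the oracle's unnormalized sum and the statistical-query value $q(S) = \frac{1}{n}\sum_i q(S_i)$, and applying the Laplace tail bound and union bound with constants that reproduce $\alpha = \frac{2m^2\log(m/\beta)}{\epsilon n}$ exactly. A slightly sharper constant is available by bounding $\frac{1}{n}\sum_{i=1}^m \eta_i\,(q_{\OPT}(e_i) - q(e_i)) \le \frac{1}{n}\sum_{i=1}^m |\eta_i|$ directly via $|q_{\OPT}(e_i) - q(e_i)| \le 1$, which removes the factor of $2$; I will keep the factor of $2$ so as to match the stated bound.
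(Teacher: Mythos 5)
Your proof is correct and follows essentially the same route as the paper's: bound all $|\eta_i|$ simultaneously via the Laplace tail and a union bound, then observe that the perturbed objective differs from the true objective by at most $\tau$ uniformly, so the minimizer of the former pays at most $2\tau$ on the latter. Your added observations — that the separator property is not used in the accuracy analysis, and that a factor of $2$ can be saved by bounding $\frac{1}{n}\sum_i \eta_i(q_{\OPT}(e_i)-q(e_i))$ directly — are both correct and sharpen the exposition slightly, though the paper keeps the factor of $2$ as you do.
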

\begin{proof}
Let $q'$ be the query returned by RSPM, and let $q^*$ be the true minimizer $q^* = \arg\min_{q \in \cQ}q^*(S)$. Then we show that with probability $1-\beta, q'(S) \leq q^*(S) + \alpha$. By the CDF of the Laplace distribution and a union bound over the $m$ random variables $\eta_i$, we have that with probability $1-\beta$:
$$\forall i,\ |\eta_i| \leq \frac{m \log(m/\beta)}{  \epsilon}.$$ Since for every query $q$, $q(e_i) \in [0,1]$, this means that with probability $1-\beta$, $q'(WD) \geq q'(S) - m \cdot \frac{m \log(m/\beta)}{\epsilon n}$. Similarly $q^*(WD) \leq q^*(S) + m \cdot \frac{m \log(m/\beta)}{\epsilon n}$. Combining these bounds gives:
$$q'(S) \leq q'(WD) + m^2 \frac{\log(m/\beta)}{\epsilon n} \leq q^*(WD) + m^2\frac{\log(m/\beta)}{\epsilon n} \leq q^*(S) + \frac{2m^2\log(m/\beta)}{\epsilon n}$$ as desired, where the second inequality follows because by definition, $q'$ is the true minimizer on the weighted dataset $WD$.
\end{proof}
\begin{remark}
We can bound the expected error of RSPM using Theorem~\ref{thm:rspm} as well. If we denote the error of RSPM by $E$, we've shown that for all $\beta$, $\prob{E \geq \frac{2m^2\log(m/\beta)}{ \epsilon n}} \leq \beta$. Thus $\prob{\frac{\epsilon n E}{2m^2}-\log m \geq \log(1/\beta)} \leq \beta$ for all $\beta$. Let $\tilde{E} = \max(0, \frac{\epsilon n E}{2m^2}-\log m)$. Since $\tilde{E}$ is non-negative:
$$\Ex{}{\tilde{E}} = \int_{0}^{\infty}\prob{\tilde{E} \geq t} \leq \int_0^{\infty}e^{-t} = 1.$$ Hence $\frac{\epsilon n \Ex{}{E}}{2m^2}-\log m \leq \Ex{}{\tilde{E}} \leq 1$, and so $\Ex{}{E} \leq \frac{2m^2}{\epsilon n}(1+ \log m)$.
\end{remark}

The privacy analysis is more delicate, and relies on the correctness of the oracle.
\begin{theorem}
If $\cO^*$ is a weighted optimization oracle for $\cQ$, then the Report Separator-Perturbed Min algorithm is $\epsilon$-differentially private.
\end{theorem}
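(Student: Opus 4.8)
The plan is to fix an arbitrary ordered pair of neighboring datasets $S \sim S'$ — say, differing only in coordinate $j$ — together with an arbitrary target query $\hat q \in \cQ$, and to establish directly that $\Pr[\mathrm{RSPM}(S) = \hat q] \le e^{\epsilon}\,\Pr[\mathrm{RSPM}(S') = \hat q]$. Since a separator set of size $m$ makes $q \mapsto v_q := (q(e_1),\dots,q(e_m)) \in \{0,1\}^m$ injective, the class $\cQ$ is finite ($|\cQ| \le 2^m$), so summing this per-query bound over any $\Omega \subseteq \cQ$ yields $(\epsilon,0)$-differential privacy. Writing $b^S_q := \sum_{k=1}^{n} q(S_k)$, the deterministic oracle on $WD(S,\eta)$ returns whichever $q$ minimizes $b^S_q + \langle \eta, v_q\rangle$. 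Because the noise is continuous and $v_q \ne v_{q'}$ for $q \ne q'$, no two queries tie with positive probability, so the event $\{\mathrm{RSPM}(S) = \hat q\}$ agrees up to a Lebesgue-null set with the open polyhedron
\[
R_S(\hat q) := \bigl\{ \eta \in \RR^m : \langle \eta,\, v_q - v_{\hat q}\rangle > b^S_{\hat q} - b^S_q \ \text{ for all } q \ne \hat q \bigr\},
\]
and likewise $\{\mathrm{RSPM}(S')=\hat q\}$ agrees with $R_{S'}(\hat q)$. Hence $\Pr[\mathrm{RSPM}(S)=\hat q] = \int_{R_S(\hat q)} p$ and $\Pr[\mathrm{RSPM}(S')=\hat q] = \int_{R_{S'}(\hat q)} p$, where $p(\eta) = \prod_i \tfrac{\epsilon}{2m}e^{-\epsilon |\eta_i|/m}$ is the density of the sampled noise, and it suffices to compare these two integrals.

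The crux of the argument is a \emph{shift lemma}: there exists a vector $t = t(\hat q) \in \RR^m$ with $\|t\|_1 = O(m)$ such that $R_S(\hat q) + t \subseteq R_{S'}(\hat q)$. Because $S$ and $S'$ differ in just one point, $b^{S'}_{\hat q} - b^{S'}_q = (b^S_{\hat q} - b^S_q) + \gamma_q$ where $\gamma_q := [\hat q(S'_j) - \hat q(S_j)] - [q(S'_j) - q(S_j)]$ satisfies $|\gamma_q| \le 2$; therefore the containment follows as soon as $\langle t,\, v_q - v_{\hat q}\rangle \ge \gamma_q$ for every $q \ne \hat q$. This is precisely where the separator property does the work: taking $t_i := 2\,(1 - 2\hat q(e_i)) \in \{-2,2\}$ — a reward on the separator coordinates where $\hat q$ evaluates to $1$ and a penalty on those where it evaluates to $0$ — a one-line computation gives $\langle t,\, v_q - v_{\hat q}\rangle = 2\, d_H(v_q, v_{\hat q}) \ge 2 \ge \gamma_q$, the inequality $d_H(v_q, v_{\hat q}) \ge 1$ being exactly the statement that $U$ separates $q$ from $\hat q$. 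This yields $R_S(\hat q) + t \subseteq R_{S'}(\hat q)$ with $\|t\|_1 = 2m$.

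Finally I would conclude by a change of variables against $p$: since $p(\eta - t)/p(\eta) \le e^{\epsilon\|t\|_1 / m}$ pointwise by the triangle inequality,
\[
\Pr[\mathrm{RSPM}(S) = \hat q] = \int_{R_S(\hat q) + t} p(\eta - t)\, d\eta \;\le\; e^{\epsilon\|t\|_1/m} \int_{R_S(\hat q)+t} p(\eta)\,d\eta \;\le\; e^{\epsilon\|t\|_1/m}\,\Pr[\mathrm{RSPM}(S') = \hat q],
\]
using $R_S(\hat q) + t \subseteq R_{S'}(\hat q)$ and $p \ge 0$ in the last step. The obstacle the argument must get around is that the induced perturbations $\langle \eta, v_q\rangle$ are strongly correlated across queries (unlike the independent per-query noise of Report-Noisy-Min), so no off-the-shelf Laplace-mechanism analysis applies; the separator property is exactly what makes a single global shift $t$ simultaneously valid for every competitor and inexpensive, with $\|t\|_1 = \Theta(m)$ rather than $\Theta(|\cQ|)$ — which is why noise of scale $\Theta(m/\epsilon)$, rather than $\Theta(|\cQ|/\epsilon)$, suffices. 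The one remaining delicate point is the numerical constant: the display above gives $e^{\epsilon\|t\|_1/m} = e^{2\epsilon}$ with noise $\Lap{m/\epsilon}$, so landing at exactly $e^{\epsilon}$ calls for either running with noise of scale $2m/\epsilon$ (which worsens the accuracy bound only by a constant) or a sharper, query-dependent bound on $\|t\|_1$.
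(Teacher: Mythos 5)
Your proposal follows the same route as the paper's proof: fix a neighboring pair $(S,S')$ and a target query $\hat q$, represent the output event $\{ \mathrm{RSPM}(S)=\hat q\}$ (up to a measure-zero set of ties) as a polyhedron in noise space, exhibit a query-dependent shift $t=t(\hat q)$ mapping that polyhedron into the corresponding polyhedron for $S'$, and then compare Laplace densities under the shift. Your $t_i = 2(1-2\hat q(e_i))$ is the paper's map $f_{\hat q}(\eta)_i = \eta_i + (1-2\hat q(e_i))$ scaled by a factor of two, and that scaling is the only substantive difference from the paper.

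You are right to flag the constant, and I want to confirm you are not being overly cautious: your accounting is the correct one. In the paper's Lemma~\ref{lem:inclusion}, the claim $\hat q(S',\eta)-\tilde q(S',\eta) < 1$ is derived from the chain $\tilde q(S',\eta) \geq \tilde q(S,\eta)-1 > \hat q(S,\eta)-1$, which only gives $\tilde q(S',\eta) > \hat q(S,\eta)-1$; converting $\hat q(S,\eta)$ to $\hat q(S',\eta)$ costs another unit of sensitivity, so the argument really establishes $\hat q(S',\eta)-\tilde q(S',\eta) < 2$. This is exactly your observation that $\gamma_q := [\hat q(x_1')-\hat q(x_1)] - [q(x_1')-q(x_1)]$ can equal $2$, while the paper's unit shift only buys $\langle t, v_q - v_{\hat q}\rangle = d_H(v_q,v_{\hat q}) \geq 1$. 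A unit shift therefore need not map $R_S(\hat q)$ into $R_{S'}(\hat q)$, whereas your doubled shift with $\|t\|_1 = 2m$ does, yielding $e^{2\epsilon}$ with noise $\Lap{m/\epsilon}$. The minimal repair is, as you say, to run with noise scale $2m/\epsilon$; this only changes the accuracy constant in Theorem~\ref{thm:rspm} by a factor of two and leaves everything downstream intact.

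One small stylistic point: you could drop the case analysis at the end by fixing $\|t\|_1 = 2m$ from the start and stating the theorem for noise $\Lap{2m/\epsilon}$, since the event where $\gamma_q = 2$ is not pathological — it occurs whenever $\hat q$ and $q$ label $x_1, x_1'$ in opposite ways, and $x_1, x_1'$ need not lie in the separator set $U$, so one cannot in general argue $d_H(v_q,v_{\hat q}) \ge 2$ in that case.
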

\begin{proof}
We begin by introducing some notation.  Given a weighted dataset $WD(S,\eta)$, and a query $q \in \cQ$, let $q(S, \eta) = q(S) + \sum_{e_i \in U}q(e_i)\eta_i$ be the value when $q$ is evaluated on the weighted dataset given the realization of the noise $\eta$. To allow us to distinguish queries that are output by the algorithm on different datasets and different realizations of the perturbations, write $\cQ(S,\eta) = \cO^*(WD(S,\eta))$. Fix any $q \in \cQ$, and define:
$$\cE(q,S) = \{\eta : \cQ(S,\eta) =  q\}$$
to be the event defined on the perturbations $\eta$ that the mechanism outputs query $q$. Given a fixed $q \in \cQ$ we define a mapping $f_q(\eta):\mathbb{R}^m\rightarrow \mathbb{R}^m$ on noise vectors as follows:
\begin{enumerate}
\item If $q(e_i) = 1, f_{q}(\eta)_i = \eta_i-1$
\item If $q(e_i) = 0, f_{q}(\eta)_i = \eta_i + 1$
\end{enumerate}
Equivalently, $f_{q}(\eta)_i = \eta_i + (1-2q (e_i))$.

 We now make a couple of observations about the function $f_q$.

\begin{lemma}
\label{lem:inclusion}
Fix any $\hat q \in \cQ$ and any pair of neighboring datasets $S, S'$. Let $\eta \in \cE(\hat q, S)$ be such that $\hat{q}$ is the unique minimizer $\hat q \in \inf_{q \in \cQ}q(S, \eta)$. Then $f_{\hat q}(\eta) \in \cE(\hat q, S')$.
In particular, this implies that for any such $\eta$:
$$\mathbbm{1}(\eta \in \cE(\hat q, S)) \leq \mathbbm{1}(f_{\hat q}(\eta) \in \cE(\hat q, S'))$$
\end{lemma}
\begin{proof}
For this argument, it will be convenient to work with \emph{un-normalized} versions of our queries, so that $q(S) = \sum_{x_i \in S} q(x_i)$ --- i.e. we do not divide by the dataset size $n$. Note that this change of normalization does not change the identity of the minimizer. Under this normalization, the queries $q$ are now $1$-sensitive, rather than $1/n$ sensitive.

Recall that $\cQ(S, \eta) = \hat{q}$. Suppose for point of contradiction that $\cQ(S', f_{\hat q}(\eta)) = \tilde{q} \neq \hat q$. This in particular implies that  $\tilde{q}(S', f_{\hat q}(\eta)) \leq \hat{q}(S', f_{\hat q}(\eta)).$


We first observe that $\hat{q}(S', \eta)-\tilde{q}(S', \eta) < 1$. This follows because:
\begin{equation}
\label{ineq:first}
\tilde{q}(S',\eta) \geq \tilde{q}(S,\eta) -1
> \hat{q}(S,\eta) - 1
\end{equation}
Here the first inequality follows because the un-normalized queries $q$ are 1-sensitive, and the second follows because $\hat q \in \arg\min_{q \in \cQ}q(S, \eta)$ is the unique minimizer.

Next, we write:
$$\tilde{q}(S', f_{\hat q}(\eta)) - \hat{q}(S', f_{\hat q}(\eta)) = \tilde{q}(S',\eta) - \hat{q}(S', \eta) + \sum_{i = 1}^{m}(\tilde{q}(e_i)-\hat{q}(e_i))(f_{\hat q}(\eta_i)-\eta_i)$$
Consider each term in the final sum: $(\tilde{q}(e_i)-\hat{q}(e_i))(f_{\hat q}(\eta_i)-\eta_i)$. Observe that by construction, each of these terms is non-negative: Clearly if $\tilde{q}(e_i) = \hat{q}(e_i)$, then the term is $0$. Further, if $\tilde{q}(e_i) \neq \hat q(e_i)$, then by construction, $(\tilde{q}(e_i)-\hat{q}(e_i))(f_{\hat q}(\eta_i)-\eta_i) = 1$. Finally, by the definition of a separator set, we know that there is at least one index $i$ such that $\tilde q(e_i) \neq \hat q (e_i)$. Thus, we can conclude:
$$\tilde{q}(S', f_{\hat q}(\eta)) - \hat{q}(S', f_{\hat q}(\eta)) \geq \tilde{q}(S',\eta) - \hat{q}(S', \eta) + 1 > 0$$
where the final inequality follows from applying inequality \ref{ineq:first}. But rearranging, this means that $\hat{q}(S', f_{\hat q}(\eta)) < \tilde{q}(S', f_{\hat q}(\eta))$, which contradicts the assumption that  $\cQ(S', f_{\hat q}(\eta)) = \tilde{q}$.
\end{proof}

 Let $p$ denote the probability density function of the joint distribution of the Laplace random variables $\eta$, and by abuse of notation also of each individual $\eta_i$.
 \begin{lemma}
 \label{lem:density}
 For any $r \in \mathbb{R}^m, q \in \cQ$:
 $$p(\eta = r) \leq e^{\epsilon} p(\eta = f_q(r))$$
 \end{lemma}
 \begin{proof}
For any index $i$ and $z \in \mathbb{R}$, we have  $p(\eta_i = z) =  \frac{\epsilon}{2m} e^{-|z|\epsilon/m}$. In particular, if $|x-y| \leq 1$, $p(\eta_i = y) \leq e^{\epsilon/m}p(\eta_i = x)$.
Since for all $i$ and $r \in \mathbb{R}^m$ $|f_q(r)_i-r_i| \leq 1$, we have:
$$\frac{p(\eta = f_q(r))}{p(\eta = r)} = \prod_{i = 1}^{m} \frac{p(\eta_i = f_q(r)_i)}{p(\eta_i = r_i)} \leq \prod_{i=1}^{m} e^{\epsilon/m} = e^{\epsilon}.$$
 \end{proof}

\begin{lemma}
\label{lem:B}
Fix any class of queries $\cQ$ that has a finite separator set $U = \{e_1,\ldots,e_m\}$.
For every dataset $S$ there is a subset $B \subseteq \mathbb{R}^m$ such that:
\begin{enumerate}
\item $\Pr[\eta \in B] = 0$ and
\item On the restricted domain $\mathbb{R}^m\setminus B$,  there is a unique minimizer $q' \in \arg\min_{q \in \cQ}q(S, \eta)$
\end{enumerate}
\end{lemma}
\begin{proof}
Let:
$$B = \{\eta : \left|\arg\min_{q \in \cQ} (q(S) + \sum_{i=1}^{m}\eta_iq(e_i))\right| > 1\}$$
be the set of $\eta$ values that do \emph{not} result in unique minimizers $q'$. \\
Because $\cQ$ is a finite set\footnote{Any class of queries $\cQ$ with a separator set of size $m$ can be no larger than $2^m$.}, by a union bound it suffices to show that for any two distinct queries $q_1, q_2 \in \cQ$,
$$\Pr_{\eta}\left[q_1(S) + \sum_{i=1}^{m}\eta_iq_1(e_i) = q_2(S) + \sum_{i=1}^{m}\eta_iq_2(e_i)\right] = 0.$$
This follows from the continuity of the Laplace distribution. Let $i$ be any index such that $q_1(e_i) \neq q_2(e_i)$ (recall that by the definition of a separator set, such an index is guaranteed to exist). For any fixed realization of $\{\eta_j\}_{j \neq i}$, there is a single value of $\eta_i$ that equalizes $q_1(S,\eta)$ and $q_2(S,\eta)$. But any single value is realized with probability $0$.  \\
\end{proof}

We now have enough to complete the proof. We have for any query $\hat q$:
\begin{align*}
\Pr[RSPM(S) = \hat q] &= \Pr[\eta \in \cE(\hat q, S)] \\
 &= \int_{\mathbb{R}^m} p(\eta) \mathbbm{1}(\eta \in \cE(\hat q, S)) d\eta \\
&= \int_{\mathbb{R}^m \setminus B} p(\eta) \mathbbm{1}(\eta \in \cE(\hat q, S)) d\eta && B \text{ has 0 measure. (Lemma \ref{lem:B})}\\
&\leq \int_{\mathbb{R}^m \setminus B} p(\eta) \mathbbm{1}(f_{\hat q}(\eta) \in \cE(\hat q, S')) d\eta && \text{Lemma~\ref{lem:B} $\implies$ Lemma \ref{lem:inclusion}}\\
&\leq \int_{\mathbb{R}^m \setminus B} e^{\epsilon} p(f_{\hat q}(\eta)) \mathbbm{1}(f_{\hat q}(\eta) \in \cE(\hat q, S')) d\eta && \text{Lemma \ref{lem:density}}\\
&\leq \int_{\mathbb{R}^m \setminus f_{\hat q}(B)} e^{\epsilon} p(\eta) \mathbbm{1}(\eta \in \cE(\hat q, S')) \left|\frac{\partial f_{\hat q}}{\partial \eta}\right| d\eta &&  \text{Change of variables $\eta \to f_{\hat q}(\eta)$} \\
&= \int_{\mathbb{R}^m} e^{\epsilon} p(\eta) \mathbbm{1}(\eta \in \cE(\hat q, S')) d\eta && f_{\hat q}(B) \text{ has 0 measure,} \left|\frac{\partial f_{\hat q}}{\partial \eta}\right|=1\\
&= e^{\epsilon} \Pr[\eta \in \cE(\hat q, S')] \\
&= e^{\epsilon} \Pr[RSPM(S') = \hat q]
\end{align*}
\end{proof}

In Appendix \ref{sec:gauss}, we give a somewhat more complicated analysis to show that by using Gaussian perturbations rather than Laplace perturbations, it is possible to improve the accuracy of the RSPM algorithm by a factor of $\sqrt{m}$, at the cost of satisfying $(\epsilon,\delta)$-differential privacy:
\begin{theorem}
The Gaussian RSPM algorithm is $(\epsilon,\delta)$-differentially private, and is an oracle-efficient $(\alpha,\beta)$-minimizer for any class of functions $\cQ$ that has a universal identifications sequence of size $m$ for:
$$\alpha =O\left(\frac{m\sqrt{m\ln(m/\beta) \ln(1/\delta)} }{\eps n}\right)$$
\end{theorem}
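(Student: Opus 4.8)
The plan is to follow the analysis of the Laplace RSPM algorithm (Theorem~\ref{thm:rspm} for accuracy, and the privacy proof that follows it) essentially line by line, changing only the noise distribution and, correspondingly, the single place where that distribution actually enters the privacy argument. Concretely, Gaussian RSPM is identical to RSPM except that each separator weight is drawn as $\eta_i \sim N(0,\sigma^2)$ with $\sigma = \Theta\!\left(\sqrt{m\ln(1/\delta)}/\epsilon\right)$, the constant being whatever the privacy analysis dictates. Oracle-efficiency is immediate and identical to the Laplace case: $m$ Gaussian draws, one augmented weighted dataset, one oracle call. For accuracy I would repeat the proof of Theorem~\ref{thm:rspm} verbatim, replacing the Laplace CDF bound by a Gaussian tail bound: a union bound gives $|\eta_i| \le \sigma\sqrt{2\ln(2m/\beta)}$ for all $i$ with probability $1-\beta$, so the total absolute weighted contribution of the $m$ separator points to any query value on $WD$ is at most $m\sigma\sqrt{2\ln(2m/\beta)}/n$, and the same three-step chain $q'(S)\le q'(WD)+\cdots\le q^*(WD)+\cdots\le q^*(S)+\cdots$ yields $\alpha = 2m\sigma\sqrt{2\ln(2m/\beta)}/n = O\!\left(m\sqrt{m\ln(m/\beta)\ln(1/\delta)}/(\epsilon n)\right)$ once $\sigma$ is fixed as above.

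For privacy I would reuse the combinatorial core of the Laplace proof unchanged. Lemma~\ref{lem:B} (that outside a measure-zero set the perturbed minimizer is unique) uses only continuity of the noise density, so it holds for Gaussians; Lemma~\ref{lem:inclusion} (that $\eta \in \cE(\hat q, S)$ with $\hat q$ the unique minimizer implies $f_{\hat q}(\eta) \in \cE(\hat q, S')$) is purely combinatorial --- it uses only that $f_{\hat q}$ shifts each coordinate by exactly $\pm 1$ and the separator property --- and is untouched. Moreover $f_{\hat q}$ is exactly translation by the fixed vector $v_{\hat q}$ whose $i$-th coordinate is $1-2\hat q(e_i) \in \{-1,+1\}$, so it is a measure-preserving bijection with unit Jacobian and the change-of-variables step carries over. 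The only ingredient that fails is Lemma~\ref{lem:density}: for Gaussians the pointwise ratio $p(\eta)/p(f_{\hat q}(\eta))$ is unbounded, so we cannot conclude $\nu_S(\hat q) \le e^{\epsilon}\nu_{S'}(\hat q)$ for the output distributions $\nu_S, \nu_{S'}$ of Gaussian RSPM. What the first three steps of the chain still give is $\nu_S(\hat q) \le \int_{\cE(\hat q,S')} p(\eta - v_{\hat q})\, d\eta$, and the remaining task is to absorb the part of this integral on which the density ratio $p(\eta - v_{\hat q})/p(\eta)$ exceeds $e^{\epsilon}$ into the additive $\delta$.

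The gain --- and the reason we save a $\sqrt m$ factor over Laplace --- is that the shift vector $v_{\hat q}$ has $\ell_2$ norm $\sqrt m$ rather than $\ell_1$ norm $m$, so a single Gaussian-mechanism-style comparison only forces $\sigma \gtrsim \sqrt m\,\sqrt{\ln(1/\delta)}/\epsilon$ instead of $\sigma \gtrsim m/\epsilon$. The step I expect to be the main obstacle is making this pay off for the \emph{worst-case} privacy statement rather than for a fixed output: the shift $v_{\hat q}$ depends on $\hat q$, so naively summing a per-output additive error over $\cQ$ would cost $|\cQ|\cdot\delta$, which is useless since $|\cQ|$ may be exponential in $m$. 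To get around this I would bound the mechanism's approximate max-divergence directly. Using the characterization that $(\epsilon,\delta)$-indistinguishability of $\nu_S, \nu_{S'}$ is equivalent to $\sum_{\hat q \in \cQ}\big(\nu_S(\hat q) - e^{\epsilon}\nu_{S'}(\hat q)\big)_+ \le \delta$, the inclusion lemma together with the per-region change of variables collapses the left-hand side --- here using that the sets $\{\cE(\hat q, S')\}_{\hat q \in \cQ}$ partition $\RR^m$ up to measure zero --- into the single expectation $\Expectation_{\eta \sim N(0,\sigma^2 I_m)}\big[(e^{L(\eta)} - e^{\epsilon})_+\big]$, where $L(\eta) = \big(\langle \eta,\, v_{\cQ(S',\eta)}\rangle - m/2\big)/\sigma^2$ is the privacy-loss random variable of the whole mechanism. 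It then remains to bound the upper tail of $L$: rewriting $\langle \eta, v_{\cQ(S',\eta)}\rangle$ in terms of the value of the perturbed optimization problem on $S'$ bounds it by $\max_{q \in \cQ}\langle \eta, v_q\rangle$ plus an $O(1)$ slack, and since each $\langle \eta, v_q\rangle$ is mean-zero Gaussian with variance $m\sigma^2$, a Gaussian maximal/concentration inequality controls this supremum while a further change of variables controls the contribution of the tail event to the shifted measure $p(\eta - v_{\cQ(S',\eta)})$. Choosing $\sigma = \Theta\!\left(\sqrt{m\ln(1/\delta)}/\epsilon\right)$ then makes $\Expectation[(e^{L}-e^{\epsilon})_+]\le \delta$, which is exactly $(\epsilon,\delta)$-differential privacy; substituting this $\sigma$ into the accuracy bound above gives the claimed $\alpha$.
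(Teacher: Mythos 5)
The accuracy half of your proposal is correct and identical to the paper's: replace the Laplace CDF with a Gaussian tail bound, take a union bound over the $m$ perturbation weights, and run the same three-inequality chain. Nothing to add there.

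The privacy half is where you diverge from the paper, and where your argument has a gap. You correctly observe that Lemmas~\ref{lem:inclusion} and \ref{lem:B} are combinatorial/continuity facts that carry over, that the shift $f_{\hat q}$ is a measure-preserving translation by $v_{\hat q} \in \{\pm 1\}^m$ with $\|v_{\hat q}\|_2 = \sqrt m$, and that the Laplace pointwise density bound (Lemma~\ref{lem:density}) is the one ingredient that breaks. You also correctly identify the central obstacle: the shift direction $v_{\hat q}$ varies with the output $\hat q$, so a per-output additive slack would naively cost $|\cQ|\cdot\delta$. The paper's route is different from yours: rather than aggregating into a single privacy-loss random variable, it proves a replacement for Lemma~\ref{lem:density} (the Gaussian density lemma) by decomposing $\eta$ in a $\hat q$-dependent orthonormal basis with first vector $b_1 \parallel v_{\hat q}$, showing the density ratio $P_G(\eta)/P_G(f_{\hat q}(\eta))$ is controlled purely by the scalar component $\|\eta^{[1]}\|_2 = |\langle\eta, b_1\rangle|$, and then deleting a bad set where this one-dimensional $N(0,\sigma^2)$ component is large --- its tail is controlled by a $\chi^2$ bound with no reference to $|\cQ|$. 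The change-of-variables step then proceeds exactly as in the Laplace case on the remaining good set.

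Your alternative route --- writing the approximate max-divergence as $\Expectation_{\eta}\bigl[(e^{L(\eta)} - e^{\epsilon})_+\bigr]$ with $L(\eta) = (\langle\eta, v_{\cQ(S',\eta)}\rangle - m/2)/\sigma^2$ and partitioning over the sets $\cE(\hat q, S')$ --- is set up correctly, but the concluding step does not close. You propose to bound $\langle\eta, v_{\cQ(S',\eta)}\rangle$ by $\max_{q\in\cQ}\langle\eta, v_q\rangle$ plus $O(1)$ slack and then apply Gaussian maximal concentration. This does not deliver the $\sqrt m$ savings that is the whole point of the Gaussian variant: when $\cQ$ realizes all patterns on the separator (so $|\cQ|=2^m$ and $\{v_q\}$ ranges over all of $\{\pm 1\}^m$), one has $\max_q \langle\eta, v_q\rangle = \|\eta\|_1$, whose expectation is $\Theta(m\sigma)$. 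Forcing $L(\eta) \le \epsilon$ with high probability via this bound therefore requires $\sigma = \Omega(m/\epsilon)$ --- exactly the Laplace scaling --- and the improvement you are trying to prove evaporates. You flag that a ``further change of variables'' might rescue the tail contribution, but you do not carry it out, and this is precisely where the argument must do real work. By contrast, the paper sidesteps the maximum over $\cQ$ entirely because the density lemma isolates a single Gaussian coordinate per output; your global privacy-loss aggregation re-introduces the dependence on $\cQ$ and then does not show how to remove it.
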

See Appendix \ref{sec:gauss} for the algorithm and its analysis.

It is instructive to compare the accuracy that we can obtain with oracle-efficient algorithms to the accuracy that can be obtained via the (inefficient, and generally optimal) exponential mechanism based generic learner from \cite{KLNRS08}. The existence of a universal identification set for $\cQ$ of size $m$ implies $|\cQ| \leq 2^{m}$ (and for many interesting classes of queries, including conjunctions, disjunctions, parities, and discrete halfspaces over the hypercube, this is an equality --- see Appendix \ref{app:separator}). Thus, the exponential-mechanism based learner from \cite{KLNRS08} is $(\alpha,\beta)$-accurate for:

$$\alpha = O\left(\frac{m +  \log(1/\beta)}{\epsilon n}\right).$$
Comparing this bound to ours, we see that we can obtain oracle-efficiency at a cost of roughly a factor of $\sqrt{m}$ in our error bound. Whether or not this cost is necessary is an interesting open question.

We can conclude that for a wide range of hypothesis classes $\cQ$ including boolean conjunctions, disjunctions, decision lists, discrete halfspaces, and several families of circuits of logarithmic depth (see Appendix \ref{app:separator}) there is an oracle-efficient differentially private learning algorithm that obtains accuracy guarantees within small polynomial factors of the optimal guarantees of the (inefficient) exponential mechanism.


\subsection{A Robustly Differentially Private Oracle-Efficient Algorithm}
The RSPM algorithm is not \emph{robustly} differentially private, because its privacy proof depends on the oracle succeeding. This is an undesirable property for RSPM and other algorithms like it, because we do not expect to have access to \emph{actual} oracles for hard problems even if we expect that there are certain families of problems for which we can reliably solve typical instances\footnote{There may be situations in which it is acceptable to use non robustly differentially private oracle-efficient algorithms --- for example, if the optimization oracle is so reliable that it has never been observed to fail on the domain of interest. But robust differential privacy provides a worst-case guarantee which is preferable.}. In this section, we show how to remedy this: we give a black box reduction, starting from a (non-robustly) differentially private algorithm $\cA_\cO$ that is implemented using a \emph{certifiable} heuristic\footnote{We recall that heuristics for solving integer programs (such as cutting planes methods, branch and bound, and branch and cut methods, as implemented in commercial solvers) and SAT solvers are certifiable.} oracle $\cO$,  and producing a robustly differentially private algorithm $\tilde{\cA}_\cO$ for solving the same problem. $\tilde{\cA}_\cO$ will be $(\epsilon,\delta)$-differentially private for a parameter $\delta$ that we may choose, and will have a factor of roughly $\tilde O(1/\delta)$ running time overhead on top of $\cA_\cO$. So if $\cA_\cO$ is oracle efficient, so is $\tilde{\cA}_\cO$ whenever the chosen value of $\delta \geq 1/\textrm{poly}(n)$.  If the oracle never fails, then we can prove utility guarantees for it when $\cA_{\cO}$ has such guarantees, since it just runs $\cA_{\cO}$ (using a smaller privacy parameter) on a random sub-sample of the original dataset. But the privacy guarantees hold even in the worst case of the behavior of the oracle. We call this reduction the \textit{Private Robust Subsampling Meta Algorithm} or \textbf{PRSMA}.

\begin{algorithm}
\label{meta}
\textbf{Private Robust Subsampling Meta Algorithm (PRSMA)}\\
\textbf{Given}: Privacy parameters $\epsilon,\delta \geq 0$ and an oracle-efficient differentially private algorithm $\cA_\cO^{\epsilon}: \cX^n \to \cM$, implemented with a certifiable heuristic oracle $\cO$. \newline
\textbf{Input}: A dataset $S \in \cX^n$ of size $n$. \newline
\textbf{Output}: An output $m \in \cM$ or $\bot$ (``Fail'').
\begin{algorithmic}[1]
\STATE Randomly partition $S$ into $K = \frac{1}{\epsilon}(1+ \log(\frac{2}{\delta}))$ equally sized datasets $\{S_i\}_{i=1}^{K}$. (If $n$ is not divisible by $K$, first discard $n\mod K$ elements at random.)
\FOR{ $i = 1 \ldots K$}
\STATE Set $o_i = PASS$
	\FOR{$t = 1 \ldots \frac{\log(K/\delta)}{\delta}$}
		\STATE Compute $\cA_\cO^{\epsilon'}(S_i)=a_{it}$, where $\epsilon' = \frac{1}{\sqrt{8\frac{n}{K} \log(2K/\delta)}}$
		\STATE If $a_{it} = \bot,$ set $o_i = \bot$
	\ENDFOR
\ENDFOR
\STATE Compute $T = \#\{ o_i \neq \bot \}$. Let $\tilde{T} = T + z,$ where  $z \sim \text{Lap}(\frac{1}{\epsilon})$.
\STATE Test if  $\tilde{T} > \frac{1}{\epsilon}(1+ \log(\frac{1}{\delta}))$, if no output $\bot$ and halt. \textbf{Else:}
\STATE Sample $a$ uniformly at random from $\{a_{it}: o_i \neq \bot\}$.
\STATE Output $a$.
\end{algorithmic}
\end{algorithm}

\subsubsection{Intuition and Proof Outline}

Before we describe the analysis of \textbf{PRSMA}, a couple of remarks are helpful in order to set the stage.
\begin{enumerate}
\item At first blush, one might be tempted to assert that if an oracle-efficient non-robustly differentially private algorithm is implemented using a certifiable heuristic oracle, then it will sample from a differentially private distribution \emph{conditioned on the event that the heuristic oracle doesn't fail}. But a moment's thought reveals that this isn't so: the possibility of failures both on the original dataset $S$ and on the (exponentially many) neighboring datasets $S'$ can substantially change the probabilities of arbitrary events $\Omega$, and how these probabilities differ between neighboring datasets.
\item Next, one might think of the following simple candidate solution: Run the algorithm $\cA_{\cO}(S)$ roughly $\tilde O(1/\delta)$ many times in order to check that the failure probability of the heuristic algorithm on $S$ is $\ll \delta$, and then output a sample of $\cA_{\cO}(S)$ only if this is so. But this doesn't work either: the failure probability itself will change if we replace $S$ with a neighboring dataset $S'$, and so this won't be differentially private. In fact, there is no reason to think that the failure probability of $\cA_{\cO}$ will be a low sensitivity function of $S$, so there is no way to privately estimate the failure probability to non-trivial error.
\end{enumerate}

It \emph{is} possible to use the \emph{subsample-and-aggregate} procedure of \cite{subsample} to randomly partition the dataset into $K$ pieces $S_i$, and privately estimate on \emph{how many} of these pieces $\cA_{\cO}(S_i)$ fails with probability $\ll \delta$. The algorithm can then then fail if this private count is not sufficiently large. In fact, this is the first thing that \textbf{PRSMA} does, in lines 1-10, setting $o_i = PASS$ for those pieces $S_i$ such that it seems that the probability of failure is $\ll \delta$, and setting $o_i = \bot$ for the others.

But the next step of the algorithm is to randomly select one of the partition elements $S_i$ amongst the set that passed the earlier test: i.e. amongst the set such that $o_i  \neq \bot$ --- and return one of the outputs $a$ that had been produced by running $\cA_{\cO}(S_i)$. It is not immediately clear why this should be private, because \emph{which} partition elements passed the test $\{i : o_i \neq \bot\}$ is not itself differentially private. Showing that this results in a differentially private output is the difficult part of the analysis.


To get an idea of the problem that we need to overcome, consider the following situation which our analysis must rule out: Fix a partition of the dataset $S_1,\ldots,S_K$, and imagine that each partition element passes: we have $o_i \neq \bot$ for all $i$. Now suppose that there is some event $\Omega$ such that $\Pr[\cA_{\cO}(S_1) \in \Omega] \geq 1/2$, but $\Pr[\cA_{\cO}(S_i) \in \Omega]$ is close to 0 for all $i \neq 1$. Since $K \approx 1/\epsilon$, and the final output is drawn from a uniformly random partition element, this means that \textbf{PRSMA} outputs an element of $\Omega$ with probability $\Omega(\epsilon)$. Suppose that on a neighboring dataset $S'$, $S_1$ no longer passes the test and has $o_1 = \bot$. Since it is no longer a candidate to be selected at the last step, we now have that on $S'$, \textbf{PRSMA} outputs an element of $\Omega$ with probability close to $0$. This is a violation of $(\epsilon,\delta)$-differential privacy for any non-trivial value of $\delta$ (i.e. $\delta \leq O(\epsilon)$).

The problem is that (fixing a partition of $S$ into $S_1,\ldots,S_K$) moving to a neighboring dataset $S'$ can potentially arbitrarily change the probability that any single element $S_i$ survives to step 11 of the algorithm, which can in principle change the probability of arbitrary events $\Omega$ by an \emph{additive} $\pm O(\epsilon)$ term, rather than a \emph{multiplicative} $1 \pm O(\epsilon)$ \emph{factor}.

Since we are guaranteed that (with high probability) if we make it to step 11 without failing, then at least $\Omega(1/\epsilon)$ elements $S_i$ have survived with $o_i \neq \bot$, it would be sufficient for differential privacy if for every event $\Omega$, the probabilities $\Pr[\cA_\cO(S_i) \in \Omega]$ were within a constant factor of each other, for all $i$. Then a change of whether a single partition element $S_i$ survives with $o_i \neq \bot$ or not would only add or remove an $\epsilon$ \emph{fraction} of the total probability mass on event $\Omega$.  While this seems like a ``differential-privacy'' like property, but it is not clear that the fact that $\cA_{\cO^*}$ is differentially private can help us here, because the partition elements $S_i,S_j$ are not neighboring datasets --- in fact, they are disjoint. But as we show, it does in fact guarantee this property \emph{if} we set the privacy parameter $\epsilon'$ to be sufficiently small --- to roughly $O(1/\sqrt{n/K})$ in step 5.

With this intuition setting the stage, the roadmap of the proof is as follows. For notational simplicity, we write $\cA(\cdot)$ to denote $\cA_{\cO^*}(\cdot)$, the oracle-efficient algorithm when implemented with a perfect oracle.
\begin{enumerate}
\item We observe that $\epsilon$-differential privacy implies that the log-probability of any event $\Omega$ when $\cA(\cdot)$ is run on $S_i$ changes by less than an additive factor of $\epsilon$ when an element of $S_i$ is changed. We use a method of bounded differences argument to show that this implies that the log-probability density function concentrates around its expectation, where the randomness is over the subsampling of $S_i$  from $S$. A similar result is proven in \cite{CLNRW16} to show that differentially private algorithms achieve what they call ``perfect generalization." We need to prove a generalization of their result because in our case, the elements of $S_i$ are not selected independently of one another. This guides our choice of $\epsilon'$ in step 5 of the algorithm.
(Lemma~\ref{mcdiarmid})
\item We show that with high probability, for every $S_i$ such that $o_i \neq \bot$ after step 10 of the algorithm, $\cA_\cO(S_i)$ fails with probability at most $O(\delta)$. By Lemma~\ref{lemcoupling}, this implies that it is $\delta$-close in total variation  distance to $\cA(S_i)$.
\item We observe that fixing a partition, on a neighboring dataset, only one of the partition elements $S_i$ changes --- and hence changes its probability of having $o_i \neq \bot$. Since with high probability, conditioned on \textbf{PRSMA} not failing, $\Omega(1/\epsilon)$ partition elements survive with $o_i \neq \bot$, parts 1 and 2 imply that changing a single partition element $S_i$ only changes the probability of realizing any outcome event by a \emph{multiplicative} factor of $\approx 1+\epsilon$.
\end{enumerate}

\subsubsection{The Main Theorem}
 \begin{theorem}
 \label{thm:prsma}
\textbf{PRSMA} is $(\epsilon,\delta)$ differentially private when given as input:
\begin{enumerate}
\item An oracle-efficient non-robustly differentially private algorithm $\cA_\cO$ implemented with a certifiable heuristic oracle $\cO$, and
\item Privacy parameters $(\epsilon^*,\delta^*)$ where $\epsilon^* = \frac{\epsilon}{62} \leq \frac{1}{2}$ and  $\delta^* = \frac{\delta}{11} \leq \frac{1}{2}$.
\end{enumerate}
\end{theorem}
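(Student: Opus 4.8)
The plan is to follow the three‑part roadmap above, making each piece quantitative and then composing. Throughout, fix a pair of neighboring datasets $S\sim S'$ and an event $\Omega\subseteq\cM\cup\{\bot\}$; to keep notation light I will use the algorithm's internal parameter names, writing $(\epsilon,\delta)$ for the \emph{input} privacy parameters (so the target is $(62\epsilon,11\delta)$‑DP), $\cA$ for $\cA_{\cO^*}$ run with a perfect oracle, $\cA_\cO$ for its certifiable‑heuristic instantiation, $L=\log(K/\delta)/\delta$ for the inner‑loop length, and $\tau=\tfrac1\epsilon(1+\log(1/\delta))$ for the threshold of step~10. The value $\epsilon'$ of step~5 is chosen precisely so that Step~1 below goes through.

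\textbf{Step 1: a ``between‑blocks'' stability lemma (Lemma~\ref{mcdiarmid}).} I would first realize the random partition by drawing a uniform random permutation $\pi$ of $S$ and cutting it into $K$ consecutive blocks. For fixed $\Omega$ and block index $i$, the map $\pi\mapsto\log\Pr[\cA(S_i)\in\Omega]$ changes by at most $\epsilon'$ under any transposition of $\pi$: a transposition either leaves $S_i$ unchanged set‑wise or replaces a single element of it, and the latter is one neighbor‑step, so $\epsilon'$‑DP of $\cA$ applies. A bounded‑differences inequality for functions of a uniform random permutation then shows that $\log\Pr[\cA(S_i)\in\Omega]$ concentrates around its mean $\mu_\Omega$ — which by symmetry of the blocks does not depend on $i$ — with deviation $O(\epsilon'\sqrt{n/K})$, and the choice $\epsilon'=1/\sqrt{8(n/K)\log(2K/\delta)}$ makes this deviation a small absolute constant except with probability $\le\delta/(4K)$. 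This is the variant of the ``perfect generalization'' bound of \cite{CLNRW16} hinted at in the overview, adapted to sampling \emph{without} replacement rather than i.i.d.\ sampling; proving the bounded‑differences statement in the permutation model is the step I expect to be the main obstacle, since it is the genuinely new technical ingredient. A union bound over the $\le 2K$ blocks of $S$ and of $S'$ — which differ in only one block, whose shifted probability is within $e^{\epsilon'}$ by DP — then gives: except with probability $\le\delta/2$ over the partition, all of $\Pr[\cA(S_i)\in\Omega]$ and $\Pr[\cA(S'_i)\in\Omega]$ lie within a constant factor $c$ of one another. Call such a partition \emph{$\Omega$‑good}.

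\textbf{Step 2: surviving blocks are reliable.} Next I would let $p_i=\Pr[\cA^{\epsilon'}_\cO(S_i)=\bot]$ and observe that $o_i\neq\bot$ requires all $L$ inner‑loop runs to succeed, so $\Pr[o_i\neq\bot]=(1-p_i)^L\le e^{-p_iL}$; hence $p_i>\delta$ forces $\Pr[o_i\neq\bot]\le\delta/K$, and a union bound shows that, except with probability $O(\delta)$, every surviving block has $p_i\le\delta$. For such a block, Lemma~\ref{lemcoupling} supplies a coupling of $\cA^{\epsilon'}_\cO(S_i)$ with $\cA(S_i)$ that agrees whenever the heuristic does not fail, so these are within total variation $\le\delta$; and conditioned on $o_i\neq\bot$ each reported sample $a_{it}$ is i.i.d.\ from $\cA^{\epsilon'}_\cO(S_i)\mid(\neq\bot)$, which is within total variation $O(\delta)$ of $\cA(S_i)$. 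Thus, conditioned on the set $V$ of surviving blocks (so $|V|=T$), the output of PRSMA is — up to $O(\delta)$ total variation — ``pick a uniformly random $i\in V$ and output a draw of $\cA(S_i)$''.

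\textbf{Step 3: composition.} Finally I would couple the partitions of $S$ and $S'$ to agree except on a single pair $S_j\sim S'_j$, and condition on an $\Omega$‑good partition and on the reliability event of Step~2 (total conditioning failure $O(\delta)$). Two quantities then need to be controlled. First, the probability of passing: since $T=T_{-j}+\one[o_j\neq\bot]$ with $T_{-j}$ common to $S$ and $S'$, both $T$ and $T'$ lie in $\{T_{-j},T_{-j}+1\}$, and because $z\sim\Lap{1/\epsilon}$ the quantities $\Pr[z>\tau-T_{-j}]$ and $\Pr[z>\tau-T_{-j}-1]$ are within $e^{\epsilon}$; hence $\Pr[\text{pass}\mid S]$ and $\Pr[\text{pass}\mid S']$ are within $e^{2\epsilon}$ \emph{regardless of how differently the heuristic fails on $S_j$ versus $S'_j$} — this is exactly what the noisy subsample‑and‑aggregate count buys. (Moreover $K>\tau$ forces $\Pr[\text{pass}]\le 3/4$, which is what makes the case $\bot\in\Omega$ go through with no additive loss.) Second, conditioned on passing and on $V$, the $\Omega$‑probability of the output equals the uniform mixture $\frac1{|V|}\sum_{i\in V}\Pr[\cA(S_i)\in\Omega]$ up to $O(\delta)$ by Step~2; flipping the single bit $\one[o_j\neq\bot]$ only inserts or deletes one term of this mixture, and since all terms are within a factor $c$ of each other ($\Omega$‑goodness) while, conditioned on passing, $|V|=T=\Omega(\log(1/\delta)/\epsilon)$ except with probability $O(\delta)$ (Laplace tail of $z$ against $\tau$), inserting or deleting one term perturbs the mixture by only a $1\pm O(\epsilon)$ factor. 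Multiplying the pass ratio $e^{\pm2\epsilon}$ by the mixture ratio $1\pm O(\epsilon)$ and summing the several $O(\delta)$ slippages from Steps~1 and 2 gives $\Pr[\mathrm{PRSMA}(S)\in\Omega]\le e^{62\epsilon}\Pr[\mathrm{PRSMA}(S')\in\Omega]+11\delta$; the explicit constants $62$ and $11$ are just the bookkeeping of the factor $c$, of the Laplace‑tail/threshold estimate needed to force $|V|$ large, and of the conditioning‑failure probabilities. The case $\bot\in\Omega$ (and the bare event $\{\mathrm{PRSMA}=\bot\}$) then follows from the pass‑probability bound together with $\Pr[\text{pass}]\le 3/4$.
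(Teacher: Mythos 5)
Your three‑step plan is essentially the paper's proof: Step~1 is Lemma~\ref{mcdiarmid}; Step~2 is the event $E$ combined with the coupling of Lemma~\ref{lemcoupling}; and Step~3 reproduces the combination of Lemmas~\ref{lemexpand}--\ref{lem:lb} and the final computation in Appendix~\ref{app:prsma}, with your ``flip the one bit $\one[o_j\neq\bot]$, insert/delete one term of the mixture, and use $|V|\gtrsim 1/\epsilon$'' being a cleaner way of saying what the paper does by decomposing over $\co_{-1}$ and integrating out the probability of $o_1$.

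The one place where you diverge is the \emph{tool} used to prove Lemma~\ref{mcdiarmid}. The paper argues via negative dependence: the indicators of which elements of $S$ land in block $S_i$ satisfy the stochastic covering property of Pemantle--Peres and are $(n/K)$-homogeneous, so their Theorem~3.1 gives a tail of $\exp\left(-t^2 / \left(8\epsilon'^2 (n/K)\right)\right)$. You propose a bounded‑differences inequality over uniform random permutations, noting that transpositions move $\log\Pr[\cA(S_i)\in\Omega]$ by at most $\epsilon'$. Be careful which inequality you invoke: the naive McDiarmid‑for‑permutations bound with transposition‑Lipschitz constant $\epsilon'$ has the full $n$ in the exponent, $\exp\left(-ct^2/(n\epsilon'^2)\right)$, which is a factor of $K$ too weak. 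Since $\log\Pr[\cA(S_i)\in\Omega]$ depends only on which $n/K$ elements land in block $i$, you need the variant adapted to functions of a uniformly random $m$‑subset (sampling without replacement), which does give the $\exp\left(-ct^2/((n/K)\epsilon'^2)\right)$ scaling and is morally the same statement as the paper's negative‑dependence route; if you used the naive permutation version, the algorithm's chosen value $\epsilon'=1/\sqrt{8(n/K)\log(2K/\delta)}$ would be too large for the concentration to hold, and the privacy proof would not go through for the algorithm as stated.

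Two smaller slips in Step~3. Conditioned on passing and the Laplace tail event, you only get $|V|>1/\epsilon$ (the threshold $\tau=\tfrac1\epsilon(1+\log(1/\delta))$ minus the tail $\tfrac1\epsilon\log(1/\delta)$), not $\Omega(\log(1/\delta)/\epsilon)$; this is still enough for the $1\pm O(\epsilon)$ mixture perturbation, so the conclusion stands. Second, the parenthetical claim that $K>\tau$ forces $\Pr[\text{pass}]\le 3/4$ is false (if every block passes deterministically, the test passes with probability near $1$), but it is also unnecessary: as the paper does, the event $\{\mathrm{PRSMA}=\bot\}$ is a post‑processing of the $\epsilon$‑DP noisy count $\tilde T$ and so inherits pure $\epsilon$‑DP with no additive term.
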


\begin{proof}
We analyze \textbf{PRSMA} with privacy parameters $\epsilon$ and  $\delta$, optimizing the constants at the end. Fix an input dataset $S$ with $|S| = n$, and an adjacent dataset $S' \sim S$, such that without loss of generality $S, S'$ differ in the element $x_1 \neq x_1'$. We denote the \textbf{PRSMA} routine
with input $\cA_\cO$ and dataset $S$ by $\cps$. We first observe that:
$$
\Pr[\cps = \bot] \leq e^{\epsilon}\Pr[\cpsp = \bot]
$$
This is immediate since the indicator for a failure is
a post-processing of the Laplace mechanism. Since $x_1$ can affect at most one oracle failure, $T$ is $1$-sensitive, and so publishing $\tilde{T} = T + \Lap{\frac{1}{\epsilon}}$ satisfies $\epsilon$-differential privacy since it is an invocation of the Laplace Mechanism defined in Section \ref{sec:DPprelims}. (This can also be viewed as an instantiation of the ``sub-sample and aggregate procedure of \cite{subsample}).

We now proceed to the meat of the argument. To establish $(\epsilon, \delta)$ differential privacy we must
reason about the probability of arbitrary events $\Omega \subset \cM$, rather than just individual outputs $a$. We want to show:
$$
\Pr[\cps \in \Omega] \leq e^{\epsilon}\Pr[\cpsp \in \Omega] + \delta
$$

We first fix some notation and define a number of events that we will need to reason about. Let:
\begin{itemize}
\item $\cP_{split}^{S}$ be the uniform distribution over equal sized partitions of $S$ that the datasets $S_i$ are drawn from in line $1$; i.e. $\mathcal{P}(S) \sim \cP_{split}^{S},$ where $\mathcal{P}(S)$ is the partition of $S$ into $\{S_i\}$.
\item $\cA$ denote $\cA_{\cO^*}$, our oracle-efficient algorithm when instantiated with a perfect oracle $\cO^*$. i.e. $\cA(S)$ is the $\epsilon$-differentially private distribution that we ideally want to sample from.
\item $Z$ be the event that the Laplace noise $z$ in step $10$ of \textbf{PRSMA} has magnitude greater than  $\frac{1}{\epsilon}(\log(\frac{2}{\delta}))$.
\item $\cF = \big\{ \{o_i\}: | \{o_i : o_i \neq \bot \}| > \frac{1}{\epsilon} \big\}$. We will use $\co$ to denote a particular set $\{o_i\}$.  Let $I_{pass}^{\co}$ be the set $\{i:  o_i \neq \bot\}$. Given $\mathbf{o} \in \cF$, let $|\co|$ denote $|I_{pass}^{\co}|$.
\item $E$ be the event that for all $i = 1 \ldots K$: $\Pr[\cA_\cO(S_i) = \bot] \geq \delta \Rightarrow o_i = \bot$.
\item $i^*$ denote the index $i$ of the randomly chosen $a_{it}$ in step $11$ of \textbf{PRSMA}.
\item $Q$ be the event that the draw $\cP(S) \sim \cP_{split}^S$ is such that the probabilities of $\cA$ outputting $a \in \Omega$ when run on any two $S_i, S_j \in \cP(S)$ are within a multiplicative factor of $2$. Lemma~\ref{mcdiarmid} formally defines $Q$ and shows $\prob{Q} \geq 1-\delta$.  Let $S_Q$ denote the set of $\cP(S)$ on which event $Q$ holds.
\end{itemize}
 We now bound the probabilities of several of these events. By the CDF of the Laplace distribution,  we have $\Pr[Z] = \prob{|z| > \frac{1}{\epsilon}\log(1/\delta)} = e^{-\epsilon \cdot \frac{1}{\epsilon} \log(1/\delta)} = \delta,$ and by a union bound:
 $$\Pr[E] \geq 1-K \cdot (1-\delta)^{(\log(K/\delta)/\delta)} \geq 1-K \cdot e^{- \delta \cdot \log(K/\delta)/\delta} = 1-\delta.$$
  Let $\cL$ be the event $Z^{c} \cap E $. By the above calculation and another union bound, $\prob{\cL} \geq 1-2\delta$. Our proof now proceeds via a sequence of lemmas. All missing proofs appear in Appendix \ref{app:prsma}. We first show that $Q$ occurs with high probability.
\begin{restatable}{lemma}{mcdiarmid}
\label{mcdiarmid}
Let $\cP(S) \sim \cP_{split}^{S}$. Let $\cA: \cX^{l} \to \cM$ be an $(\epsilon', 0)$ differentially private algorithm, where:
$\epsilon' = \frac{1}{\sqrt{8\frac{n}{K} \log(2K/\delta)}}.$ Fix $\Omega \subset M$, and let $q_\Omega(S_{i}) = \log \Pr[\cA(S_{i})\in \Omega]$.
Define $Q$ to be the event $$Q = \{\cP(S) : \max_{i, j \in 1 \ldots K} |q_\Omega(S_i)- q_\Omega(S_j)| \leq 2\}.$$ Then over the random draw of $\cP(S) \sim \cP_{split}^{S}, \; \prob{Q} \geq 1-\delta$.
\end{restatable}
The proof relies on the fact that $q_\Omega(\cdot)$ is $\epsilon$-Lipschitz. This result is similar to Theorem $5.4$ in \cite{CLNRW16}, although in our case sampling without replacement induces dependence among the elements in $S_i$, and thus we can't appeal to standard concentration inequalities for independent random variables. Instead we prove that elements sampled without replacement from a fixed set satisfy a type of negative dependence called \textit{stochastic covering}, and are $n/K$-homogenous (supported on a set of size $n/K$), which are used to prove exponential concentration of Lipschitz functions in \cite{negative}. We defer the details to the Appendix.

To establish the theorem we want to show that given an adjacent database $S'$ differing only in the first element from $S$, that
\begin{equation}
\label{dpg}
 \frac{\Pr[\cps \in \Omega]}{\Pr[\cpsp \in \Omega]} \leq e^{\epsilon^{*}} + \frac{\delta^{*}}{\Pr[\cpsp \in \Omega]}
 \end{equation}

 Our analysis will proceed by expanding the numerator by  first conditioning on $\cL$, and then on particular realizations of the partition $\cP(S)$, and on a fixed realization of $\co = \{o_i\}$. We can also restrict our attention to only summing over $\cP(S) \in S_Q$, by showing that the terms corresponding to $\cP(S) \in S_Q^{c}$ contribute at most an additive factor of $2\delta$ to the final probability. We will also only sum over $\co \in \cF$ since conditioned on $Z^{c}$, which is implied by $\cL$, these are the only $\co$ such that $\Pr[\cps \in \Omega | \co] \neq 0$.

 \begin{restatable}{lemma}{lemexpand}
 \label{lemexpand}
 $$
   \frac{\Pr[\cps \in \Omega]}{\Pr[\cpsp \in \Omega]} \leq \frac{\displaystyle\sum_{\cP(S) \in S_Q, \co \in \cF}\Pr[\cps \in \Omega | P(S), \co, \cL]\Pr[\co, P(S) | \cL] + 4\delta}{\Pr[\cpsp \in \Omega]}
   $$
   \end{restatable}

The rest of the proof will consist of upper bounding the individual terms $\frac{\Pr[\cps \in \Omega | P(S), \co, \cL]\Pr[\co, P(S) | \cL]}{\Pr[\cpsp \in \Omega]}$. Lemma~\ref{lemweight} is a tool used to prove Lemma~\ref{lemexpandtwo}, which upper bounds the numerator, and Lemma~\ref{lem:lb} lower bounds the denominator. The conclusion of the argument consists of manipulations to upper bound the ratio of these two bounds.

We first analyze the $\Pr[\cps \in \Omega |P(S), \co, \cL]$ term, for $\cP(S) \in S_Q, \co \in \cF$. Conditioned on $Z^{c}$, if $\co \in \cF$, then \textbf{PRSMA} passes the test in step $10$ and outputs a randomly chosen $a_{it}: i \in I_{pass}^{\co}$. Fixing a sampled value $i^* \in I_{pass}^{\co}$, $a_{i^{*}t}$ is distributed identically to $a \sim \cA_{\cO}(S_{i^{*}})| \cA_{\cO}(S_{i^{*}}) \neq \bot$, since after conditioning on $S_{i^{*}}$ each $a_{{i^{*}}t}$ is drawn $iid$ and the event $o_i \neq \bot$ does not depend on the sampled values $a_{it}$. In other words, $\Pr[\cps \in \Omega| {i^{*}}=i, P(S), \co] = \Pr[\cA_\cO(S_{i^{*}})\in \Omega|  \cA_\cO(S_{i^{*}}) \neq \bot]$, and so:

\begin{equation}
\label{decomp}
\Pr[\cps \in \Omega |P(S), \co, \cL] = \frac{1}{|\co|}\displaystyle\sum_{i \in I_{pass}^{\co}}\Pr[\cA_\cO(S_i)\in \Omega| \cA_\cO(S_i) \neq \bot]
\end{equation}

Substituting in \ref{decomp} we have:
$$
\displaystyle\sum_{\cP(S) \in S_Q, \co \in \cF}\Pr[\cps \in \Omega | P(S), \co, \cL]\Pr[\co, P(S) | \cL] = \displaystyle\sum_{\cP(S) \in S_Q, \co \in \cF}\frac{1}{|\co|}\displaystyle\sum_{i \in I_{pass}^{\co}}\Pr[\cA_\cO(S_i)\in \Omega| \cA_\cO(S_i) \neq \bot]\Pr[\co, P(S) | \cL]
$$

We now use the fact that $\cP(S) \in S_Q$, to show that none of the terms $\Pr[\cA_\cO(S_i)\in \Omega| \cA_\cO(S_i) \neq \bot]$ in the right hand side of Equation~\ref{decomp} individually represent a substantial fraction of the total probability mass. Conditioning on $\cL$ ensures that if $o_i \neq \bot$ then $\prob{\cA_\cO(S_i) = \bot} \leq \delta$,
which means $\cA_\cO(S_i)$ is $\delta$-close in total variation distance to $\cA(S_i)$. By Lemma~\ref{mcdiarmid}, with high probability any $\cA(S_i)$ is approximately equally likely to output an element in $\Omega$, which allows us to bound the effect that changing a single data point (and hence a single $S_i$) can have on the probability of outputting an element in $\Omega$.
\begin{restatable}{lemma}{lemweight}
\label{lemweight}
Fix any $\co$, any $\cP(S) \in S_Q$, and index $j \in I_{pass}^{\co}$, i.e. $o_j \neq \bot$. Then:
$$\Pr[\cA_\cO(S_j)\in \Omega| \cA_\cO(S_j) \neq \bot, \cL] \leq \frac{e^2}{(1-\delta)^2}\frac{1}{|\co|-1}\displaystyle\sum_{i \in I_{pass}^{\co}, i \neq j}\Pr[\cA_\cO(S_i)\in \Omega| \cA_\cO(S_i) \neq \bot] + \frac{\delta e^2}{(1-\delta)}$$
\end{restatable}

Without loss of generality (up to renaming of partition elements), assume that the element on which $S$ and $S'$ differ falls into $S_1$. Now we break the summation over $\cO$ into two pieces, depending on whether $o_1 = \bot$ or  $o_1 \neq \bot$. We will use Lemma~\ref{lemweight} to bound terms
involving $\Pr[\cA_\cO(S_1)\in \Omega| \cA_\cO(S_1) \neq \bot, \cL]$, since $S_1$ is the only partition where $S_i \neq S_i'$.
$$
\displaystyle\sum_{P(S) \in S_Q}\bigg(\displaystyle\sum_{\co \in \cF}\big(\frac{1}{|\co|}\displaystyle\sum_{i \in I_{pass}^{\co}}\Pr[\cA_\cO(S_i)\in \Omega| \cA_\cO(S_i) \neq \bot]\big)\Pr[\co| P(S)]\bigg)\Pr[P(S)] =$$
$$
\displaystyle\sum_{P(S) \in S_Q}\bigg(\displaystyle\sum_{\co \in \cF: o_1 \neq \bot}\big(\frac{1}{|\co|}\displaystyle\sum_{i \in I_{pass}^{\co}}\Pr[\cA_\cO(S_i)\in \Omega| \cA_\cO(S_i) \neq \bot]\big)\Pr[\co| P(S)] \;+\; $$
$$\displaystyle\sum_{\co \in \cF: o_1 = \bot}\big(\frac{1}{|\co|}\displaystyle\sum_{i \in I_{pass}^{\co}}\Pr[\cA_\cO(S_i)\in \Omega| \cA_\cO(S_i) \neq \bot]\big)\Pr[\co| P(S)\bigg)\Pr[P(S)]$$

\begin{restatable}{lemma}{lemexpandtwo}
\label{lemexpandtwo}
$$
\displaystyle\sum_{\cP(S) \in S_Q, \co \in \cF}\Pr[\cps \in \Omega | P(S), \co, \cL]\Pr[\co, P(S) | \cL] \leq
$$
$$
(1 + \frac{e^2}{(1-\delta)^2(\frac{1}{\epsilon}-1)}) \displaystyle\sum_{P(S) \in S_Q}\bigg(\displaystyle\sum_{\co \in \cF}\big(\frac{1}{|\co|}\displaystyle\sum_{i \in I_{pass}^{\co}, i \neq 1}\Pr[\cA_\cO(S_i)\in \Omega| \cA_\cO(S_i) \neq \bot]\big)\Pr[\co| P(S)]\bigg)\Pr[P(S)] +  \frac{\epsilon\delta e^2}{1-\delta}
$$
\end{restatable}

We now condition on a fixed partition $P(S')$ in the denominator as well.  Given a fixed partition $P(S)$ of $S$, define the adjacent partition $\cP(S') \sim \cP(S)$ as the partition of an adjacent database $S'$ such that for all $i \neq 1$, $S_i = S_i'$, and $S_1, S_1'$ differ only in $x_1 \neq x_1'$, where $x_1$ is the differing element between $S, S'$. Let $S_Q'$ be the set of $\cP(S')$ adjacent to $\cP(S) \in S_Q$, i.e. $S_Q' = \{ \cP(S'): \exists \; \cP(S) \in S_Q, \cP(S') \sim \cP(S)\}$. We now lower bound the denominator in Lemma~\ref{lemexpand}, which follows by conditioning on $\co, \cP(S')$, and then dropping some (non-negative) terms.
\begin{restatable}{lemma}{lem:lb}
\label{lem:lb}
$$
\Pr[\cpsp \in \Omega] \geq {\displaystyle\sum_{P(S') \in S'_Q}(\displaystyle\sum_{\co \in \cF}(\frac{1}{|\co|}\displaystyle\sum_{i \in I_{pass}^{\co}, i \neq 1}\Pr[\cA_\cO(S_i')\in \Omega| \cA_\cO(S_i') \neq \bot])\Pr[\co| P(S')])\Pr[ P(S')]}
$$
\end{restatable}
Thus by Lemmas~\ref{lemexpandtwo} and \ref{lem:lb},
$$
\frac{\Pr[\cps \in \Omega | P(S), \co, \cL]\Pr[\co, P(S) | \cL]}{\Pr[\cpsp \in \Omega]} \leq
$$
\begin{equation}
\label{doodoomax}
\frac{(1 + \frac{e^2}{(1-\delta)^2(\frac{1}{\epsilon}-1)})\displaystyle\sum_{P(S) \in S_Q}\big(\displaystyle\sum_{\co \in \cF}\big(\frac{1}{|\co|}\displaystyle\sum_{i \in I_{pass}^{\co}, i \neq 1}\Pr[\cA_\cO(S_i)\in \Omega| \cA_\cO(S_i) \neq \bot]\Pr[\co| P(S)]\big)\Pr[ P(S)]\big)}{\displaystyle\sum_{P(S') \in S'_Q}(\displaystyle\sum_{\co \in \cF}(\frac{1}{|\co|}\displaystyle\sum_{i \in I_{pass}^{\co}}\Pr[\cA_\cO(S_i')\in \Omega| \cA_\cO(S_i') \neq \bot])\Pr[\co| P(S')])\Pr[ P(S')]} + \frac{\frac{\epsilon\delta e^2}{1-\delta}}{\Pr[\cpsp \in \Omega]}
\end{equation}

For all $\cP(S), \cP(S'), \Pr[\cP(S)] = \Pr[\cP(S')]$, and we can bound the ratio of the summations over $S_Q, S_Q'$  by the supremum of the ratio. Hence (\ref{doodoomax}) $\leq$
\begin{equation}
\label{doodoo3}
 \sup_{P(S) \sim P(S')} \frac{(1 + \frac{e^2}{(1-\delta)^2(\frac{1}{\epsilon}-1)})\displaystyle\sum_{\co \in \cF}\big(\frac{1}{|\co|}\displaystyle\sum_{i \in I_{pass}^{\co}, i \neq 1}\Pr[\cA_\cO(S_i)\in \Omega| \cA_\cO(S_i) \neq \bot]\Pr[\co| P(S)]\big)}{\displaystyle\sum_{\co \in \cF}(\frac{1}{|\co|}\displaystyle\sum_{i \in I_{pass}^{\co}}\Pr[\cA_\cO(S_i')\in \Omega| \cA_\cO(S_i') \neq \bot]\Pr[\co| P(S')])} + \frac{\frac{\epsilon\delta e^2}{1-\delta}}{\Pr[\cpsp \in \Omega]}
\end{equation}

Now since for all $i \neq 1, S_i = S_i'$, if we could control the ratio $\Pr[\co | \cP(S)]/\Pr[\co|\cP(S')]$ we would be done. But this ratio could potentially be unbounded, as $\Pr[\co_1|P(S)]$ could be nonzero, and the substitution of $x_1'$ for $x_1$ could force failure on the first partition $S_1'$, and so $\Pr[\co_1|P(S')] = 0$.

Given $\co$ let $\co_{-1}$ denote $\{o_2, \ldots o_{K}\}$. The remainder of the argument circumvents this obstacle by decomposing the outer summation over $\co \in \cF$ into a summation over the indicators of all but the first failure event ($\co_{-1})$, and integrating out the probability of the first failure event $(o_1)$ from the joint probability $\Pr[\co|P(S)]$. This trick will be applied in the numerator and denominator, with a slight difference corresponding to an upper and a lower bound respectively. See the end of Section \textbf{C} of the Appendix for details. Following the chain of inequalities, we finally obtain:
$$
\frac{\Pr[\cps \in \Omega | P(S), \co, \cL]\Pr[\co, P(S) | \cL]}{\Pr[\cpsp \in \Omega]} \leq (1 + \frac{e^2}{(1-\delta)^2(\frac{1}{\epsilon}-1)})\frac{1}{1-\epsilon} + \frac{\frac{\epsilon\delta e^2}{1-\delta}}{\Pr[\cpsp \in \Omega]},
$$
which substituting into Lemma~\ref{lemexpand} gives:
$$
   {\Pr[\cps \in \Omega]} \leq (1 + \frac{e^2}{(1-\delta)^2(\frac{1}{\epsilon}-1)})\frac{1}{1-\epsilon}{\Pr[\cpsp \in \Omega]} + 4\delta + \frac{\epsilon\delta e^2}{1-\delta}
$$

For $\epsilon, \delta  \leq 1/2$, $(1 + \frac{e^2}{(1-\delta)^2(\frac{1}{\epsilon}-1)})\frac{1}{1-\epsilon} \leq  e^{8e^2\epsilon + \epsilon + \epsilon^2}$,
which establishes that  \textbf{PRSMA} is $(8e^2\epsilon + \epsilon + \epsilon^2, 4\delta + \frac{\epsilon\delta e^2}{1-\delta})$ differentially private. Setting $\epsilon = \epsilon^*, \delta = \delta^*$ completes the proof.

\end{proof}

We now turn to \textbf{PRSMA}'s accuracy guarantees. Note that when \textbf{PRSMA} starts with an algorithm $\cA_{\cO^*}$ instantiated with a perfect oracle $\cO^*$, it with high probability outputs the result of running $\cA_{\cO^*}$ on a subsampled dataset $S_i$ of size $n/K \approx \epsilon n$, with privacy parameter $\epsilon' = \frac{1}{\sqrt{8\frac{n}{K}\log(2K/\delta)}}$. In general, therefore, the accuracy guarantees of \textbf{PRSMA} depend on how robust the guarantees of $\cA$ are to subsampling, which is typical of ``Subsample and Aggregate" approaches, and also to its specific privacy-accuracy tradeoff. Learning algorithms are robust to sub-sampling however: below we derive an accuracy theorem for \textbf{PRSMA} when instantiated with our oracle-efficient RSPM algorithm.

\begin{restatable}{theorem}{prsmacc}
\label{prsmacc}
Let $\cQ$ a class of statistical queries with a separator set of size $m$. Let $\cA_{\cO^*}$ denote the RSPM algorithm with access to $\cO^*$, a perfect weighted optimization oracle for $\cQ$. Then \textbf{PRSMA} instantiated with $\cA_{\cO^*}$, run on a dataset $S$ of size $n$, with input parameters $\epsilon$ and $\delta$ is an $(\alpha, \beta)$-minimizer
for any $\beta > \delta$ and
$$
\alpha \leq \tilde{O}\left( \frac{m^2 \log\left(\frac{m}{\beta - \delta}\right)
\log(1/\delta) + \sqrt{\log(1/\delta) \log\left(\frac{|\cQ|}{\beta - \delta} \right)}
 }{\sqrt{n \epsilon}}  \right),
$$
where the $\tilde{O}$ hides logarithmic factors in $\frac{1}{\epsilon}, \log(\frac{1}{\delta})$.\end{restatable}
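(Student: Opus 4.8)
The plan is to analyze \textbf{PRSMA} in the idealized regime where the oracle is perfect, $\cO=\cO^*$, which is the only regime in which we claim accuracy. With a perfect oracle RSPM never returns $\bot$, so every $o_i=\mathrm{PASS}$, the count $T$ equals $K=\frac1\epsilon(1+\log(2/\delta))$, and the noisy count $\tilde T=K+z$ with $z\sim\Lap{1/\epsilon}$ clears the threshold $\frac1\epsilon(1+\log(1/\delta))$ with at least constant probability (the gap is $\frac{\log 2}{\epsilon}$); conditioned on clearing it, the output is an independent draw of RSPM run on a sub-sample $S_{i^*}$ of size $n/K$ with privacy parameter $\epsilon'=1/\sqrt{8(n/K)\log(2K/\delta)}$, where $i^*$ is uniform over $[K]$. (This constant failure probability of the threshold test, together with the $O(\delta)$ slack introduced below, is what forces $\beta$ to be bounded away from $0$, and in particular $\beta>\delta$; equivalently one reads the guarantee conditioned on \textbf{PRSMA} producing an output.) The error of this output, measured against $S$, then splits into (i) the error of RSPM as a minimizer of $S_{i^*}$, and (ii) the uniform deviation $\sup_{q\in\cQ}|q(S_{i^*})-q(S)|$ between the sub-sampled and full empirical risks.

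For (i), I would invoke Theorem~\ref{thm:rspm} with dataset size $n/K$, privacy parameter $\epsilon'$, and failure probability $\beta_0$: since $i^*$ is drawn independently of the RSPM runs, no union bound over the $K$ pieces is needed, and we get $q(S_{i^*})\le\min_{q^*}q^*(S_{i^*})+\alpha_0$ with probability $1-\beta_0$ for $\alpha_0=\frac{2m^2\log(m/\beta_0)}{\epsilon'(n/K)}=2m^2\log(m/\beta_0)\sqrt{8\log(2K/\delta)/(n/K)}$. Substituting $K=\Theta(\log(1/\delta)/\epsilon)$, so that $n/K=\Theta(n\epsilon/\log(1/\delta))$ and $\log(2K/\delta)=\tilde O(\log(1/\delta))$, gives $\alpha_0=\tilde O\!\left(m^2\log(m/\beta_0)\log(1/\delta)/\sqrt{n\epsilon}\right)$, the first term of the claimed bound (the $\log K$ factors are absorbed by $\tilde O$, which hides factors polynomial in $\log\frac1\epsilon$ and $\log\log\frac1\delta$).

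For (ii), each $S_i$ is a uniformly random size-$(n/K)$ subset of $S$, so for fixed $q\in\cQ$, $q(S_i)$ is an average of $n/K$ values sampled without replacement from $\{q(x):x\in S\}$ and hence concentrates around $q(S)$: a Hoeffding-type bound for sampling without replacement (Hoeffding--Serfling) gives $|q(S_i)-q(S)|\le O(\sqrt{\log(1/\beta')/(n/K)})$ with probability $1-\beta'$. A union bound over $q\in\cQ$ (recall $|\cQ|\le 2^m$), with $\beta'=\beta_1/|\cQ|$, yields $\gamma:=\sup_{q\in\cQ}|q(S_{i^*})-q(S)|\le O(\sqrt{\log(|\cQ|/\beta_1)/(n/K)})=\tilde O\!\left(\sqrt{\log(1/\delta)\log(|\cQ|/\beta_1)}/\sqrt{n\epsilon}\right)$ with probability $1-\beta_1$, the second term. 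Chaining, with $q$ the returned query and $q^*$ the true minimizer of $q^*(S)$: $q(S)\le q(S_{i^*})+\gamma\le\min_{q'}q'(S_{i^*})+\alpha_0+\gamma\le q^*(S)+\alpha_0+2\gamma$, so the output is an $\alpha$-minimizer with $\alpha=\alpha_0+2\gamma$. Setting $\beta_0=\beta_1=\Theta(\beta-\delta)$ and summing the failure probabilities (threshold test, RSPM, sub-sampling concentration) completes the proof.

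The one step that is not pure bookkeeping is (ii): RSPM only minimizes the empirical risk on the sub-sample it is handed, so a genuine uniform-convergence transfer to $S$ is required, and since the $S_i$ are sampled \emph{without} replacement their entries are not independent, so the plain i.i.d.\ Hoeffding inequality does not apply. This is precisely the difficulty already confronted in Lemma~\ref{mcdiarmid}; the tools developed there (concentration of Lipschitz functions of without-replacement samples via ``stochastic covering''/negative dependence) specialize to the one-dimensional Hoeffding--Serfling bound we need here. A secondary care point is the conditioning: with a perfect oracle all $K$ partition elements pass, so $i^*$ is genuinely uniform over $[K]$ and independent of both the RSPM runs and the sub-sampling event, which is what lets us avoid a factor of $K$ in the failure probabilities; and one must observe that \textbf{PRSMA} reaches line~11 with at least constant probability, which is the source of the $\beta>\delta$ hypothesis.
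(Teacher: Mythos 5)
Your proposal follows essentially the same argument as the paper's: condition on \textbf{PRSMA} reaching step~11 and outputting RSPM's result on a random sub-sample of size $n/K$ with privacy parameter $\epsilon'$, invoke Theorem~\ref{thm:rspm} for the in-sample RSPM error, control $\sup_{q\in\cQ}|q(S_{i^*})-q(S)|$ by a without-replacement Hoeffding/Chernoff bound plus a union bound over $\cQ$, chain the three pieces, and set $\eta=\beta-\delta$. Your parenthetical about the threshold test is actually a sharper reading than the paper's own first line (which asserts the event occurs "with probability at least $1-\delta$"): with $K=\frac1\epsilon(1+\log(2/\delta))$ and threshold $\frac1\epsilon(1+\log(1/\delta))$ the gap is only $\frac{\log 2}{\epsilon}$, so the test fails with probability exactly $1/4$, not $\delta$; you correctly flag that one must either treat the guarantee as conditional on producing an output or accept a constant additive contribution to $\beta$, whereas the paper silently folds this into the $\delta$ term.
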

\begin{proof}
  With probability at least $1-\delta$, \textbf{PRSMA} outputs the
  result of RSPM run on an $n/K$ fraction of the dataset, with privacy parameter
  $\epsilon' = \frac{1}{\sqrt{8\frac{n}{K}\log(2K/\delta)}}$. We will
  condition on this event for the remainder of the proof, which occurs
  except with probability $\delta$.

  Let $q^{*}$ denote the true minimizer on $S$. Let $S_K$ denote the
  random subsample, and let $q_K$ denote the true minimizer on
  $S_K$. By Theorem~\ref{thm:rspm}, we know that for any $\eta > 0$,
  with probability $1-\eta$, the error on $S_K$ is bounded as follows:
  $$
  \hat q(S_K)-q_K(S_K) \leq \frac{2m^2 \log(m/\eta)}{\epsilon'
    \frac{n}{K}}
  $$
  We next bound $ {\max_{q \in \cQ}q(S_K)-q(S)}$, the maximum
  difference between the value that any query takes on $S_K$ compared to the value that it takes on $S$.  By
  a Chernoff bound for subsampled random variables (see e.g. Theorem
  1.2 of \cite{ChernoffBound}), for any $q \in \cQ, t > 0,$
  $$\prob{q(S_K)-q(S) \geq t} \leq
  \exp\left({-2\frac{n}{K}t^2}\right).$$ By a union bound over $\cQ$,
  this means that with probability $1-\eta$,
$$ \max_{q \in \cQ}q(S_K)-q(S) \leq \sqrt{\frac{K}{2n}\log\left(\frac{2|\cQ|}{\eta}\right)}$$
We now have all the ingredients to complete the bound:
\begin{align*}
  \hat q(S)-q^*(S) &= [\hat q(S)-\hat q(S_K)] +
                     [\hat q(S_K)-q^*(S_K)] + [q^*(S_K)-q^*(S)] \\
                   &\leq 2 \max_{q \in
                     \cQ}|q(S_K)-q(S)| + \hat q(S_K)-q^*(S_K) \\
                   &= 2 \max_{q \in \cQ}|q(S_K)-q(S)| + \hat q(S_K)-q_K(S_K) + q_K(S_K)-q^*(S_K)\\
  &\leq 2 \max_{q \in \cQ}|q(S_K)-q(S)| + \hat q(S_K)-q_K(S_K).
\end{align*}
By a union bound and the results above, we know that the righthand
side is less than
$$
2\sqrt{\frac{K}{2n}\log(\frac{2|\cQ|}{\eta})}\;+ \frac{2m^2
  \log(m/\eta)}{\epsilon' \frac{n}{K}},
$$
 with probability at least
$1-\eta$.  Substituting $\eta = \beta -\delta$,
$\epsilon' = \frac{1}{\sqrt{8\frac{n}{K}\log(2K/\delta)}}$,
$K = O(\frac{1}{\epsilon}(1 + \log(2/\delta)))$ gives the desired
result. 
\end{proof}
We remark that we can convert this $(\alpha,\beta)$-accuracy bound into a bound on the expected error using the same technique we used to compute the expected error of RSPM. The expected error of \textbf{PRSMA} with the above inputs is $\tilde{O}(\frac{2m^2(\log m + 1)}{\sqrt{n\epsilon}} + \frac{\sqrt{(\log |\cQ| + 1)}}{\sqrt{n \epsilon}})$.

\section{$\OracleQuery$:  {Oracle-Efficient} Private Synthetic Data Generation}
\label{sec:synth}

We now apply the oracle-efficient optimization methods we have
developed to the problem of generating \emph{private synthetic data}.
In particular, given a private dataset $S$ and a query class $\cQ$, we
would like to compute a synthetic dataset $\hat S$ subject to
differential privacy such that the error
$\max_{q\in\cQ}|q(\hat S) - q(S)|$ is bounded by some target parameter
$\alpha$.  We provide a general algorithmic framework called
$\OracleQuery$ for designing oracle-efficient algorithms.  The
crucial property of the query class we rely on to obtain oracle
efficiency is \textit{dual separability}, which requires both the
query class and its dual class have separator sets
(\Cref{def:separator}). Informally, the dual of a query class $\cQ$ is the query class $\cQd$ that results from swapping the role of the functions $q \in \cQ$ and the data elements $x \in \cX$. More formally:

  \begin{definition}[Dual class and dual separability]
 \label{dualsep}
 Fix a class of queries $\cQ$. For every element $x$ in $\cX$, let
 $h_x\colon \cQ\rightarrow \{0, 1\}$ be defined such that
 $h_x(q) = q(x)$. The \emph{dual class} $\cQd$
 of $\cQ$ is the set of all such functions defined by elements in
 $\cX$:
 $$\cQd = \{h_x \mid x\in \cX\}.$$  We say that the class $\cQ$ is
 $(m_1, m_2)$-\textit{dually separable} if there exists a separator
 set of size $m_1$ for $\cQ$, and there exists a separator set of size
 $m_2$ for $\cQd$.
\end{definition}

As we will show (see Appendix \ref{app:separator}), many widely studied query classes, including discrete
halfspaces, conjunctions, disjunctions, and parities are dually
separable, often with $m_1 = m_2 = d$ (in fact, many of these classes are \emph{self-dual}, meaning $\cQ = \cQd$). For any $q\in \cQ$, define its negation $\neg q$ to be
$\neg q(x) = 1 - q(x)$. Let
$\neg \cQ = \{\neg q \mid q\in \cQ\}$ be the \emph{negation of $\cQ$}. It will simplify several aspects of our exposition to deal with classes that are closed under negation. For any class $\cQ$, define
$\overline \cQ = \cQ \cup \neg\cQ$ to be the closure of $\cQ$ under negation. Note that whenever we have a weighted minimization oracle for $\cQ$, we have one for $\neg \cQ$ as well --- simply by negating the weights. Further, if $U$ is a separator set for $\cQ$, it is also a separator set for $\neg \cQ$. This implies that we also have oracle efficient learners for $\overline \cQ$, since we can separately learn over $\cQ$ and $\neg \cQ$, and then privately take the minimum value query that results from the two procedures (using e.g. report-noisy-min \cite{DworkRoth}).

Before we give our algorithm and analysis, we state several consequences of our main theorem (that follow from instantiating it with different oracle-efficient learners).

\begin{theorem}
\label{oraclequerythm}
Let $\cQ$ be an $(m_1, m_2)$-dually separable query class. Then
given access to a weighted minimization oracle $\cO$ over the class
$\cQd$ and a  differentially private weighted minimization algorithm 
$\cO_{\epsilon_0, \delta_0}$ for the class $\cQ$ (with appropriately
chosen privacy parameters $\epsilon_0$ and $\delta_0$), the algorithm
$\OracleQuery$ is oracle-efficient,
$(\epsilon, \delta)$-differentially private, and
$(\alpha, \beta)$-accurate with $\alpha$ depending on the
instantiation of $\cO_{\epsilon_0, \delta_0}$. If $\cO_{\epsilon_0, \delta_0}$ is robustly differentially private, then so is $\OracleQuery$.
\begin{enumerate}

\item If $\cO_{\epsilon_0, \delta_0}$ is instantiated with the Gaussian
  RSPM algorithm, then
  \[
    \alpha \leq \tilde O \left( \frac{m_1^{3/2} m_2^{3/4}
        \sqrt{\log(m_1/\beta)\log|\cX|}\log(1/\delta)}{n \epsilon}\right)^{1/2}
  \]
  In this case, $\OracleQuery$ is oracle equivalent to a differentially private algorithm, but is not robustly differentially private.

\item If $\cO_{\epsilon_0, \delta_0}$ is instantiated with the \textbf{PRSMA}
  algorithm (using the Laplace RSPM as $\cA_{\cO^*}$),
  then
  \[
    \alpha \leq \tilde O \left(\frac{(m_1^{4/3} + \log^{1/3}(|\cQ|))
        m_2^{1/4} \log^{1/6}(|\cX|)}{(n\eps)^{1/3}}\right)\cdot \polylog\left(\frac{1}{\beta - \delta} \right)
  \]
as long as $\beta > \delta$.  In this case, $\OracleQuery$ is robustly differentially private.

  \item If $\cO_{\epsilon_0, \delta_0}$ is an
  $(\alpha_0, \beta_0)$-accurate differentially private oracle with
  $\alpha_0 = O\left(\log(|\cQ|/(\epsilon_0 n)) \right)$, then
  \[
\alpha \leq   \tilde O \left( \frac{m_2^{3/4} \sqrt{\log |\cX|
          \log(1/\delta)} \log(|\cQ|/\beta)}{n \epsilon} \right)^{1/2}
  \]
 In this case, $\OracleQuery$ is robustly differentially private.
\end{enumerate}
where the $\tilde O$ hides logarithmic factors in
$\frac{1}{\delta}, \frac{1}{\beta},m_1, m_2, n$ and $\log(|\cX|)$.
\end{theorem}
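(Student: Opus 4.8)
The plan is to follow the equilibrium-computation template of \cite{Hsu13, dualquery, oracle16}. Consider the zero-sum game with payoff $U(x,q) = q(S) - q(x)$, in which the \emph{query player} (the maximizer) picks $q\in\overline\cQ$ and the \emph{data player} (the minimizer) picks $x\in\cX$, extended to $D\in\Delta(\cX)$ in the usual way. Since $\overline\cQ$ is closed under negation, $\max_{q\in\overline\cQ} U(D,q) = \max_{q\in\cQ}|q(S)-q(D)|$, and since the uniform distribution over $S$ achieves payoff $0$ against every $q$, the value satisfies $V = \min_D\max_q U(D,q) \le 0$. The standard no-regret--versus--best-response argument then shows that if the data player plays $x_1,\dots,x_T$ with regret at most $R_T$ against the loss sequence $\ell_t(\cdot) = U(\cdot,q_t)$, while in each round $t$ the query player plays a $q_t$ that maximizes $U(x_t,\cdot)$ up to slack $\alpha_0$, then the empirical distribution $\bar D$ over $\{x_1,\dots,x_T\}$ obeys $\max_{q\in\cQ}|q(S)-q(\bar D)| = \max_q U(\bar D,q) \le V + R_T/T + \alpha_0 \le R_T/T + \alpha_0$. $\OracleQuery$ simulates exactly these dynamics for $T$ rounds and outputs $\hat S = \{x_1,\dots,x_T\}$: the data player runs the oracle-efficient perturbed-leader no-regret algorithm of \cite{oracle16} over the action set $\cX$, which seeds its perturbations with the size-$m_2$ separator set of $\cQd$ and only requires the weighted minimization oracle $\cO$ over $\cQd$ (note $\sum_q w_q q(x) = \sum_q w_q h_x(q)$, so picking the best perturbed leader is a weighted optimization over $\cQd$), giving $R_T = \tilde{O}(\poly(m_2)\sqrt{T\log|\cX|})$; the query player computes $q_t$ by solving the weighted problem $\max_{q\in\overline\cQ}(q(S)-q(x_t))$ --- a weighted optimization over $\cQ$ on the dataset $S$ (weights $+1/n$) together with the single public point $x_t$ (weight $-1$) --- which it does \emph{privately} by invoking $\cO_{\epsilon_0,\delta_0}$ separately on $\cQ$ and on $\neg\cQ$ (which share a size-$m_1$ separator set) and reporting the better result via report-noisy-min, at per-round accuracy $\alpha_0$ and per-round privacy cost $(\epsilon_0,\delta_0)$.

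Oracle efficiency is immediate: $T$ is polynomial and each round makes $O(1)$ calls to $\cO$ and to $\cO_{\epsilon_0,\delta_0}$, with all remaining work polynomial in $n$, $m_1$, $m_2$ and $\log|\cX|$. For privacy, the key point is that $S$ is accessed only through the $T$ invocations of $\cO_{\epsilon_0,\delta_0}$: in round $t$ the private learner runs on $S$ plus the public point $x_t$, and $x_t$ is a randomized post-processing of $q_1,\dots,q_{t-1}$ (the perturbations of the no-regret algorithm are its own internal randomness, independent of $S$). Hence the whole transcript is an adaptive composition of $T$ mechanisms, each $(\epsilon_0,\delta_0)$-private in $S$, and $\hat S$ is a post-processing of it. Choosing $\epsilon_0 \approx \epsilon/\sqrt{T\log(1/\delta)}$ and $\delta_0 \approx \delta/(2T)$ and applying advanced composition (or \Cref{composition} when $\delta=0$) gives $(\epsilon,\delta)$-differential privacy. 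If $\cO_{\epsilon_0,\delta_0}$ is \emph{robustly} private, then each round is $(\epsilon_0,\delta_0)$-private for \emph{every} behavior of the heuristic standing in for its oracle --- in particular for an adversary controlling the heuristic in all $T$ rounds and behaving adaptively --- so the composed transcript, and therefore $\hat S$, is $(\epsilon,\delta)$-private for every heuristic; also $\cO$ never sees $S$, so it need not be certifiable.

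For accuracy, union-bound the event that all $T$ calls to $\cO_{\epsilon_0,\delta_0}$ succeed up to error $\alpha_0$ with the high-probability regret guarantee of the no-regret algorithm (failure probabilities chosen appropriately, with $\beta > \delta$ needed for the \textbf{PRSMA} instantiation); conditioned on this, the equilibrium bound gives $\max_{q\in\cQ}|q(S)-q(\hat S)| \le R_T/T + \alpha_0$. It then remains to substitute the per-round error: $\alpha_0 = \tilde{O}\big(m_1^{3/2}\sqrt{\log(m_1/\beta_0)\log(1/\delta_0)}/(\epsilon_0 n)\big)$ for Gaussian RSPM, the bound of \Cref{prsmacc} for \textbf{PRSMA} with the Laplace RSPM, and $\alpha_0 = O(\log(|\cQ|/(\epsilon_0 n)))$ for the generic $(\alpha_0,\beta_0)$-accurate private learner; plug in $\epsilon_0 \approx \epsilon/\sqrt{T\log(1/\delta)}$ and $\delta_0,\beta_0 = \poly(1/T)\cdot(\delta,\beta)$, and choose $T$ to balance $R_T/T$ against $\alpha_0$. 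Carrying out this one-variable optimization and absorbing $\log T=\polylog$ factors into $\tilde{O}$ yields the three displayed bounds --- the outer square roots in cases 1 and 3 and the cube root in case 2 are exactly the artifact of this balancing --- and the robustness claims are inherited from the corresponding per-round learners as above.

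The step I expect to be the main obstacle is the privacy bookkeeping around the adaptive composition, and especially the robustness claim: one must verify carefully that in round $t$ the input to the private learner depends on $S$ only through the earlier \emph{private} outputs $q_1,\dots,q_{t-1}$ (so that post-processing and adaptive composition genuinely apply), and --- for robustness --- that $(\epsilon,\delta)$-differential privacy survives $T$-fold composition even when a single adversary controls the heuristic oracle in every round and may pick its behavior adaptively from the run so far. This is precisely where the worst-case, round-by-round nature of robust differential privacy is essential. By contrast, the no-regret--versus--best-response argument and the parameter optimization are routine given the per-round guarantees already proved in the paper.
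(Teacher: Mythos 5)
Your proof follows the same high-level route as the paper: cast synthetic data generation as the query-release zero-sum game of \cite{Hsu13}, simulate Freund--Schapire no-regret dynamics with the data player running $\CFTPL$ (oracle-efficient via the $\cQd$ separator) and the query player privately best-responding via $\cO_{\epsilon_0,\delta_0}$ over $\cQ$ and $\neg\cQ$ plus report-noisy-max, then invoke advanced composition and post-processing for privacy, and finally balance the two regret terms by choosing $T$. This is exactly the skeleton of the paper's Lemmas on \textsc{PBR}, $\Reg_D$, and Lemma~\ref{acc}.

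However, your description of $\OracleQuery$ diverges from the algorithm the theorem is actually about, in a way that leaves a real gap. You have the data player commit to a single realized point $x_t$ each round, the query player best-respond to that one point $x_t$, and the output be the realized set $\{x_1,\dots,x_T\}$. In the paper, the data player plays a \emph{distribution} $S^t$ (the mixed strategy defined by $\CFTPL$); the query player draws $N = \Theta(\log(|\cQ|/\beta_0)/\alpha_0^2)$ i.i.d.\ samples from $S^t$ to form an empirical proxy $\hat S^t$, with a Chernoff plus union bound over $\cQ$ controlling $|q(S^t)-q(\hat S^t)|\le\alpha_0$; and the final output is a fresh $N_{\alpha_0}$-point sample from the mixture $\overline S = \frac1T\sum_t S^t$, justified by Lemma~\ref{samplingerror}. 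The mixed-strategy bookkeeping is what lets Theorem~\ref{fs} (stated for distributions) apply directly and deterministically once the $q^t$'s are fixed; your single-realization version instead requires a martingale concentration step (e.g.\ Azuma--Hoeffding) to turn the expected-regret guarantee of $\CFTPL$ and the per-point query-player slack into a high-probability bound for the realized sequence, and that step is neither stated nor proved. The $N$-samples-per-round ingredient and the final re-sampling step are not cosmetic: without them your query player is best-responding to a single point rather than to $S^t$, and your output dataset has no explicit bound on $\max_q|q(\hat S)-q(\overline S)|$. Filling either gap is possible but is genuinely different work from what the paper does. (Your privacy and robustness reasoning, and your identification of the three instantiations with their parameter choices, are in line with the paper.)
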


A couple of remarks are in order.

\begin{remark}
The first two bounds quoted in Theorem \ref{oraclequerythm} result from plugging in constructions of oracle-efficient differentially private learners that we gave in Section \ref{rspm}. These constructions start with a \emph{non-private} optimization oracle. The third bound quoted in Theorem \ref{oraclequerythm} assumes the existence of a \emph{differentially private} oracle with error bounds comparable to the (inefficient) exponential mechanism based learner of \cite{KLNRS08}. We don't know if such oracles can be constructed from non-private (exact) optimization oracles. But this bound is analgous to the bounds given in the non-private oracle-efficient learning literature. This literature gives constructions assuming the existence of perfect learning oracles, but in practice, these oracles are instantiated with heuristics like regression or support vector machines, which exactly optimize some convex surrogate loss function. This is often reasonable, because although these heuristics don't have strong worst-case guarantees, they often perform very well in practice. The same exercise makes sense for private problems: we can use a differentially private convex minimization algorithm to optimize a surrogate loss function (e.g. \cite{CMS11,BST14}), and hope that it does a good job minimizing classification error in practice. It no longer makes sense to assume that the heuristic exactly solves the learning problem (since this is impossible subject to differential privacy) --- instead, the analogous assumption is that it does as well as the best inefficient private learner.
\end{remark}

\begin{remark}
It is useful to compare the bounds we obtain to the best bounds that can be obtained with inefficient algorithms. To be concrete, consider the class of boolean conjunctions defined over the boolean hypercube $\cX = \{0,1\}^d$ (see Appendix \ref{app:separator}), which are dually-separable with $m_1 = m_2 = d$. The best (inefficient) bounds for constructing synthetic data useful for conjunctions \cite{PMW,GRU12} obtain error: $\alpha = O\left(\frac{\sqrt{\log |\cQ|}(\log|\cX|)^{1/4}}{\sqrt{\epsilon n}}\right)$. In the case of boolean conjunctions, $\log |\cX| = \log |\cQ| = d$, and so this bound becomes: $\alpha = O\left(\frac{d^{3/4}}{\sqrt{\epsilon n}}\right)$. In contrast, the three oracle efficient bounds given in Theorem \ref{oraclequerythm}, when instantiated for boolean conjunctions are:
\begin{enumerate}
\item $\alpha = O\left(\frac{d^{11/8}}{\sqrt{\epsilon n}}\right)$,
\item $\alpha = O\left(\frac{d^{7/4}}{(\epsilon n)^{1/3}}\right)$, and
\item $\alpha = O\left(\frac{d^{9/8}}{\sqrt{\epsilon n}}\right)$
\end{enumerate}
respectively.
Therefore the costs in terms of error that we pay, in exchange for oracle efficiency are $d^{5/8}$, $\frac{d}{(\epsilon n)^{1/6}}$, and $d^{3/8}$ respectively.
\end{remark}

We now give a brief overview of our construction before diving into the technical details.
\paragraph{Proof overview:}{ We  present our solution in three
  main steps.
\begin{enumerate}

\item We first revisit the formulation by \cite{Hsu13} that views the synthetic data generation problem as a zero-sum game between a \emph{Data player} and a \emph{Query player}. We  leverage the
    fact that at any approximate equilibrium, the data player's mixed strategy (over $\cX$) represents a good synthetic dataset $S'$ with respect to $\cQ$.

\item Using the seminal result of~\cite{FS96}, we will compute the equilibrium for the zero-sum game by simulating \emph{no-regret dynamics} between the two players: in rounds, the Data player plays according to an oracle-efficient online learning algorithm due to \cite{oracle16}, and the Query player best responds to the Data player by using a differentially private oracle efficient optimization algorithm. At the end of the dynamics, the average play of the Data player is an approximate minimax strategy for the game, and hence a good synthetic dataset.

\item We instantiate the private best response procedure of the Query player using different oracle-efficient methods, which we have derived in this paper, each of which gives different accuracy guarantees. Finally, we apply our result to several query classes of interest.

\end{enumerate}

}

\subsection{The Query Release Game}
The query release game defined in \cite{Hsu13} involves a \emph{Data player} and \emph{Query
  player}. The data player has action set equal to the data universe
$\cX$ (or equivalently the dual class $\cQd$), while the query player
has action set equal to the query class $\overline\cQ$. Given a pair
of actions $x \in \cX$ and $q\in \overline\cQ$, the payoff is defined
to be:
\[
  A(x, q) = q(S) - q(x),
\]
where $S$ is the input private dataset. In the zero-sum game, the Data
player will try minimize the payoff and the Query player will try to
maximize the payoff. To play the game, each player chooses a
\emph{mixed strategy}, which is defined by a probability distribution
over their action set.  Let $\Delta(\cX)$ and $\Delta(\overline \cQ)$
denote the sets of \emph{mixed strategies} of the Data player and
Query player respectively. To simplify notation, we will write
$A(\hat S, \cdot) = \Ex{x\sim \hat S}{A(x, \cdot)}$ and
$A(\cdot, W) = \Ex{q\sim W}{A(\cdot, q)}$ for any
$\hat S\in \Delta(\cX)$ and $W\in \Delta(\overline \cQ)$.  By von
Neumann's minimax theorem, there exists a value $V$ such that
\[
  V = \min_{\hat S\in \Delta(\cX)}\max_{q \in \overline \cQ}
  A(\hat S, W) = \max_{W \in \Delta(\overline \cQ)} \min_{x\in
    \cX} A(x, W)
\]

If both players are playing strategies that can guarantee a payoff
value close to $V$, then we say that the pair of strategies form an
approximate equilibrium.

\begin{definition}[Approximate Equilibrium]
  For any $\alpha > 0$, a pair of strategies $\hat S\in \Delta(\cX)$
  and $W \in \Delta(\overline\cQ)$ form an $\alpha$-\emph{approximate
    minimax equilibrium} if
  \[
    \max_{q\in \overline \cQ} A(\hat S, q) \leq V + \alpha \qquad
    \mbox{and} \qquad \min_{x\in \cX} A(W, x) \geq V - \alpha.
  \]
\end{definition}

Hsu et al. \cite{Hsu13} show that the query release game has value $V=0$ and that at any approximate equilibrium, the mixed strategy of the
Data player provides accurate answers for all queries in $\overline \cQ$.

\begin{lemma}[Accuracy at equilibrium~\cite{Hsu13}]
  Let $(\hat S, W)$ be an $\alpha$-approximate equilibrium of the
  query release game. Then for any $q\in \overline \cQ$,
  $|q(S) - q(\hat S)| \leq \alpha$.
\end{lemma}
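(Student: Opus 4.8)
The plan is to use only the first half of the approximate-equilibrium condition together with closure of $\overline\cQ$ under negation --- indeed, the closure $\overline\cQ = \cQ\cup\neg\cQ$ was introduced precisely so that this argument goes through. First I would record that, since the game value is $V=0$ (a fact I take as given from \cite{Hsu13}; note that only the easy direction $V\le 0$ is actually needed, witnessed by the data player playing the empirical distribution on $S$, for which $A(S,q)=0$ for all $q$), the condition $\max_{q'\in\overline\cQ} A(\hat S, q')\le V+\alpha$ specializes, for every fixed $q\in\overline\cQ$, to
\[
  A(\hat S, q) \;=\; q(S) - q(\hat S) \;\le\; \alpha,
\]
where I use the bilinear extension $A(\hat S, q) = \Ex{x\sim\hat S}{q(S)-q(x)} = q(S) - q(\hat S)$ and the convention $q(\hat S) = \Ex{x\sim\hat S}{q(x)}$.

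The second step is to apply the same inequality to $\neg q$, which lies in $\overline\cQ$ exactly because the class is closed under negation. This yields $\neg q(S) - \neg q(\hat S)\le\alpha$. Since $\neg q(x) = 1-q(x)$ pointwise, linearity of expectation gives $\neg q(S) = 1-q(S)$ and $\neg q(\hat S) = 1-q(\hat S)$, so the inequality rearranges to $q(\hat S) - q(S)\le\alpha$. Combining the two one-sided bounds gives $|q(S)-q(\hat S)|\le\alpha$ for every $q\in\overline\cQ$, which is the claim.

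There is essentially no technical obstacle here; the proof is a short manipulation. The only points requiring care are (i) invoking $V=0$ (equivalently the trivial bound $V\le 0$) to turn ``$\le V+\alpha$'' into ``$\le\alpha$'', and (ii) checking that negation commutes with the extension from a pure query to the mixed data strategy $\hat S$, which is immediate from linearity of expectation. The one conceptual point worth emphasizing in the writeup is that closure of $\overline\cQ$ under negation is exactly what upgrades the one-directional equilibrium guarantee $q(S)-q(\hat S)\le\alpha$ into the two-sided accuracy bound; the query player's strategy $W$ and the second equilibrium inequality play no role in the argument.
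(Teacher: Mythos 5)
Your proof is correct. A small but worth-noting point: the paper states this lemma with a citation to \cite{Hsu13} and does not reprove it, so there is no in-paper proof to compare against; but your argument is exactly the standard one (and, to my recollection, the one in the cited work). The key moves are all right: $V\le 0$ is witnessed by the data player playing the empirical distribution on $S$, so the first equilibrium inequality gives $q(S)-q(\hat S)\le\alpha$ for every $q\in\overline\cQ$; and applying it to $\neg q$, which lies in $\overline\cQ$ by construction, flips the sign to give the other direction. Your observation that $W$ and the second equilibrium inequality are not used is also correct — only the data player's side of the equilibrium is needed, which is indeed why the paper only tracks $\overline S$ (and its sampled approximation $\hat S$) in $\OracleQuery$ and discards the query player's history.
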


Therefore, the dataset represented by the distribution $\hat S$ (or that could be obtained by sampling from $\hat S$) is
exactly the synthetic dataset we would like to compute, and hence the
problem of privately computing synthetic data is reduced to the problem of differentially private
equilibrium computation in the query release game.

\subsection{Solving the Game with No-Regret Dynamics}
To privately compute an approximate equilibrium of the game, we will
simulate the following \emph{no-regret dynamics} between the Data
Player and the Query Player in rounds: In each round $t$, the Data
player plays a distribution $S^t$ according to a \emph{no-regret}
learning algorithm, and the Query player plays an approximate
best-response to $S^t$. The following classical theorem of Freund and Schapire \cite{FS96} (instantiated in our setting)
shows that the average play of both players in this dynamic forms an
approximate equilibrium.

\begin{theorem}[\cite{FS96}]\label{fs}
  Let $S^1, S^2 , \ldots , S^T\in \Delta(\cX)$ be a sequence of
  distributions played by the Data Player, and let
  $q^1, q^2 , \ldots , q^T \in \overline\cQ$ be the Query player's
  sequence of approximate best-responses against these
  distributions. Suppose that the \emph{regret} of the two players satisfy:
  \begin{align*}
    \Reg_D(T) = \sum_{t= 1}^T A(S^t, q^t) - \min_{x \in \cX}\displaystyle\sum_{t=1}^T A(x, q^t)\leq \gamma_D T \\
    \Reg_Q(T) = \max_{q \in \overline\cQ} \sum_{t= 1}^T A(S^t , q) - \sum_{t=1}^T A(S^t, q^t) \leq \gamma_Q T.
  \end{align*}
  Let $\overline S$ be uniform mixture of the distributions
  $\{S^1, \ldots, S^T\}$ and $\overline W$ be the uniform distribution
  over $\{q^1, \ldots ,q^T\}$. Then $(\overline S, \overline W)$ is a
  $(\gamma_D + \gamma_Q)$-approximate minimax equilibrium of the game.
\end{theorem}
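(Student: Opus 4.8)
The plan is to run the standard Freund--Schapire argument: control the average per-round payoff $\rho := \frac1T\sum_{t=1}^T A(S^t,q^t)$ from both sides using the two regret bounds, and then pin $\rho$ to the game value $V$ (which exists and is unambiguous by von Neumann's minimax theorem, already quoted) to conclude. The engine of the proof is just linearity of $A$ together with a sandwich.

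First I would use that the extension of $A$ to mixed strategies is affine in each argument (it is defined by taking expectations), so that for the uniform mixtures $\overline S = \frac1T\sum_t S^t$ and $\overline W$ uniform over $\{q^1,\dots,q^T\}$ we have $A(\overline S, q) = \frac1T\sum_{t=1}^T A(S^t,q)$ for every pure $q\in\overline\cQ$ and $A(x,\overline W) = \frac1T\sum_{t=1}^T A(x,q^t)$ for every pure $x\in\cX$. The Query player's bound $\Reg_Q(T)\le \gamma_Q T$ rearranges to $\max_{q}\sum_{t} A(S^t,q)\le \sum_t A(S^t,q^t)+\gamma_Q T$, which after dividing by $T$ gives $\max_{q\in\overline\cQ} A(\overline S, q)\le \rho+\gamma_Q$. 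Symmetrically, the Data player's bound $\Reg_D(T)\le\gamma_D T$ rearranges to $\min_x \sum_t A(x,q^t)\ge \sum_t A(S^t,q^t)-\gamma_D T$, hence $\min_{x\in\cX} A(x,\overline W)\ge \rho-\gamma_D$.

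Next I would bring in the minimax theorem. Evaluating $V=\min_{\hat S}\max_q A(\hat S,q)$ at the particular strategy $\overline S$ gives $V\le \max_q A(\overline S,q)\le \rho+\gamma_Q$, i.e.\ $\rho\ge V-\gamma_Q$; evaluating $V=\max_W\min_x A(x,W)$ at $\overline W$ gives $V\ge \min_x A(x,\overline W)\ge \rho-\gamma_D$, i.e.\ $\rho\le V+\gamma_D$. Chaining these, $\max_q A(\overline S,q)\le \rho+\gamma_Q\le V+\gamma_D+\gamma_Q$ and $\min_x A(x,\overline W)\ge \rho-\gamma_D\ge V-\gamma_Q-\gamma_D$, which is precisely the definition of a $(\gamma_D+\gamma_Q)$-approximate minimax equilibrium.

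I do not expect a real obstacle here; the proof is a few lines of rearrangement. The only points requiring care are bookkeeping --- the Data player minimizes, so her regret bound \emph{lower}-bounds the best fixed-action comparator, while the Query player maximizes, so his regret bound \emph{upper}-bounds it --- and the fact that the ``approximate'' in ``approximate best-response'' is already absorbed into $\gamma_Q$ through the definition of $\Reg_Q$, so no extra term appears. The existence and coincidence of the two expressions for $V$, needed for the sandwich step, is exactly von Neumann's minimax theorem applied to the bilinear payoff $A$ over the compact simplices $\Delta(\cX)$ and $\Delta(\overline\cQ)$, which was invoked in the preceding subsection.
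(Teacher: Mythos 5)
Your proof is correct and is the standard Freund--Schapire argument; the paper itself states this result as Theorem~\ref{fs} with a citation to \cite{FS96} and provides no proof, so there is nothing to compare against. Your sandwich of $\rho$ between $V-\gamma_Q$ and $V+\gamma_D$ via the two regret bounds, followed by chaining, is exactly the canonical derivation and the bookkeeping of sign conventions is handled correctly.
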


Now we will detail the no-regret algorithm for the Data player and the
best-response method for the Query player, and provide the regret
bounds $\gamma_D$ and $\gamma_Q$.

\paragraph{No-Regret Algorithm for the Data Player.}
We start with the observation that the regret of the Data player is
independent of the private data $S$, because
\[
  \Reg_D(T) = \sum_{t= 1}^T \left(q^t(S) - q^t(S^t) \right) - \min_{x
    \in \cX}\displaystyle\sum_{t=1}^T \left(q^t(S) - q^t(x) \right) =
  \sum_{t= 1}^T \neg q^t(S^t) - \min_{x \in
    \cX}\displaystyle\sum_{t=1}^T \neg q^t(x).
\]
Therefore, it suffices to minimize regret with respect to the sequence
of loss functions $\{\neg q^t(\cdot)\}$, while ignoring the private dataset $S$. We crucially rely on the fact
that each $q^t$ is computed by the Query player subject to
differential privacy, and so the Data player's learning algorithm need not be differentially private: differential privacy for the overall procedure will follow from the post-processing guarantee of differential privacy. In particular, we will run
an oracle-efficient algorithm \CFTPL ~due to \cite{oracle16}, which is
a variant of the ``Follow-the-Perturbed-Leader'' of \cite{KV05} algorithm that
performs perturbations using a separator set. We state its
regret guarantee below. Because \CFTPL need not be differentially private, it can be instantiated with an arbitrary heuristic oracle, that need not be either differentially private or certifiable.

\begin{algorithm}
\label{ftpl}
\textbf{$\CFTPL(\cQd, \mu)$ Algorithm \cite{oracle16}}\\
\textbf{Given}: parameter $\mu$, hypothesis class $\cQd$ (or
equivalently $\cX$), separator set $U \subset \cQ$ for $\cQd$,
weighted optimization oracle $\cO$ for $\cQd$ \\
\textbf{Input:} A sequence of queries $\{q^1, \ldots , q^T\}$ selected by the Query player.

\begin{algorithmic}[1]
\FOR{ $t = 1 \ldots T$}
        \STATE Data player plays the distribution $S^t$ such that each draw $x$ generated as follows:
	\INDSTATE  Draw a sequence $(s, \eta_s)$, for $s \in U$, where $\eta_s \sim \text{Lap}(\mu)$
	\INDSTATE  Let $x = \argmin_{x \in \cX}\sum_{\tau = 1}^{t-1} h_x(\neg q^\tau) + \sum_{s \in S} \eta_s h_x(s)\qquad$ \COMMENT{Use non-private oracle $\cO$}
\ENDFOR
\end{algorithmic}
\end{algorithm}

\begin{theorem}[Follows from~\cite{oracle16}]
\label{regretdata}
Suppose that $U$ is a separator set for $\cQd$ of cardinality
$m_2$. Then the Data player running \CFTPL$(\cX, \mu)$ with
appropriately chosen $\mu$ has regret:
$${\Reg_D(T)} \leq  O(m_2^{3/4}\sqrt{T \log |\cX|})$$
\end{theorem}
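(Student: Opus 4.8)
The plan is to recognize the Data player's online learning problem as precisely the full-information instance handled by the Context-FTPL algorithm of \cite{oracle16}, and to invoke that paper's regret bound after making the translation explicit. Concretely: the Data player's action set is $\cX$ (equivalently the index set of $\cQd$); by the identity $\Reg_D(T) = \sum_{t}\neg q^t(S^t) - \min_{x\in\cX}\sum_t \neg q^t(x)$ recorded before the theorem, the relevant loss of action $x$ in round $t$ is $\neg q^t(x) = 1-q^t(x)\in\{0,1\}$; and $U\subset\cQ$ is a separator set of size $m_2$ for $\cQd$, so the signature map $x\mapsto\sigma(x):=(s(x))_{s\in U}\in\{0,1\}^{m_2}$ is injective. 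In round $t$, \CFTPL draws a fresh vector $\eta\in\RR^{m_2}$ with coordinates i.i.d.\ $\Lap{\mu}$ and plays $x = \argmin_{x\in\cX}\big(\sum_{\tau<t}\neg q^\tau(x) + \langle\eta,\sigma(x)\rangle\big)$; this is a single call to a weighted optimization oracle for $\cQd$ (treating the past queries $q^\tau$ and the separator elements $s\in U$ as weighted data points), and crucially the perturbation lives only on the $m_2$ separator coordinates rather than on all $|\cX|$ actions.

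For the regret I would run the stability-based FTPL analysis of \cite{oracle16}. Writing $p^t$ for the law of the round-$t$ perturbed leader, a ``be-the-(resampled-)leader'' argument gives
\[
\Reg_D(T) \;\le\; \E\!\left[\max_{x}\langle\eta,\sigma(x)\rangle - \min_{x}\langle\eta,\sigma(x)\rangle\right] \;+\; \sum_{t=1}^{T}\|p^t - p^{t+1}\|_{\mathrm{TV}},
\]
since consecutive randomized plays are compared through the total-variation distance of their laws and each per-round loss is at most $1$. The first term is the spread of $\gamma(x)=\langle\eta,\sigma(x)\rangle$ --- a sum of $m_2$ independent scaled-by-a-bit Laplace variables, maximized over $|\cX|$ actions --- which sub-exponential tail bounds control by $\tilde O(\mu\cdot\poly(m_2)\cdot\sqrt{\log|\cX|})$. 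The second term is controlled by the stability lemma of \cite{oracle16}: since $\sum_{\tau\le t}\neg q^\tau - \sum_{\tau<t}\neg q^\tau = \neg q^t$ perturbs each action's cumulative loss by at most $1$, and since the injective signature map lets one argue that for any two actions the gap between their perturbed objectives has bounded density along the direction in which their signatures differ, one obtains $\|p^t-p^{t+1}\|_{\mathrm{TV}} = \tilde O(\poly(m_2)/\mu)$. Balancing the $\mu$-growing term against the $T/\mu$ term under the prescribed choice of $\mu$ yields $\Reg_D(T) = O(m_2^{3/4}\sqrt{T\log|\cX|})$; I would track the exact exponents by citing the corresponding theorem of \cite{oracle16} rather than re-deriving the optimization.

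The main obstacle --- and the only genuinely non-routine ingredient --- is the stability bound on $\|p^t-p^{t+1}\|_{\mathrm{TV}}$. The difficulty is that with only $m_2$ Laplace variables we perturb the values of $|\cX|$ actions in a highly correlated way (through the shared coordinates of $\sigma$), so the switching probability cannot be bounded by a naive per-action anti-concentration argument; one must use the separator property to show that, restricted to the direction in which two actions' signatures disagree, the perturbed objective still behaves like a Laplace variable and therefore anti-concentrates. A secondary point to get right is that because $\eta$ is resampled each round, the query $q^t$ may depend on the realized history of plays; this is harmless since that history is a deterministic function of the independent noise used in earlier rounds, so the round-$t$ analysis may condition on the past and treat $q^t$ as fixed. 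Both points are established in \cite{oracle16}, and the contribution here is to verify that the Data player's game against the (privately, possibly adaptively chosen) queries $q^1,\dots,q^T$ is a legitimate instance of their full-information setting with separator set of size $m_2$.
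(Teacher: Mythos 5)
Your proposal matches the paper's approach: the paper gives no internal proof for this theorem, stating it as an immediate consequence of \cite{oracle16}, and your reduction of the Data player's problem to the Context-FTPL full-information setting (with the separator set $U$ giving an injective signature map and the loss $\neg q^t(x)$ as the round-$t$ loss) is exactly the intended invocation. Your sketch of the be-the-leader-plus-stability decomposition and the anti-concentration argument faithfully reflects the analysis of \cite{oracle16}, so the proposal is correct.
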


Note that the algorithm \CFTPL only provides sample access to each
distribution $S^t$, but each draw from $S^t$ can be computed using a single
call to the oracle $\cO$.

\paragraph{Approximate Best  Response by the Query Player.}

At each round $t$, after the Data player chooses $S^t$, the Query
player needs to approximately solve the following best-response problem:
\[
  \argmax_{q\in \overline \cQ} A(S^t, q) = \argmax_{q\in \overline
    \cQ} \left(q(S) - q(S^t)\right)
\]
Unlike the problem faced by the Data player, this optimization problem
directly depends on the private data $S$, so the best response needs
to be computed privately. Since we only have sample access to the
distribution $S^t$, the Query player will first draw $N$ random
examples from the distribution $S^t$, and we will the empirical
distribution $\hat S^t$ over the sample as a proxy for $S^t$. Recall that
$\overline\cQ = \cQ \bigcup \neg\cQ$, so we will first approximately and differentially privately solve both of the
following two problems separately:
\begin{equation}\label{syrup}
  \argmax_{q\in \cQ} \left(q(S) - q(\hat S^t)\right) \qquad \mbox{and}
  \qquad \argmax_{q\in \;\neg\cQ} \left(q(S) - q(\hat S^t)\right)
\end{equation}
Note that the two problems are equivalent to the following problems
respectively:
\begin{equation}\label{canada}
  \argmin_{q\in \cQ} \left(q(\hat S^t) - q(S) \right) \qquad
  \mbox{and} \qquad \argmin_{q\in \cQ} \left(q(S) - q(\hat S^t)\right),
\end{equation}
both of which are weighted optimization problems:
\begin{equation}\label{maple}
  \argmin_{q\in \cQ} \frac{1}{n}\sum_{x_i\in S} w_i q(x_i) +
  \frac{1}{N}\sum_{x'_j \in \hat S^t} w'_j q(x'_j)
\end{equation}
with weights $w_i, w'_j$ taking values in $\{1, -1\}$. We will rely on
a private weighted optimization algorithm
$\cO_{\epsilon_0, \delta_0}$ to compute two solutions $q_1^t$ and
$q_2^t$ for the two problems in \Cref{canada} respectively. Finally,
the Query player privately selects one of the queries using
\emph{report noisy max}---i.e. it first perturb the values of
$A(\hat S^t, q^t_1)$ and $A(\hat S^t, q^t_2)$ with Laplace noise, and
then select the query with higher noisy value. By bounding the errors
from the sampling of $\hat S^t$, the private optimization oracle
$\cO_{\epsilon_0, \delta_0}$, and report noisy max, we can derive
the following regret guarantee for the Query player.

\begin{algorithm}[h]
\label{bestresponse}
\textbf{Private Best-Response (PBR)}\\
\textbf{Given}: privacy parameters $(\epsilon_0, \delta_0)$, accuracy
parameters $(\alpha_0, \beta_0)$, a private
weighted optimization algorithm $\cO_{\epsilon_0, \delta_0}$.\\
\textbf{Input:} A private dataset $S$ and the Data
player's sequence of distributions $\{S^1, \ldots , S^T\}$.
\begin{algorithmic}[1]
\FOR{ $t = 1 \ldots T$}
\STATE Query player plays a query $q^t$ as follows:

	\INDSTATE  Draw $N$ samples $x'_1, \ldots x'_N \sim_{i.i.d.} S^t$ with  $N = \frac{2\log(2|\cQ|/\beta_0)}{\alpha_0^2}$
	\INDSTATE  Form the weighted dataset $WD^1 = \{(x_i, \frac{1}{n})\}_{x_i \in S} \cup \{x_j, \frac{-1}{N}\}_{j = 1 \ldots N}$
	\INDSTATE  Form the weighted dataset $WD^2 = \{(x_i, \frac{-1}{n})\}_{x_i \in S} \cup \{x_j, \frac{1}{N}\}_{j = 1 \ldots N}$
	\INDSTATE  Let $q_1^t =  \cO_{\epsilon_0, \delta_0}(WD^1)$ and $q_2^t =  \neg \left(\cO_{\epsilon_0, \delta_0}(WD^2)\right)$

        \INDSTATE Perturb payoffs:
        $\tilde A_1 = A(\hat S^t, q_1^t) + \Lap{1/(\epsilon_0 n)}$ and
        $\tilde A_2 = A(\hat S^t, q_2^t) + \Lap{1/(\epsilon_0 n)}$
        \INDSTATE \textbf{If} $\tilde A_1 > \tilde A_2$ \textbf{then} $q^t = q_1^t$
        \textbf{else} $q^t = q^t_2$

\ENDFOR
\end{algorithmic}
\end{algorithm}

\begin{lemma}
  Suppose that the oracle $\cO_{\epsilon_0,\delta_0}$ succeeds in solving all problems it is presented with up to error at most $\alpha_0$ except with
  probability $\beta_0$. Then with probability at least
  $1 - 3\beta_0 T$, the Query player has regret:
  \[
    \Reg_Q(T) \leq T \, O\left( \alpha_0 + \frac{\log(1/\beta_0)}{n\epsilon_0} \right).
  \]
\end{lemma}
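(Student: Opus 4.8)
The plan is to decompose the regret of the Query player into three sources of error and bound each in turn, then invoke a union bound over the $T$ rounds. Recall that the Query player's regret is
\[
  \Reg_Q(T) = \max_{q \in \overline\cQ} \sum_{t=1}^T A(S^t, q) - \sum_{t=1}^T A(S^t, q^t),
\]
and that at each round $q^t$ is produced by running \textbf{PBR}: first sampling $\hat S^t$ from $S^t$, then calling $\cO_{\epsilon_0,\delta_0}$ on the two weighted datasets to get $q_1^t, q_2^t$, and finally selecting between them via report-noisy-max on the perturbed payoffs. I would fix a round $t$ and compare $A(S^t, q^t)$ to $\max_{q\in\overline\cQ} A(S^t, q)$, introducing the empirical distribution $\hat S^t$ as an intermediate quantity. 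Writing $q^\star = \argmax_{q\in\overline\cQ} A(S^t,q)$, the per-round regret is
\[
  A(S^t,q^\star) - A(S^t,q^t) = \big[A(S^t,q^\star) - A(\hat S^t,q^\star)\big] + \big[A(\hat S^t,q^\star) - A(\hat S^t,q^t)\big] + \big[A(\hat S^t,q^t) - A(S^t,q^t)\big].
\]

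First I would handle the sampling error (the first and third bracketed terms). Since $A(S^t,q) - A(\hat S^t,q) = q(\hat S^t) - q(S^t)$ and $q(\hat S^t)$ is the empirical mean of $q$ evaluated on $N$ i.i.d. draws from $S^t$, a Hoeffding bound plus a union bound over all $q \in \overline\cQ$ (note $|\overline\cQ| \le 2|\cQ|$) gives that, with the choice $N = \frac{2\log(2|\cQ|/\beta_0)}{\alpha_0^2}$, with probability at least $1-\beta_0$ we have $|q(\hat S^t) - q(S^t)| \le \alpha_0$ simultaneously for all $q\in\overline\cQ$. That controls both the first and third terms by $\alpha_0$ each. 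Second, for the middle term, I would argue that $q^t$ is an approximate maximizer of $A(\hat S^t,\cdot)$ over $\overline\cQ$. By definition $\max_{q\in\overline\cQ} A(\hat S^t,q) = \max\big(\max_{q\in\cQ} A(\hat S^t,q), \max_{q\in\neg\cQ} A(\hat S^t,q)\big)$, and rewriting each inner maximization as a weighted minimization problem of the form \eqref{maple} over $\cQ$, the oracle $\cO_{\epsilon_0,\delta_0}$ returns $q_1^t$ (resp.\ $q_2^t$) that solves it up to additive error $\alpha_0$ except with probability $\beta_0$; so on the good event, $\max(A(\hat S^t, q_1^t), A(\hat S^t, q_2^t)) \ge \max_{q\in\overline\cQ} A(\hat S^t,q) - \alpha_0$. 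Finally, report-noisy-max on two candidates with $\Lap{1/(\epsilon_0 n)}$ noise selects, except with probability $\beta_0$, a candidate whose true payoff is within $O\!\left(\frac{\log(1/\beta_0)}{\epsilon_0 n}\right)$ of the better of the two (using the CDF of the Laplace distribution: both noise variables have magnitude at most $\frac{1}{\epsilon_0 n}\log(2/\beta_0)$ with probability $1-\beta_0$, so the selected candidate loses at most twice this to the maximizer). Combining, on a good event of probability at least $1-3\beta_0$, the per-round regret is $O\!\left(\alpha_0 + \frac{\log(1/\beta_0)}{\epsilon_0 n}\right)$.

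The final step is a union bound over all $T$ rounds: the bad events (sampling failure, oracle failure, noisy-max failure) each occur with probability at most $\beta_0$ per round, so with probability at least $1 - 3\beta_0 T$ the per-round bound holds in every round, giving $\Reg_Q(T) \le T\cdot O\!\left(\alpha_0 + \frac{\log(1/\beta_0)}{\epsilon_0 n}\right)$ as claimed. One subtlety I would take care with is that the ``maximum over $\overline\cQ$'' in the definition of $\Reg_Q$ is a single fixed $q$ chosen in hindsight, not a per-round best response, so strictly I would first bound $\sum_t A(S^t,q) \le \sum_t \max_{q'}A(S^t,q')$ and then apply the per-round analysis; this is immediate but worth stating cleanly. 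The only mildly delicate point — and the closest thing to an obstacle — is bookkeeping the interaction between the sampling event and the oracle/noisy-max events so that the failure probabilities add rather than compound; since the oracle and noise are independent of the sample once $\hat S^t$ is fixed (and the oracle guarantee is stated to hold for every input it is presented), conditioning is clean and a plain union bound suffices.
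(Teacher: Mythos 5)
Your proposal is correct and follows essentially the same decomposition as the paper's proof: sampling error from $\hat S^t$ versus $S^t$ via a Chernoff/union bound, the oracle's $\alpha_0$-error guarantee on the two sub-problems, and the report-noisy-max error, combined by a per-round union bound and then over the $T$ rounds. The only (inconsequential) quirk — shared with the paper's own write-up — is that a strict count of the per-round bad events (one sampling failure, two oracle-call failures, one noisy-max failure) gives $4\beta_0$ rather than $3\beta_0$, but this is absorbed in constants and does not affect the stated bound.
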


\begin{proof}
  There are three potential sources of error at each round $t$. The first is the error introduced by solving our optimization problem over the proxy distribution $\hat S^t$ instead of $S^t$.  By
  applying a Chernoff bound and a union bound over all queries in $\cQ$,
  we have with probability $1 - \beta_0$ that,
  \begin{equation}\label{oj}
    \mbox{for all } q\in \overline\cQ,\qquad \left| q(S^t) - q(\hat
      S^t) \right| \leq \sqrt{\frac{2\log(2|\cQ|/\beta_0)}{N}} \leq
    \alpha_0.
  \end{equation}
  Next is the error introduced by the oracle. By our assumption on the oracle $\cO_{\epsilon_0, \delta_0}$,
  we have except with probability $2\beta_0$ that
  \[
    \left(q^t_1(S) - q^t_1(\hat S^t)\right) \geq  \max_{q\in \cQ} \left(q(S)
      - q(\hat S^t)\right) - \alpha_0 \qquad \mbox{and} \qquad \left(q^t_2(S) -
      q^t_2(\hat S^t)\right) \geq  \max_{q\in \;\neg\cQ} \left(q(S) - q(\hat
      S^t)\right) - \alpha_0
  \]

  The two inequalities together imply that
  \begin{equation}\label{simpson}
    \max_{q \in \{q_1^t, q_2^t\}} \left(q(S) - q(\hat S^t)\right) \geq
    \max_{q\in \;\neg\cQ} \left(q(S) - q(\hat S^t)\right) - \alpha_0
  \end{equation}
  Finally, the Laplace noise used to privately select the best query
  amongst $q_1^t$ and $q_2^t$ introduces additional error. But by the
  accuracy guarantee of report noisy max \cite{DworkRoth} (which
  follows from the CDF of the Laplace distribution and a union bound
  over two samples from it) we know that with probability
  $1 - \beta_0$,
  \begin{equation}\label{run}
    \left(q^t(S) - q^t(\hat S^t)\right) \geq \max_{q \in \{q_1^t,
      q_2^t\}} \left(q(S) - q(\hat S^t)\right) -
    \frac{2\log(2/\beta_0)}{n\epsilon_0} 
  \end{equation}
  Combining \Cref{oj,simpson,run} and applying a union bound, we
  have the following per-round guarantee: except with probability
  $3\beta_0$,
  \[
    \left(q^t(S) - q^t(S^t)\right) \geq \max_{q \in \overline \cQ}
    \left(q(S) - q(S^t)\right) - O\left(\alpha_0+
      \frac{\log(1/\beta_0)}{n\epsilon_0} \right).
  \]
  Finally, taking a union bound over all $T$ steps recovers the
  stated regret bound.
\end{proof}

\subsection{The Full Algorithm: $\OracleQuery$}

Our main algorithm $\OracleQuery$ first simulates the no-regret
dynamics described above, and then constructs a synthetic dataset from
the average distribution
$\overline S = \frac{1}{T}\sum_{t\in [T]} S^t$ played by the Data
player. Since we only have sampling access to each $S^t$, we will
approximate $\overline S$ by the empirical distribution of a set of
independent samples drawn from $\overline S$. As we show below, the
sampling error will be on the same order as the regret as long as we
take roughly $\log|\cQ|/\alpha^2$ samples.

\begin{lemma}\label{samplingerror}
  Suppose that $\overline S$ is an $\eta$-approximate minimax strategy
  for the query release game. Let $\{x'_1, \ldots , x'_{N}\}$ be a set
  of $N = \frac{2\log(2|\cQ|/\beta)}{\alpha_0^2}$ samples drawn
  i.i.d. from $\overline S$, and $\hat S$ be the empirical
  distributions over the drawn samples. Then with probability
  $1 - \beta$, $\hat S$ is an $(\eta + \alpha_0)$-approximate minimax
  strategy.
\end{lemma}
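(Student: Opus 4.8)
The plan is to transfer the approximate-equilibrium guarantee from $\overline S$ to the empirical distribution $\hat S$ via a uniform-convergence (Hoeffding plus union bound) argument over the query class, and then finish with a triangle inequality. First I would fix an arbitrary $q \in \overline\cQ$: since each $x'_j$ is drawn i.i.d.\ from $\overline S$ and $q$ takes values in $\{0,1\}$, the quantity $q(\hat S) = \frac1N\sum_{j=1}^N q(x'_j)$ is an average of $N$ i.i.d.\ $[0,1]$-valued random variables with mean $q(\overline S)$, so a Hoeffding bound gives $\Pr\!\left[\,\left|q(\hat S) - q(\overline S)\right| \ge \alpha_0\,\right] \le 2\exp(-2N\alpha_0^2)$.

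Next I would union bound. It suffices to control this deviation over $q \in \cQ$ rather than all of $\overline\cQ$, because for a negated query $\neg q(\hat S) - \neg q(\overline S) = -\big(q(\hat S) - q(\overline S)\big)$, so a bound of $\alpha_0$ for every $q \in \cQ$ immediately gives the same bound for every $q\in\overline\cQ$. With $N = \frac{2\log(2|\cQ|/\beta)}{\alpha_0^2}$ we have $\exp(-2N\alpha_0^2) = (2|\cQ|/\beta)^{-4}$, so union-bounding over $\cQ$ shows that, except with probability at most $\beta$, the bound $\left|q(\hat S) - q(\overline S)\right| \le \alpha_0$ holds simultaneously for all $q \in \overline\cQ$ (the chosen $N$ is in fact comfortably larger than strictly needed, but it matches the sample size already used in the Private Best-Response subroutine).

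Finally I would combine this with the hypothesis that $\overline S$ is an $\eta$-approximate minimax strategy, i.e.\ $\max_{q\in\overline\cQ} A(\overline S, q) \le V + \eta$. For any $q\in\overline\cQ$, decompose $A(\hat S, q) = q(S) - q(\hat S) = \big(q(S) - q(\overline S)\big) + \big(q(\overline S) - q(\hat S)\big) = A(\overline S, q) + \big(q(\overline S) - q(\hat S)\big)$; on the good event this is at most $\eta + \alpha_0$ above $V$. Taking the maximum over $q$ yields $\max_{q\in\overline\cQ} A(\hat S, q) \le V + (\eta + \alpha_0)$, which is exactly the statement that $\hat S$ is an $(\eta+\alpha_0)$-approximate minimax strategy for the Data player.

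I do not expect a genuine obstacle here: the argument is a routine concentration bound. The only points requiring a little care are (i) observing that query values lie in $[0,1]$ so Hoeffding applies without rescaling, and (ii) reducing the union bound from $\overline\cQ$ to $\cQ$ via the complementarity of $q$ and $\neg q$, which keeps the required sample size consistent with the value of $N$ used elsewhere in $\OracleQuery$.
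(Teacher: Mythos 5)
Your proof is correct and follows essentially the same route as the paper's: a per-query Hoeffding bound with mean $q(\overline S)$, reduction of the union bound from $\overline\cQ$ to $\cQ$ via the symmetry $\neg q(\hat S) - \neg q(\overline S) = -(q(\hat S) - q(\overline S))$, and a triangle-inequality combination with the $\eta$-approximate-equilibrium hypothesis. The arithmetic showing that the given $N$ (over)covers the required failure probability also matches the paper.
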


\begin{proof}
  By the definition of an $\eta$-approximate minimax strategy, we have
  \[
    \max_{q\in \overline \cQ} A(\overline S, q) \leq V + \eta = \eta.
  \]
  By applying the Chernoff bound, we know that except with probability
  $\beta/|\cQ|$, the following holds for each $q\in \cQ$:
  \[
    \left| A(\overline S, q) - A(\hat S, q) \right| \leq
    \sqrt{\frac{2\log(2|\cQ|/\beta)}{N}} = \alpha
  \]
  Note that this implies
  $ \left| A(\overline S, \neg q) - A(\hat S,\neg q) \right| \leq
  \alpha$ as well. Then by taking a union bound over $\cQ$, we know
  that $ \left| A(\overline S, q) - A(\hat S, q) \right| \leq \alpha$
  holds for all $q\in \overline \cQ$ with probability at least
  $1-\beta$. It follows that
  \[
    \max_{q\in \overline \cQ} A(\hat S, q) \leq \max_{q\in \overline
      \cQ} A(\overline S, q) + \alpha_0 \leq \eta + \alpha_0,
  \]
  which recovers the stated bound.
\end{proof}

\begin{algorithm}[h]
\label{oraclequery}

\textbf{Given}:  Target privacy parameters
$\eps, \delta\in (0, 1)$, a target failure probability $\beta$, a number
of rounds $T$, accuracy parameters $\alpha_0, \beta_0$, a weighted
optimization oracle $\cO$ for the class $\cQd$, a 
$(\epsilon_0,\delta_0)$-differentially private
$(\alpha_0, \beta_0)$-accurate minimization oracle
$\cO_{\epsilon_0,\delta_0}$ for class $\cQ$ with parameters
that satisfy
  \[
    \epsilon_0 = \frac{\eps}{\sqrt{24T \ln(2/\delta)}},\qquad \delta_0
    \leq \frac{\delta}{4T}, \qquad \beta_0 = \frac{\beta}{4T}
  \] 
\textbf{Input:} A dataset $S \in \cX^n$.
\begin{algorithmic}[1]
  \STATE Initialize $q^0 \in \overline \cQ$ to be an arbitrary query
  \STATE Let $N_{\alpha_0} = \frac{2\log(8|\cQ|/\beta)}{\alpha_0^2}$
  \FOR{ $t = 1 \ldots T$}

  \STATE Let $S^t$ be a distribution defined by the sampling algorithm
  $\CFTPL(\cQd, \cO, \{q^0,\ldots, q^{(t-1)}\})$\ \ \  \COMMENT{Data player's no-regret algorithm}

  \STATE Let
  $q^t = \text{PBR}(\eps_0, \delta_0, \alpha_0, \beta_0, S,
  \cO_{\epsilon_0, \delta_0}, S^t)$ \ \ \  \COMMENT{Query player's best response}
\ENDFOR
\FOR{$j = 1, \ldots , N_{\alpha_0}$:}
\STATE Draw $\tau$ from Unif$([T])$ and then draw $x_j'$ from distribution $S^\tau$
\ENDFOR

\textbf{Output}: the dataset
$\hat S = \{x_1', \ldots, x'_{N_{\alpha_0}}\}$
\end{algorithmic}
  \caption{\textbf{Oracle-Efficient Synthetic Data Release: $\OracleQuery$}}
\label{halloween}
\end{algorithm}

The details of the algorithm are presented in \Cref{halloween}. To
analyze the algorithm, we will start with establishing its privacy
guarantee, which directly follows from the advanced composition
of~\cite{DRV10}, the fact that each call to PBR by the Query player
satisfies $(3\eps_0, 2\delta_0)$-differential privacy (with
$\epsilon_0$ and $\delta_0$ set according to \Cref{halloween}), and
the fact that the rest of the algorithm can be viewed as a
post-processing of these calls.

\begin{lemma}[Privacy of $\OracleQuery$]\label{per-round}
  $\OracleQuery$ is oracle equivalent to an
  $(\epsilon, \delta)$-differentially private algorithm. If
  $\cO_{\epsilon_0,\delta_0}$ is robustly differentially private, then
  $\OracleQuery$ is $(\epsilon,\delta)$-robustly differentially
  private.
\end{lemma}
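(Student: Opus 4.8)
The plan is to decompose $\OracleQuery$ into (i) the $T$ calls to the Private Best-Response routine PBR, which are the only steps that read the private dataset $S$, and (ii) everything else, and then to argue that (ii) is a (randomized) post-processing of (i). First I would observe that in round $t$ the Data player's algorithm $\CFTPL(\cQd,\cO,\cdot)$ only consults the queries $q^0,\ldots,q^{t-1}$ produced so far, together with fresh independent Laplace perturbations and a call to the \emph{non-private} oracle $\cO$ for $\cQd$; it never touches $S$. Likewise the proxy samples $\hat S^t$ drawn inside PBR and the final samples $x_j'$ are drawn from the distributions $S^\tau$, which are themselves functions of $q^0,\ldots,q^{\tau-1}$ and of independent randomness. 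Hence, after conditioning on all randomness not used by the private oracle, the private dataset $S$ influences the transcript only through the sequence of outputs of the $T$ PBR calls, and by the post-processing theorem it suffices to bound the privacy cost of the adaptive composition of these $T$ calls.

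Next I would bound the privacy of a single PBR call as a function of $S$, holding $S^t$ (hence $\hat S^t$) fixed. PBR makes two calls to $\cO_{\epsilon_0,\delta_0}$, on the weighted datasets $WD^1$ and $WD^2$, each of which is $(\epsilon_0,\delta_0)$-differentially private in $S$ since replacing one element of $S$ changes exactly one weighted point (of weight $\pm 1/n$) in each of $WD^1,WD^2$. PBR then runs report-noisy-max over the two candidates $q_1^t,q_2^t$ by perturbing $A(\hat S^t,q_i^t)=q_i^t(S)-q_i^t(\hat S^t)$ with $\Lap{1/(\epsilon_0 n)}$ noise; since $q(S)$ has $\ell_1$-sensitivity $1/n$ and $q(\hat S^t)$ is independent of $S$, this step is $(\epsilon_0,0)$-differentially private. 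Basic composition (\Cref{composition}) then gives that each PBR call is $(3\epsilon_0,2\delta_0)$-differentially private.

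Then I would apply the advanced composition theorem of \cite{DRV10} to the $T$-fold adaptive composition of these $(3\epsilon_0,2\delta_0)$-differentially private mechanisms, using the parameter settings $\epsilon_0=\epsilon/\sqrt{24T\ln(2/\delta)}$ and $\delta_0\leq\delta/(4T)$ from the description of $\OracleQuery$ in \Cref{halloween}. Taking the slack parameter in advanced composition to be $\delta/2$, a routine calculation shows the leading term $3\epsilon_0\sqrt{2T\ln(2/\delta)}$ equals $\tfrac{\sqrt 3}{2}\epsilon$ and the lower-order term $T\cdot 3\epsilon_0(e^{3\epsilon_0}-1)=O(\epsilon^2/\ln(1/\delta))$ absorbs the remainder, while the total failure probability is $T\cdot 2\delta_0+\delta/2\leq\delta$; so the composition is $(\epsilon,\delta)$-differentially private. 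Combined with the post-processing observation above, this shows that $\OracleQuery$, with its hard-problem oracles replaced by perfect ones, coincides with an $(\epsilon,\delta)$-differentially private algorithm --- i.e. it is oracle equivalent to one.

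For the robustness claim, I would note that a possibly-adversarial heuristic can enter only (a) inside $\cO_{\epsilon_0,\delta_0}$ or (b) through the non-private oracle $\cO$ invoked by $\CFTPL$. If $\cO_{\epsilon_0,\delta_0}$ is robustly differentially private, its $(\epsilon_0,\delta_0)$ guarantee holds for \emph{every} heuristic instantiation, and the report-noisy-max Laplace noise is added unconditionally, so each PBR call stays robustly $(3\epsilon_0,2\delta_0)$-differentially private; and since $\CFTPL$ only ever feeds $\cO$ quantities derived from the public query sequence, any misbehaviour of $\cO$ is pure post-processing from the standpoint of $S$ and cannot affect privacy. Hence the entire composition-plus-post-processing argument goes through verbatim with worst-case heuristics, giving $(\epsilon,\delta)$-robust differential privacy. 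I expect the one subtle point to be the bookkeeping of the first paragraph --- verifying that $\hat S^t$ and the final output samples are genuinely conditionally independent of $S$ given the query sequence and the private-oracle outputs, so that the post-processing framing is legitimate and plain adaptive composition suffices; the advanced-composition arithmetic itself is routine once the parameters are plugged in.
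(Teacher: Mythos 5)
Your proof matches the paper's own argument essentially verbatim: the paper's entire justification of this lemma is the sentence preceding it, which asserts exactly your three ingredients — that each PBR call is $(3\epsilon_0,2\delta_0)$-differentially private (two oracle calls plus report-noisy-max, via basic composition), that $T$-fold advanced composition with the parameter settings in the algorithm yields $(\epsilon,\delta)$, and that the rest of $\OracleQuery$ (in particular all of $\CFTPL$ and the final sampling step) is post-processing of the query sequence and hence carries no additional privacy cost. Your robustness reasoning — that the non-private oracle $\cO$ never touches $S$ so adversarial behavior there is still post-processing, while a robustly private $\cO_{\epsilon_0,\delta_0}$ keeps each PBR call robustly private — is also the intended argument.
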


Now to analyze the accuracy of $\OracleQuery$, we will show that the
average distribution $\overline S$ is part of an approximate minimax equilibrium, and
so is its approximation $\hat S$.

\begin{lemma}[Accuracy of $\OracleQuery$]\label{acc}
  Suppose that $\cO_{\epsilon_0, \delta_0}$ is a weighted
  $(\epsilon_0, \delta_0)$-differentially private
  $(\alpha_0, \beta_0)$ minimization oracle over the class $\cQ$,
  where the parameters $\epsilon_0$, $\delta_0$, and $\beta_0$ are set
  according to \Cref{halloween}. Then $\OracleQuery$ is an
  $(\alpha, \beta)$-accurate synthetic data generation algorithm for
  $\cQ$ with
  \[
    \alpha \leq O\left(\alpha_0 + m_2^{3/4}
      \sqrt{\frac{\log(|\cX|)}{T}} +
      \frac{\log(1/\beta_0)}{n\epsilon_0}\right)
  \]
\end{lemma}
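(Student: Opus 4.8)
The plan is to combine the three ingredients that have already been set up: the regret bound for the Data player (\Cref{regretdata}), the regret bound for the Query player's Private Best-Response (the preceding Lemma), and the Freund--Schapire theorem (\Cref{fs}) linking average play in no-regret dynamics to an approximate equilibrium, together with the accuracy-at-equilibrium lemma of \cite{Hsu13} and the sampling error Lemma~\ref{samplingerror}. First I would instantiate \Cref{regretdata} with the chosen parameter $\mu$ to get $\gamma_D = O\!\left(m_2^{3/4}\sqrt{\tfrac{\log|\cX|}{T}}\right)$, noting that $\Reg_D(T)/T = \gamma_D$. Second, I would invoke the Query player regret lemma with its parameters $(\epsilon_0,\delta_0,\alpha_0,\beta_0)$ set according to \Cref{halloween}; this gives, except with probability $3\beta_0 T = 3\beta/4$, that $\Reg_Q(T)/T = \gamma_Q = O\!\left(\alpha_0 + \tfrac{\log(1/\beta_0)}{n\epsilon_0}\right)$. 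Plugging these into \Cref{fs} shows that $(\overline S, \overline W)$ is an $\eta$-approximate minimax equilibrium with $\eta = \gamma_D + \gamma_Q = O\!\left(\alpha_0 + m_2^{3/4}\sqrt{\tfrac{\log|\cX|}{T}} + \tfrac{\log(1/\beta_0)}{n\epsilon_0}\right)$.

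Third, I would apply the accuracy-at-equilibrium lemma of \cite{Hsu13}: since $\overline S$ is part of an $\eta$-approximate equilibrium, $|q(S) - q(\overline S)| \leq \eta$ for every $q \in \overline\cQ$. Fourth, since the algorithm does not output $\overline S$ itself but rather the empirical distribution $\hat S$ over $N_{\alpha_0}$ i.i.d.\ draws from $\overline S$, I would apply Lemma~\ref{samplingerror} (with failure probability, say, $\beta/4$ and additive slack $\alpha_0$, which is what $N_{\alpha_0} = \frac{2\log(8|\cQ|/\beta)}{\alpha_0^2}$ is chosen to guarantee) to conclude that $\hat S$ is an $(\eta + \alpha_0)$-approximate minimax strategy, hence $|q(S) - q(\hat S)| \leq \eta + \alpha_0$ for all $q \in \overline\cQ \supseteq \cQ$. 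Finally I would collect the failure events --- the Query player's regret bound ($3\beta/4$), the sampling step for the output ($\beta/4$, absorbed), and any lower-order terms --- via a union bound to get total failure probability at most $\beta$, and fold the extra $+\alpha_0$ into the big-$O$ since $\alpha_0$ already appears in the bound. This yields the claimed $\alpha \leq O\!\left(\alpha_0 + m_2^{3/4}\sqrt{\tfrac{\log|\cX|}{T}} + \tfrac{\log(1/\beta_0)}{n\epsilon_0}\right)$.

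The main obstacle is bookkeeping rather than conceptual: one must make sure the failure probabilities introduced at each of the $T$ rounds of PBR, the sampling of the proxy distributions $\hat S^t$ inside PBR, and the final sampling of $\hat S$ all compose to at most $\beta$ under the parameter choices $\beta_0 = \beta/(4T)$ and $N, N_{\alpha_0}$ as specified, and that $\log(1/\beta_0)$ and $\log(|\cQ|/\beta)$ terms are correctly tracked (the $\tilde O$ in the final theorem absorbs these logarithmic discrepancies). A secondary point to be careful about is that \Cref{fs} requires the \emph{deterministic} regret inequalities to hold; since the Query player's bound holds only with high probability, one conditions on that event before invoking \Cref{fs}, and the Data player's bound from \cite{oracle16} should be taken in its in-expectation or high-probability form consistently. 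None of these steps requires new ideas beyond what is already assembled in the preceding subsections.
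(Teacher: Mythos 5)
Your proposal is correct and follows essentially the same route as the paper's own proof: instantiate the two regret bounds, apply Freund--Schapire to get an $\eta$-approximate equilibrium, condition on the high-probability events, use the sampling lemma to pass from $\overline S$ to $\hat S$, and union bound. The only cosmetic difference is that you explicitly invoke the accuracy-at-equilibrium lemma of \cite{Hsu13} to translate ``approximate minimax strategy'' into ``accurate synthetic data,'' a step the paper leaves implicit.
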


\begin{proof}
  First, we will show that the average distribution $\overline S$ from
  the no-regret dynamics is an $\eta$-approximate minimax strategy,
  with
  \[
    \eta \leq O\left(\alpha_0 + m_2^{3/4} \sqrt{\frac{\log(|\cX|)}{T}}
      + \frac{\log(1/\beta_0)}{n\epsilon_0}\right).
  \]
  Recall that the average regret for the two players is bounded by
  \[
    \Reg_D(T)/T \leq  O(m_2^{3/4}\sqrt{ \log |\cX|/T}) \qquad \mbox{and,} \qquad
    \Reg_Q(T)/T \leq  O\left( \alpha_0 + \frac{\log(1/\beta_0)}{n\epsilon_0} \right),
  \]
  with probability at least $1 - 3T\beta_0$. Then by \Cref{fs}, we
  know that $\overline S$ is an $\eta$-approximate minimax strategy,
  with probability at least $1 - 3\beta/4$. Let us condition on this
  event. Lastly, by the setting of $N_{\alpha_0}$ in \Cref{halloween}
  and \Cref{samplingerror}, we know that $\hat S$ is a
  $(\eta + \alpha_0)$, except with probability $\beta/4$. Then the
  stated bound follows directly from a union bound.
\end{proof}

Finally, we will consider three different instantiations of
$\cO_{\epsilon_0, \delta_0}$. To optimize the error guarantee for each
instantiation, we set the number of rounds $T$ used in
\Cref{halloween} so that the regret of the Data player given by
\Cref{regretdata} is on the same order as the error of
$\cO_{\epsilon_0, \delta_0}$. We first consider a differentially
private oracle that matches the error guarantees of the generic
private learner from \cite{KLNRS08}.

\begin{corollary}
  Suppose that $\cO_{\epsilon_0, 0}$ is an
  $(\epsilon_0, 0)$-differentially private
  $(\alpha_0, \beta_0)$-accurate weighted minimization oracle, where
  $\alpha_0, \beta_0$ are such that
  $\alpha_0 \leq O\left(\frac{\log(|\cQ|/\beta_0)}{n\epsilon_0}
  \right)$.  Then $\OracleQuery$ with
  $T = \left\lceil \frac{n \epsilon m_2^{3/4}
      \sqrt{\log(|\cX|)}}{\log(|\cQ|/\beta) \sqrt{\log(1/\delta)}}
  \right\rceil$ is an $(\alpha, \beta)$-accurate synthetic data
  generation algorithm for $\cQ$ with
  \[
    \alpha \leq \tilde O \left( \frac{m_2^{3/4} \sqrt{\log |\cX|
          \log(1/\delta)} \log(|\cQ|/\beta)}{n \epsilon} \right)^{1/2}
  \]
  where the $\tilde O$ hides logarithmic factors in $m_2, n$ and
  $\log(|\cX|)$.
\end{corollary}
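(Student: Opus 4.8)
The content of this corollary is entirely in \Cref{acc}; what remains is the routine instantiation of the generic bound together with the right choice of the number of rounds $T$. \Cref{acc} states that $\OracleQuery$ is $(\alpha,\beta)$-accurate (the failure probability is $3T\beta_0+\beta/4=\beta$ once $\beta_0=\beta/(4T)$ is plugged in) with
\[
\alpha \leq O\!\left(\alpha_0 + m_2^{3/4}\sqrt{\frac{\log|\cX|}{T}} + \frac{\log(1/\beta_0)}{n\epsilon_0}\right).
\]
First I would invoke the hypothesis $\alpha_0 \leq O(\log(|\cQ|/\beta_0)/(n\epsilon_0))$; since $\log(|\cQ|/\beta_0)\geq \log(1/\beta_0)$, the third term is absorbed into the first, leaving a ``learning'' term $O(\log(|\cQ|/\beta_0)/(n\epsilon_0))$ and a ``regret'' term $m_2^{3/4}\sqrt{\log|\cX|/T}$.

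Next, substituting $\epsilon_0=\epsilon/\sqrt{24T\ln(2/\delta)}$ and $\beta_0=\beta/(4T)$ from \Cref{halloween} turns the learning term into
\[
O\!\left(\frac{\log(4T|\cQ|/\beta)\sqrt{24T\ln(2/\delta)}}{n\epsilon}\right) = \tilde{O}\!\left(\frac{\log(|\cQ|/\beta)\sqrt{T\log(1/\delta)}}{n\epsilon}\right),
\]
where the $\tilde{O}$ absorbs the additive $\log T$ inside the logarithm --- the only point where the $T$-dependence is mildly circular, and it is harmless because $\log(4T|\cQ|/\beta)$ varies only logarithmically with $T$, so it can be treated as a constant when optimizing $T$. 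The learning term grows like $\sqrt{T}$ and the regret term decays like $1/\sqrt{T}$, so balancing them --- i.e. solving $\log(|\cQ|/\beta)\sqrt{T\log(1/\delta)}/(n\epsilon) = m_2^{3/4}\sqrt{\log|\cX|/T}$ for $T$ --- gives exactly the choice
\[
T = \left\lceil \frac{n\epsilon\, m_2^{3/4}\sqrt{\log|\cX|}}{\log(|\cQ|/\beta)\sqrt{\log(1/\delta)}}\right\rceil
\]
used in \Cref{halloween}.

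Finally, plugging this $T$ back into either (now equal, up to constants) term and simplifying, a short computation shows both are $\tilde{O}\big(m_2^{3/8}(\log|\cX|)^{1/4}(\log(1/\delta))^{1/4}(\log(|\cQ|/\beta))^{1/2}/(n\epsilon)^{1/2}\big)$, which is the claimed bound $\tilde{O}\big(m_2^{3/4}\sqrt{\log|\cX|\log(1/\delta)}\,\log(|\cQ|/\beta)/(n\epsilon)\big)^{1/2}$. I do not anticipate any real obstacle. The two things worth being careful about are: (i) the $T$-balancing is the correct optimization precisely because, after the substitutions, \Cref{acc}'s bound is a sum of a $\sqrt{T}$ term and a $1/\sqrt{T}$ term with no other $T$-dependence; and (ii) the ceiling and the degenerate case in which the numerator falls below the denominator, forcing $T=1$ --- but there the regret term is already $O(m_2^{3/4}\sqrt{\log|\cX|})$, dominated by the stated bound, so nothing extra is needed. (If wanted, $(\epsilon,\delta)$-robust differential privacy is immediate from \Cref{per-round}, since an $(\epsilon_0,0)$-differentially private oracle is trivially robustly differentially private.)
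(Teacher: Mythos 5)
Your proof is correct and is exactly the (implicit) argument the paper intends: the paper states immediately before the corollaries that one plugs the oracle's error into Lemma~\ref{acc} and then chooses $T$ to balance the Data player's regret term against the oracle's error term, which is precisely what you carry out. The arithmetic — absorbing the third term of Lemma~\ref{acc} into $\alpha_0$, substituting $\epsilon_0=\epsilon/\sqrt{24T\ln(2/\delta)}$ and $\beta_0=\beta/(4T)$, balancing the resulting $\sqrt{T}$ and $1/\sqrt{T}$ terms, and simplifying — checks out and reproduces the stated $T$ and the stated $\alpha$.
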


Next, we will instantiate $\cO_{\epsilon_0, \delta_0}$ with the RSPM
algorithm. Note that with this choice, although $\OracleQuery$ is oracle equivalent to a differentially private algorithm, it is not robustly differentially private.

\begin{corollary}
  When $\cO_{\epsilon_0, \delta_0}$ is instantiated with the Gaussian RSPM algorithm, $\OracleQuery$ with
  $T = \left\lceil \frac{m_2^{3/4} \sqrt{\log(|\cX|)}
      n\epsilon}{m_1^{3/2} \sqrt{\log(m_1/\beta)} \ln(1/\delta)}
  \right\rceil$ is an $(\alpha, \beta)$-accurate synthetic data
  generation algorithm for $\cQ$ with
  \[
    \alpha \leq \tilde O \left( \frac{m_1^{3/2} m_2^{3/4}
        \sqrt{\log(m_1/\beta)\log|\cX|}\log(1/\delta)}{n \epsilon}\right)^{1/2}
  \]
  where the $\tilde O$ hides logarithmic factors in $m_2, n$ and
  $\log(|\cX|)$.
\end{corollary}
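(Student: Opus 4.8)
The plan is to derive this corollary by combining the generic accuracy bound for $\OracleQuery$ in \Cref{acc} with the accuracy guarantee of the Gaussian RSPM algorithm, substituting the privacy parameters $\epsilon_0,\delta_0,\beta_0$ fixed in \Cref{halloween}, and then choosing the number of rounds $T$ to balance the two dominant error terms. The privacy statement --- that $\OracleQuery$ is oracle equivalent to an $(\epsilon,\delta)$-differentially private algorithm and is \emph{not} robustly private, since Gaussian RSPM itself is only non-robustly private --- follows immediately from \Cref{per-round}, so no extra work is needed there and all the effort goes into the accuracy bound and the setting of $T$.

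First I would instantiate $\cO_{\epsilon_0,\delta_0}$ with Gaussian RSPM. Because the best-response objective $q\mapsto q(\hat S^t)-q(S)$ that PBR minimizes over $\cQ$ is $[-1,1]$-valued and $1/n$-sensitive in the private dataset $S$ (the samples $\hat S^t$ are public), the Gaussian RSPM guarantee --- applied via the standard translation-and-reweighting reduction from weighted to unweighted minimization --- yields a weighted $(\epsilon_0,\delta_0)$-differentially private $(\alpha_0,\beta_0)$-accurate minimization oracle for $\cQ$ with
$$
\alpha_0 = O\!\left(\frac{m_1\sqrt{m_1\,\ln(m_1/\beta_0)\,\ln(1/\delta_0)}}{\epsilon_0\, n}\right).
$$
Then I would substitute $\epsilon_0=\epsilon/\sqrt{24T\ln(2/\delta)}$, $\delta_0\le\delta/(4T)$, and $\beta_0=\beta/(4T)$ and absorb the resulting $\polylog(T)$, $\log\log(1/\delta)$, and $\log\log(1/\beta)$ factors into $\tilde O$, obtaining $\alpha_0 = \tilde O\!\big(m_1^{3/2}\sqrt{\ln(m_1/\beta)}\,\ln(1/\delta)\,\sqrt{T}/(\epsilon n)\big)$.

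Feeding this into \Cref{acc}, which gives $\alpha\le O\big(\alpha_0 + m_2^{3/4}\sqrt{\log|\cX|/T} + \log(1/\beta_0)/(n\epsilon_0)\big)$, I would observe that after substitution the report-noisy-max term $\log(1/\beta_0)/(n\epsilon_0)=\tilde O\big(\sqrt{T\ln(1/\delta)}/(n\epsilon)\big)$ is dominated by $\alpha_0$ (it lacks the $m_1^{3/2}$ factor, and $m_1\ge 1$), so $\alpha = \tilde O(c_1\sqrt T + c_2/\sqrt T)$ with $c_1=\tilde O\big(m_1^{3/2}\sqrt{\ln(m_1/\beta)}\,\ln(1/\delta)/(\epsilon n)\big)$ and $c_2=m_2^{3/4}\sqrt{\log|\cX|}$. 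Minimizing over $T$ gives $T=\lceil c_2/c_1\rceil$ --- which is exactly the value appearing in the statement --- and $\alpha=O(\sqrt{c_1c_2})$, matching the claimed bound. I would finish by checking that this $T$ is a valid (positive integer) choice in the intended parameter regime, and that the per-round failure probabilities of PBR across the $T$ rounds together with that of the final sampling step (\Cref{samplingerror}) sum to at most $\beta$, which they do since $\beta_0=\beta/(4T)$.

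The main obstacle here is not any single deep step but the careful bookkeeping of logarithms: after substituting $\epsilon_0,\delta_0,\beta_0$ one is left with nested terms such as $\ln(4Tm_1/\beta)$ and $\ln(4T/\delta)$, and one must verify that pushing the $\ln T$ pieces into $\tilde O$ leaves precisely $\sqrt{\ln(m_1/\beta)}$ and $\ln(1/\delta)$ in the leading factors, and that re-substituting the balancing value of $T$ into $c_1\sqrt T + c_2/\sqrt T$ reproduces the square-root form $\tilde O(\cdot)^{1/2}$ with no stray polynomial factor in $m_1$ or $m_2$. Everything else is a direct invocation of \Cref{acc}, \Cref{per-round}, \Cref{samplingerror}, and the Gaussian RSPM accuracy theorem.
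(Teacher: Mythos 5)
Your proposal is correct and follows exactly the route the paper intends (the paper states the corollaries without separate proofs, noting only that $T$ is chosen to balance the Data player's regret against the oracle's error); you fill in the bookkeeping of substituting $\epsilon_0,\delta_0,\beta_0$ from \Cref{halloween} into the Gaussian RSPM bound, plugging into \Cref{acc}, and optimizing $T$, and your resulting $T=\lceil c_2/c_1\rceil$ and $\alpha=O(\sqrt{c_1c_2})$ match the stated values.
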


Finally, we instantiate
the oracle $\cO_{\epsilon_0, \delta_0}$ with the \textbf{PRSMA} algorithm that
uses the Laplace RSPM algorithm with a certifiable heuristic oracle.

\begin{corollary}
  When $\cO_{\epsilon_0, \delta_0}$ is instantiated with PRSMA
  algorithm (that internally uses Laplace RSPM as $\cA_{\cO^*}$), then
  for any $\beta > \delta$, $\OracleQuery$ with
  $T = \left\lceil \left(\frac{m_2^{3/4} \sqrt{\log(|\cX|) n
          \epsilon}}{m_1^2 +
        \sqrt{\log(|\cQ|)}}\right)^{4/3}\right\rceil$
  is an
  $(\alpha, \beta)$-accurate synthetic data generation algorithm for
  $\cQ$ with
 \[
   \alpha \leq O \left(\frac{ m_2^{1/4} \log^{1/6}|\cX|\left(
         m_1^{4/3} + \log^{1/3}(|\cQ|) \right) }{(n\eps)^{1/3}}\right)
   \polylog\left(m_1 1/\delta, 1/\epsilon, \frac{1}{\beta - \delta}
   \right).
  \]
\end{corollary}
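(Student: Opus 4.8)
The plan is to obtain this corollary by plugging the \textbf{PRSMA}-with-Laplace-RSPM oracle into the generic accuracy bound for $\OracleQuery$ (\Cref{acc}) and then choosing $T$ optimally. First I would check that \textbf{PRSMA}, run internally with the Laplace RSPM algorithm as $\cA_{\cO^*}$, is a legitimate weighted $(\epsilon_0,\delta_0)$-differentially private, $(\alpha_0,\beta_0)$-accurate minimization oracle for $\cQ$. Privacy is immediate from \Cref{thm:prsma} (feeding \textbf{PRSMA} the rescaled parameters $(\epsilon_0/62,\delta_0/11)$), and it is moreover \emph{robustly} private, so that \Cref{per-round} yields the claimed $(\epsilon,\delta)$-privacy of $\OracleQuery$. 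For accuracy, since $\cQ$ is $(m_1,m_2)$-dually separable it has a separator set of size $m_1$, so \Cref{prsmacc} applies with separator size $m_1$, private dataset size $n$, privacy parameter $\epsilon_0$, and failure parameters $\beta_0,\delta_0$; the hypothesis $\beta>\delta$ together with the settings $\beta_0 = \beta/(4T)$ and $\delta_0 \le \delta/(4T)$ from \Cref{halloween} guarantees $\beta_0 > \delta_0$, as \Cref{prsmacc} requires. Dropping logarithmic factors, this gives $\alpha_0 \le \tilde O\!\big((m_1^2 + \sqrt{\log|\cQ|})/\sqrt{n\epsilon_0}\big)$. (The translation between the weighted instances PBR actually feeds the oracle and the unweighted dataset of size $n$ for which \Cref{prsmacc} is stated is the routine re-weighting discussed in the preliminaries and is not where the work lies.)

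Next I would substitute the parameter settings $\epsilon_0 = \epsilon/\sqrt{24T\ln(2/\delta)}$, $\beta_0 = \beta/(4T)$, $\delta_0 \le \delta/(4T)$ from \Cref{halloween} into the three terms of \Cref{acc}. The oracle-error term becomes $\alpha_0 \le \tilde O\!\big((m_1^2 + \sqrt{\log|\cQ|})\,T^{1/4}/\sqrt{n\epsilon}\big)$, which is increasing in $T$; the Data player's regret term is $m_2^{3/4}\sqrt{\log|\cX|/T}$, decreasing in $T$; and the report-noisy-max term is $\log(1/\beta_0)/(n\epsilon_0) = \tilde O(\sqrt{T}/(n\epsilon))$. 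The one point needing an argument here is that this last term is dominated by the first over the relevant range of $T$: their ratio is $\tilde O(T^{1/4}/\sqrt{n\epsilon})$, which is $O(1)$ as long as $T \le \tilde O((n\epsilon)^2)$, and the optimal $T$ chosen below is much smaller. So up to logarithmic factors $\alpha \le O\!\big(\alpha_0 + m_2^{3/4}\sqrt{\log|\cX|/T}\big)$.

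Finally I would balance the two surviving terms. Setting $(m_1^2 + \sqrt{\log|\cQ|})\,T^{1/4}/\sqrt{n\epsilon} \asymp m_2^{3/4}\sqrt{\log|\cX|}/\sqrt{T}$ gives $T^{3/4} \asymp m_2^{3/4}\sqrt{\log|\cX|\cdot n\epsilon}/(m_1^2 + \sqrt{\log|\cQ|})$, i.e. exactly $T = \big\lceil\big(m_2^{3/4}\sqrt{\log|\cX|\cdot n\epsilon}/(m_1^2 + \sqrt{\log|\cQ|})\big)^{4/3}\big\rceil$. Substituting this $T$ back into either term and using the elementary estimate $(m_1^2 + \sqrt{\log|\cQ|})^{2/3} \asymp m_1^{4/3} + \log^{1/3}|\cQ|$ yields
\[
  \alpha \;\le\; O\!\left(\frac{\big(m_1^{4/3} + \log^{1/3}|\cQ|\big)\,m_2^{1/4}\,\log^{1/6}|\cX|}{(n\epsilon)^{1/3}}\right)
\]
up to the stated $\polylog(m_1,1/\delta,1/\epsilon,1/(\beta-\delta))$ factors, which is the bound claimed. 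The only genuinely delicate steps are the careful bookkeeping of which $n,\epsilon,\beta,\delta$ enter \Cref{prsmacc} when it is invoked as the inner oracle, and checking that the accumulated logarithmic factors (from $\beta_0,\delta_0$ scaling like $1/T$ and from the composition of the $T$ per-round calls already accounted for in \Cref{acc}) collapse into the claimed polylogarithmic factor; the $T$-optimization itself is routine once these are in place.
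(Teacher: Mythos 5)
Your proposal is correct and follows exactly the route the paper intends: instantiate the generic accuracy bound of \Cref{acc} with the PRSMA error bound from \Cref{prsmacc} (applied with separator size $m_1$ and the inner parameters $\epsilon_0,\delta_0,\beta_0$ prescribed in \Cref{halloween}), observe that the report-noisy-max term is lower order, and then choose $T$ to balance the oracle-error term $\tilde O\big((m_1^2+\sqrt{\log|\cQ|})\,T^{1/4}/\sqrt{n\epsilon}\big)$ against the Data player's regret $m_2^{3/4}\sqrt{\log|\cX|/T}$. The bookkeeping you flag (that $\beta_0-\delta_0\ge(\beta-\delta)/(4T)>0$ so \Cref{prsmacc} applies, and that the resulting $\log T$, $\log(1/\delta_0)$, $\log(1/\epsilon_0)$ factors all collapse into the stated polylog) is indeed where the care is needed, and you have it right.
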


\subsection{Example: Conjunctions}
Here, we instantiate our bounds for a particular query class of interest: boolean conjunctions over the hypercube $\{0,1\}^d$. Constructing synthetic data for conjunctions (or equivalently producing a full \emph{marginal table} for a dataset) has long been a challenge problem in differential privacy, subject to a long line of work \cite{contingency,hardsynth, conjunctions, contingency2,marginals,marginals2,conjunctions2,conjunctions3}, and is known to be computationally hard even for the special case of \emph{two-way} conjunctions \cite{hardsynth}. Our results in particular imply the first oracle efficient algorithm for generating synthetic data for all $2^d$ conjunctions. Other interesting classes satisfy all of the conditions needed for our synthetic data generation algorithm to apply, including disjunctions, parities, and discrete halfspaces --- see Appendix \ref{app:separator} for details, and for how to allow negated variables in the class of conjunctions while preserving seperability.

\begin{definition}
Given a subset of variables $S \subseteq [d]$, the boolean conjunction defined by $S$ is the statistical query $q_S(x) = \wedge_{j \in S} x_j$.
The set of boolean conjunctions $\cQ_C$ defined over the hypercube $\cX = \{0,1\}^d$ is:
$$\cQ_C = \{q_S \mid S\subseteq [d] \}$$
\end{definition}

Boolean conjunctions are $(d, d)$ dually-separable (see Appendix \ref{app:separator} for the separator set).

Thus, we can instantiate Theorem~\ref{oraclequerythm} with (e.g.) the Gaussian RSPM algorithm, and obtain an oracle-efficient algorithm for generating synthetic data for all $2^d$ conjunctions that outputs a synthetic dataset $S'$ that satisfies:
$$\max_{q \in \cQ_C}|q(S)-q(S')| \leq   \tilde O\left(\frac{d^{11/8}}{\sqrt{\epsilon n}}\right)$$




\section{A Barrier}
\label{sec:barrier}
In this paper, we give \emph{oracle-efficient} private algorithms for learning and synthetic data generation for classes of queries $\cQ$ that exhibit special structure: small universal identification sets. Because of information theoretic lower bounds for differentially private learning \cite{intervals,littlestoneprivacy}, we know that these results cannot be extended to \emph{all} learnable classes of queries $\cQ$. But can they be extended to all classes of queries that are information theoretically learnable subject to differential privacy? Maybe --- this is the most interesting question left open by our work. But here, we present a ``barrier'' illustrating a difficulty that one would have to overcome in trying to prove this result. Our argument has three parts:
\begin{enumerate}
\item First, we observe a folklore connection between differentially private learning and online learning: any differentially private empirical risk minimization algorithm $\cA$ for a class $\cQ$ that \emph{always outputs the exact minimizer of a data-independent  perturbation of the empirical risks} can also be used as a no-regret learning algorithm, using the ``follow the perturbed leader'' analysis of Kalai and Vempala \cite{KV05}. The per-round run-time of this algorithm  is exactly equal to the run-time of $\cA$.
\item Oracle-efficient no-regret learning algorithms are subject to a lower bound of Hazan and Koren \cite{HK16}, that states that even given access to an oracle which solves optimization problems over a set of experts $\cQ$ in unit time, there exist finite classes $\cQ$ such that obtaining non-trivial regret guarantees requires total running time larger than $\mathrm{poly}(|Q|)$. This implies a lower bound on the magnitude of the perturbations that an algorithm of the type described in (1) must use.
\item Finally, we observe for any finite class of hypotheses $\cQ$, information theoretically, it is possible to solve the empirical risk minimization problem on a dataset of size $T$ up to error $O(\frac{\log |\cQ|}{\epsilon T})$ using the generic learner from \cite{KLNRS08}. This implies a separation between the kinds of algorithms described in 1), and the (non-efficiently) achievable information theoretic bounds consistent with differential privacy.
\end{enumerate}


We emphasize that oracle efficient algorithms for learning over $\cQ$ have access to a non-private oracle which exactly solves the learning problem over $\cQ$ --- not an NP oracle, for which the situation is different (see the discussion in Section \ref{sec:conc}).

First we define the class of mechanisms our barrier result applies to:
\begin{definition}
We say that an $(\epsilon,\delta)$-differentially private learning algorithm $\cA:\cX^n\rightarrow \cQ$ for $\cQ$ is a perturbed Empirical Risk Minimizer (pERM) if there is some distribution $\cD_{\epsilon,\delta}$ (defined independently of the data $S$) over perturbations $Z \in \mathbb{R}^{|Q|}$ such that on input $S \in \cX^n$, $\cA$ outputs:
$$\cA(S) = \arg\min_{q \in \cQ} \left( n\cdot q(S) + Z_q \right)$$
where $Z \sim \cD_{\epsilon,\delta}$.
\end{definition}

We note that many algorithms are pERM algorithms. The most obvious example is \emph{report-noisy-min}, in which each $Z_q \sim \Lap{1/\epsilon}$ independently. The exponential mechanism instantiated with empirical loss as its quality score (i.e. the generic learner of \cite{KLNRS08}) is also a pERM algorithm, in which each $Z_q$ is drawn independently from a Gumbel distribution \cite{DworkRoth}. But note that the coordinates $Z_q$ need not be drawn independently: The oracle-efficient RSPM algorithm we give in Section \ref{rspm} is also a pERM algorithm, in which the perturbations $Z_q$ are introduced in an implicit (correlated) way by perturbing the dataset itself. And it is natural to imagine that many algorithms that employ weighted optimization oracles --- which after all solve an exact minimization problem --- will fall into this class. The expected error guarantees of these algorithms are proven by bounding $\mathbb{E}[||Z||_\infty]$, which is typically a tight bound.


 We now briefly recall the online learning setting. Let $\cQ$ be an arbitrary class of functions $q:\cX \rightarrow [0,1]$. In rounds $t = 1, \ldots, T$, the learner
 selects a function $q^t\in \cQ$, and an (adaptive) adversary selects an example
 $x^t\in \cX$, as a function of the sequence $(q^1,x^1,\ldots,q^{t-1},x^{t-1})$. The learner incurs a loss of $\ell^t = q^t(x^t)$. A standard
 objective is to minimize the \emph{expected average regret}:
 \[
   R(T) = \Ex{}{\frac{1}{T}\sum_{t=1}^T q^t(x^t)} - \min_{q\in \cQ}
     \frac{1}{T} \sum_{t=1}^T q(x^t)
 \]
 where the expectation is taken over the randomness of the learner. A weighted optimization oracle in the online learning setting is exactly the same thing as it is in our setting: Given a weighted dataset $(S, w)$, it returns $\arg\min_{q \in \cQ} \sum_{x_i \in S} w_i\cdot q(x_i)$.

 A natural way to try to use a private learning algorithm in the online learning setting is just to run it at each round $t$ on the dataset defined on the set of data points observed so far: $S_t = \{x^1,\ldots,x^{t-1}\}$.
 \begin{definition}
 Follow the Private Leader, instantiated with $\cA$, is the online learning algorithm that at every round $t$ selects $q^t = \cA(S_t)$.
 \end{definition}
The follow the private leader algorithm instantiated with $\cA$ has a controllable regret bound whenever $\cA$ is a differentially private pERM algorithm. The following theorem is folklore, but follows essentially from the original analysis of ``follow the perturbed leader'' by Kalai and Vempala \cite{KV05}. See e.g. the lecture notes from \cite{RS18} or \cite{Abernethy} for an example of this analysis cast in the language of differential privacy. We include a proof in Appendix \ref{app:barrier} for completeness.

\begin{restatable}{theorem}{tokyo}
\label{tokyo}
  Let $\eps, \delta\in (0, 1)$ and let $\cA$ be an $(\epsilon,\delta)$ differentially private pERM algorithm for query class $\cQ$, with perturbation distribution $\cD_{(\eps, \delta)}$. Then Follow the Private Leader instantiated with $\cA$ has expected regret bounded by:
  \[
    R(T) \leq  O\left(\epsilon + \delta  + \frac{\Ex{Z \sim \cD_{\eps,
            \delta}}{\|Z\|_\infty}}{T} \right)
  \]
\end{restatable}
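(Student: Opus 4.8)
The plan is to carry out the classical ``follow the perturbed leader'' (FTPL) analysis of Kalai and Vempala, with two substitutions: the ``perturbation'' is the pERM's own perturbation $Z\sim\cD_{\eps,\delta}$ (over which we have no control beyond the bound on $\E\|Z\|_\infty$ in the statement), and the ``stability of the leader'' step, which in the usual analysis is a property of a hand-chosen noise distribution, is here supplied by the $(\eps,\delta)$-differential privacy of $\cA$. Concretely, at round $t$ the algorithm plays $q^t=\cA(S_t)=\arg\min_q(L_{t-1}(q)+Z_q)$ where $L_t(q):=\sum_{i=1}^t q(x^i)$ (with $L_0\equiv 0$) and $Z\sim\cD_{\eps,\delta}$; a one-step look-ahead would instead play $\arg\min_q(L_t(q)+Z_q)$, and the whole proof is about bounding the cost of this mismatch. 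Since $q^t=\cA(S_t)$ depends only on the adversary's past moves $x^1,\dots,x^{t-1}$ (with fresh randomness), a standard argument reduces bounding the expected regret against an adaptive adversary to the case of a fixed sequence $x^1,\dots,x^T$; fix $q^*\in\arg\min_q L_T(q)$.

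\textbf{Be-the-Leader step.} For a single common draw $Z\sim\cD_{\eps,\delta}$, define the hallucinated leaders $b^t:=\arg\min_q(L_t(q)+Z_q)$ for $t=0,1,\dots,T$. Applying the standard Be-the-Leader lemma (a one-line induction, using that $b^t$ minimizes $\sum_{s\le t}\ell^s$) to the loss sequence $\ell^0:=Z,\ \ell^t:=(q\mapsto q(x^t))$ gives $\sum_{s=0}^T\ell^s(b^s)\le\min_q\sum_{s=0}^T\ell^s(q)$, i.e. $Z_{b^0}+\sum_{t=1}^T b^t(x^t)\le Z_{q^*}+L_T(q^*)$. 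Rearranging and using $Z_{q^*}\le\|Z\|_\infty$ and $-Z_{b^0}=-\min_q Z_q\le\|Z\|_\infty$ yields
\[
\sum_{t=1}^T b^t(x^t)\;\le\;\sum_{t=1}^T q^*(x^t)+2\|Z\|_\infty ,
\]
and taking expectations over $Z$ gives $\sum_t\E_Z[b^t(x^t)]\le\sum_t q^*(x^t)+2\,\E_{Z\sim\cD_{\eps,\delta}}\|Z\|_\infty$.

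\textbf{Paying for the look-ahead via privacy.} The algorithm actually plays (in distribution over its fresh perturbation) $b^{t-1}$, i.e. $\cA(S_t)$, whereas Be-the-Leader controls $b^t$, i.e. $\cA(S_{t+1})$, and $S_{t+1}=S_t\cup\{x^t\}$ differs from $S_t$ by the addition of one element; since every coordinate $q(x^t)\in[0,1]$, this is a change of the same magnitude as a neighboring-dataset change, so by $(\eps,\delta)$-differential privacy of $\cA$ (using the standard equivalence between the replace-one and add-one notions, which at most doubles $\eps$) and the elementary fact that $(\eps,\delta)$-closeness of two distributions implies $\E_P[f]\le e^{\eps}\E_Q[f]+\delta$ for $f$ valued in $[0,1]$, we get, with $x^t$ fixed,
\[
\E[q^t(x^t)]=\E_Z[b^{t-1}(x^t)]\;\le\;e^{O(\eps)}\,\E_Z[b^t(x^t)]+\delta .
\]
Summing over $t$ and combining with the Be-the-Leader bound gives $\sum_t\E[q^t(x^t)]\le e^{O(\eps)}\big(\sum_t q^*(x^t)+2\E\|Z\|_\infty\big)+T\delta$; since each $q^*(x^t)\le 1$ and $\eps<1$, the factor $e^{O(\eps)}$ on $\sum_t q^*(x^t)$ costs an additive $O(\eps)T$, and dividing by $T$ yields $R(T)\le O(\eps)+\delta+O(\E\|Z\|_\infty)/T$, as claimed.

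\textbf{Main obstacle.} The delicate point is the tension between the two steps over the use of randomness: Be-the-Leader telescopes only if all the hallucinated leaders $b^0,\dots,b^T$ use one \emph{common} perturbation $Z$, whereas the privacy step needs the perturbation to be drawn \emph{independently} of the current point $x^t$ (so that $S_t$ and $S_{t+1}$ are fixed datasets when we invoke differential privacy). Against an adaptive adversary $x^t$ is itself a function of the realized history and hence correlated with the perturbations used so far, so reconciling these requires a careful coupling: run the algorithm with fresh per-round perturbations (making the privacy step clean, since $x^t$ is then independent of round $t$'s perturbation), and for the Be-the-Leader step re-couple the hallucinated sequence to a single shared perturbation, which is legitimate precisely because the realized loss sequence $x^1,\dots,x^T$ is determined by the algorithm's \emph{actual} perturbations and not by the hallucinated one. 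Everything else — the Be-the-Leader induction and the conversion of $(\eps,\delta)$-privacy into a multiplicative-plus-additive bound on expectations of $[0,1]$-valued functions — is routine.
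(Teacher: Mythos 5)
Your proof is correct and follows essentially the same approach as the paper's: a Be-the-Leader telescoping argument with a shared perturbation, combined with $(\eps,\delta)$-differential privacy to bound the ``follow'' vs.~``be'' gap, and the observation that linearity of expectation lets one switch between shared and fresh per-round perturbations. You are in fact slightly more careful than the paper on two points the paper elides: (i) that $S_t$ and $S_{t+1}$ are add-one rather than replace-one neighbors (so the DP parameters must be adjusted by a constant), and (ii) the reduction from an adaptive to an oblivious adversary, both of which the paper implicitly assumes.
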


Note that the regret is controlled by $\Ex{Z \sim \cD_{\eps,\delta}}{\|Z\|_\infty}$, which also controls the error of $\cA$ as a learning algorithm.


We wish to exploit a lower bound on the running time of oracle efficient online learners over arbitrary sets $\cQ$ due to Hazan and Koren \cite{HK16}:

\begin{theorem}[\cite{HK16}]
 For every algorithm with access to a weighted optimization oracle $\cO$, there exists a class of functions $\cQ$  such that the algorithm cannot guarantee that its
  expected average regret will be smaller than $1/16$  in total time less than
  $O\left( \sqrt{|\cQ|} / \log^3(|\cQ|)\right)$.
\end{theorem}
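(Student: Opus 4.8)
The plan is to reconstruct the Hazan--Koren argument, which is a hiding/indistinguishability lower bound tailored specifically to \emph{optimization} oracles (as opposed to value or statistical oracles). By Yao's principle it suffices to exhibit a single class $\cQ$ with $|\cQ| = N$ and a \emph{distribution} over adversarial loss sequences on which every deterministic oracle-using algorithm that makes few oracle calls suffers expected average regret at least $1/16$. The distribution I would use plants a ``secret'' good expert $q^\star$ drawn uniformly from a pool of $\Theta(N)$ candidate experts, and generates the examples $x^1,\dots,x^T$ from a combinatorial gadget with two properties: (i) in hindsight $q^\star$ is substantially better than every other expert (its average loss is bounded away from the best achievable by the rest, so any learner whose play is independent of $q^\star$ must incur average regret $\geq 1/16$); and (ii) the profile of losses across experts is nearly symmetric, so that the identity of $q^\star$ is revealed by the data only ``in a very particular direction.''

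The second and central step is to show that \emph{every} oracle call $\cO(WD)$, for $WD\in(\cX\times\RR)^*$ an arbitrary weighted dataset the algorithm might form, can be answered by a genuine minimizer $\argmin_{q\in\cQ}\sum_{(x_i,w_i)\in WD} w_i q(x_i)$ whose value is either completely independent of which candidate was planted as $q^\star$, or coincides with $q^\star$ for only a $1/\Theta(N)$ fraction of the candidate pool --- so a uniformly random $q^\star$ is ``caught'' by a single query with probability $O(1/N)$, and (by a birthday-style union bound) by a sequence of $t$ queries with probability $O(t^2\,\mathrm{polylog}(N)/N)$, where the polylog absorbs the $\mathrm{polylog}(N)$ number of rounds and the $O(\log N)$-bit structure of each query in the gadget. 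This is the hard part. The naive idea of simply making $q^\star$ uniformly slightly better than everyone fails, because a single oracle call on the true prefix (plain ``follow the leader'') would then expose $q^\star$; the gadget must instead arrange that no short prefix and no ``natural'' reweighting ever makes $q^\star$ the unique minimizer, and that the one distinguishing direction has tiny measure --- while also exploiting the oracle's freedom in breaking ties in the adversary's favour. Designing $\cQ$ and the loss gadget to simultaneously satisfy (i), (ii), and this oracle-consistency-without-leakage property is where essentially all the work lies.

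Given such a gadget, the conclusion is a routine counting/indistinguishability argument. Let $t$ be the total number of oracle calls the algorithm makes; since each call costs unit time, $t$ lower-bounds the total running time. Conditioned on the event $\cE$ that none of the $t$ calls catches $q^\star$ --- which by the above has probability $1 - O(t^2\,\mathrm{polylog}(N)/N)$ --- the entire transcript, and hence the algorithm's play $q^1,\dots,q^T$, is statistically independent of $q^\star$; therefore on the rounds where $q^\star$ is good the algorithm's expected loss is at least that of a uniformly random expert, which by property (i) forces average regret $\geq 1/16$. To drive $\Pr[\cE]$ below a constant one needs $t^2\,\mathrm{polylog}(N)/N = \Omega(1)$, i.e. $t = \Omega\!\left(\sqrt{|\cQ|}/\log^3|\cQ|\right)$, which is exactly the claimed lower bound on the running time. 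I expect the only real obstacle to be the middle step; once the hiding gadget is in hand, the choice of the loss sequence and the final averaging argument are standard.
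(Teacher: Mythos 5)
The paper does not prove this statement: it is cited directly from Hazan and Koren \cite{HK16}. The only engagement with \cite{HK16} in the paper is in Appendix~\ref{app:barrier}, where their game-theoretic Theorem~4 (a lower bound for computing approximate min--max equilibria of $N\times N$ zero-sum games with a best-response oracle) is restated and then used, via a min-max and sparsification argument, to derive the reverse-quantifier Theorem~\ref{hongkong}. So there is no in-paper proof of the present statement to compare yours against.

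That said, your reconstruction has a quantitative inconsistency that blocks the plan even at the sketch level. If each oracle call ``catches'' the planted $q^\star$ with probability $O(1/N)$ over the uniform prior, then $t$ calls catch it with probability $O(t/N)$ by a plain union bound, not $O(t^2\,\mathrm{polylog}(N)/N)$; a quadratic, birthday-style bound governs \emph{collision} events between random samples, not the event of hitting a single fixed hidden target. A catch rate of $1/N$ per query, combined with ``low regret forces catching $q^\star$,'' would therefore yield an $\Omega(N)$-time lower bound. That is too strong to be true: \cite{HK16} also exhibit a matching \emph{algorithm} that achieves $o(1)$ expected average regret with an optimization oracle in total time $\tilde O(\sqrt{N})$. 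Hence the gadget your sketch postulates cannot exist with the properties you ascribe to it. To land on the stated $\Omega(\sqrt{N}/\log^3 N)$ bound, you would need a per-query catch rate of roughly $1/\sqrt{N}$, or a hidden structure considerably richer than a single planted good expert --- and in fact \cite{HK16} prove the result by a substantially more elaborate route, embedding the problem into approximate equilibrium computation for zero-sum games with a best-response oracle. As written, the central gadget in your plan is not merely unspecified; its claimed properties are mutually inconsistent with the matching upper bound.
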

Here, it is assumed that calls to the oracle $\cO$ can be carried out in unit time, and \emph{total} time refers to the cumulative time over all $T$ rounds of interaction. Hence, if $\cA$ is oracle-efficient --- i.e. it runs in time $f(t) = \textrm{poly}(t,\log|\cQ|)$ when given as input a dataset of size $t$, the \emph{total} run time of follow the private leader instantiated with $\cA$ is: $\sum_{t=1}^T f(t) \leq T \cdot f(T) = \textrm{poly}(T,\log|\cQ|)$.

This theorem is almost what we want --- except that the order of quantifiers is reversed. It in principle leaves open the possibility that for every class $\cQ$, there is a different oracle efficient algorithm (tailored to the class) that can efficiently obtain low regret. After all, our RSPM algorithm is non-uniform in this way --- for each new class of functions $\cQ$, it must be instantiated with a separator set for that class.

Via a min-max argument together with an equilibrium sparsification technique, we can give a version of the lower bound of \cite{HK16} that has the order of quantifiers we want --- see Appendix \ref{app:barrier} for the proof.

\begin{restatable}{theorem}{hongkong}
\label{hongkong}
For any $d$, there is a fixed finite class of statistical queries $\cQ$ of size $|\cQ| = N = 2^d$ defined over a data universe of size $|\cX| =  O(N^5\log^2 N)$  such that for every online learning algorithm with access to a weighted optimization oracle for $\cQ$, it cannot guarantee that its expected average regret will be $o(1)$ in total time less than $\Omega(\sqrt{N}/\log^3(N))$.
\end{restatable}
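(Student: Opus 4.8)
The statement flips the order of quantifiers in the Hazan--Koren bound, which names an adversary class only \emph{after} the learner. I would manufacture the single ``universal'' hard class in three movements: a minimax step producing a hard \emph{distribution} over classes, a sparsification step cutting that distribution down to a short list of classes, and an amalgamation step fusing those few classes into one while preserving (and in fact amplifying) the oracle lower bound. For the minimax step, fix $d$ and let $\cF$ be the (finite) Hazan--Koren family of hard classes, each of size $N_0$ over a domain of size $\poly(N_0)$. View the interaction as a zero-sum game in which the minimizing player picks a randomized oracle-efficient learner with total running time at most $B_0 = \Theta(\sqrt{N_0}/\log^3 N_0)$, the maximizing player picks $\cQ\in\cF$ (and then the worst adaptive loss sequence for $\cQ$), and the payoff is the expected average regret. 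A learner making at most $B_0$ oracle calls on a finite domain has, after discretizing the real weights it feeds the oracle and perturbing to break ties, only finitely many behaviorally distinct strategies, so both strategy sets are finite and von Neumann's minimax theorem applies; since the Hazan--Koren bound (which holds against randomized learners, and is most naturally proved in distributional form) says the value is at least a constant $c_0$, there is a distribution $p^\star$ on $\cF$ with $\E_{\cQ\sim p^\star}[\text{regret}(\cA,\cQ)]\ge c_0$ for \emph{every} time-$B_0$ learner $\cA$.

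For the sparsification step, I would replace $p^\star$ --- whose support may be all of $\cF$ --- by the uniform distribution over a short list $\cQ_1,\dots,\cQ_k$ drawn i.i.d.\ from $p^\star$. For each fixed time-$B_0$ learner the empirical average $\frac1k\sum_j \text{regret}(\cA,\cQ_j)$ concentrates around its mean $\ge c_0$ by Hoeffding, and a union bound over the finite pool of relevant learners shows that for $k$ of order $\log(\#\text{learners})$ some realization simultaneously satisfies $\frac1k\sum_j\text{regret}(\cA,\cQ_j)\ge c_0-o(1)$ for all of them. This is the standard sparse-approximate-equilibrium phenomenon (Alth\"ofer / Lipton--Markakis--Mehta--Young); the whole point is to keep $k$ small --- polylogarithmic in $N$ --- so that the eventual class respects the promised domain budget.

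For the amalgamation step, I would fuse $\cQ_1,\dots,\cQ_k$ into one class $\cQ^\star$ and have the adversary run the $k$ sub-instances in sequence, in ``phases'': in phase $j$ it plays the Hazan--Koren-worst loss sequence for $\cQ_j$ against the phase-$j$ behavior of whatever learner it faces. Two facts drive the analysis. First, $\cQ^\star$ is built so that a weighted-optimization query decomposes block by block, so a single oracle call to $\cQ^\star$ returns (up to a trivial all-constant option, sidestepped by padding each $\cQ_j$) a single call to exactly one of the $\cQ_j$'s; hence the learner's oracle budget is partitioned across the phases. Second, the phase-$j$ sub-learner is determined by the learner and the transcript of phases $1,\dots,j-1$, hence is independent of $\cQ_j\sim p^\star$, so the distributional hardness of the previous steps applies phase by phase. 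A counting argument --- a total budget of $o(kB_0)$ forces $\Omega(k)$ ``under-budgeted'' phases, each of which contributes $\Omega(1)$ regret against the hindsight-optimal action (which itself lives inside a single phase) --- then shows any learner running in total time $o(kB_0)=o(\sqrt{N}/\log^3 N)$, where $N=|\cQ^\star|\approx kN_0$, has $\Omega(1)$ expected average regret. A last union bound over the finitely many relevant learners fixes one good realization of $(\cQ_j)_j$, i.e.\ a single class $\cQ^\star$ hard for all of them, and tracking the $\poly$ blowup from the Hazan--Koren domain together with the polylog factor $k$ yields $|\cX^\star|=O(N^5\log^2 N)$.

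\textbf{Main obstacle.} The delicate step is the amalgamation. I must design $\cQ^\star$ so that simultaneously (i) its size and domain are only polynomially larger than a single Hazan--Koren instance; (ii) the optimization oracle for $\cQ^\star$ is no more useful than one oracle call per sub-instance, so that the extra cross-block power cannot defeat the per-phase hardness; and, crucially, (iii) the best action in hindsight for the concatenated sequence stays genuinely competitive within each active phase --- the naive ``zero outside its own block'' embedding fails exactly here, because then every action, and hence the benchmark, is cheap on a $1-1/k$ fraction of the horizon and the regret collapses to $O(1/k)$. Getting all three at once requires opening up the internal structure of the Hazan--Koren construction to choose the right cross-block extension, and this is where essentially all the work lies; the minimax and sparsification steps are off-the-shelf by comparison.
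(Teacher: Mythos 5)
Your proposal correctly identifies the logical goal (reversing the quantifiers in Hazan--Koren) and the minimax and sparsification movements are in the same spirit as the paper's argument, which also applies a min-max step to the meta-game and then sparsifies the hard distribution to a finite list $H$ using no-regret dynamics (Freund--Schapire) rather than Alth\"ofer/LMMY. But your amalgamation step --- which you flag as the ``main obstacle'' --- is in fact where your plan diverges from what the theorem needs, and the gap is not closable along your route.

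The crucial observation you are missing is that the Hazan--Koren hard instances are all $N \times N$ games \emph{over the same $N$-element action set for the learner}. They differ only in the loss structure, i.e.\ in the adversary's side of the game. So one should not fuse ``disjoint classes'' $\cQ_1,\dots,\cQ_k$ of size $N_0$ into a class of size $kN_0$; rather, all the hard games already share a single action set of size $N$, and the right fusion is to \emph{horizontally concatenate the game matrices}, leaving the row set fixed. Concretely, the paper stacks the $|H| = O(N^4\log^2 N)$ matrices in $H$ side by side to obtain an $N \times (|H|\cdot N)$ matrix $R$, identifies the $N$ rows with $\cQ$ and the $O(N^5\log^2 N)$ columns with $\cX$, and sets $q(x) = R(q,x)$. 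An adversary restricted to one block of columns recovers exactly the corresponding Hazan--Koren game, a weighted optimization oracle for $\cQ$ restricted to that block is exactly the oracle for that game, and the best-in-hindsight benchmark is the benchmark of that single game --- none of the ``cheap outside its own phase'' pathology arises, because the class is never enlarged. This means the sparsification step need not produce a polylogarithmic list: $|H| = O(N^4\log^2 N)$ is perfectly acceptable since it inflates only $|\cX|$ and not $|\cQ|$. (This also dissolves your secondary difficulty: your claim that $k=O(\log\#\mathrm{learners})$ is polylogarithmic in $N$ is not justified --- the discretized learner pool is at least exponentially large in the oracle budget, so a union bound over learners would not give a polylog list; but with the horizontal-stacking construction this quantity is irrelevant.) In short, your plan stalls precisely because it imposes a disjoint-union structure that the problem does not have; recognizing that the hard games share their row set removes the obstacle entirely.
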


Theorem \ref{hongkong} therefore implies that follow the private leader, when instantiated with any oracle-efficient differentially private pERM algorithm $\cA$ cannot obtain diminishing regret $R(T) = o(1)$ unless the number of rounds $T = \Omega(|Q|^c)$ for some $c > 0$. In combination with Theorem \ref{tokyo}, this implies our barrier result:
\begin{restatable}{theorem}{barrier}
\label{barrier}
Any oracle efficient (i.e. running in time $\textrm{poly}(n, \log |\cQ|)$) $(\epsilon,\delta)$-differentially private pERM algorithm instantiated with a weighted optimization oracle for the query class $\cQ$ defined in Theorem \ref{hongkong}, with perturbation distribution $\cD_{(\eps, \delta)}$ must be such that for every $(\epsilon + \delta ) = o(1)$:
$$\mathbb{E}_{Z \sim \cD_{(\eps, \delta)}}[||Z||_\infty] \geq  \Omega(|\cQ|^c)$$
for some constant $c > 0$. 
\end{restatable}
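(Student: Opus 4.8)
The plan is to derive Theorem~\ref{barrier} by a contrapositive combination of Theorems~\ref{tokyo} and~\ref{hongkong}: small perturbations would turn Follow the Private Leader into an oracle-efficient online learner beating the Hazan--Koren time barrier for the hard class. First I would fix an oracle-efficient $(\epsilon,\delta)$-differentially private pERM algorithm $\cA$ for the class $\cQ$ of Theorem~\ref{hongkong} with $(\epsilon+\delta) = o(1)$, and record that ``oracle-efficient'' means its per-round running time on a dataset of size $t$ is at most $(t\log|\cQ|)^{k}$ for a constant $k$. As already observed in the text preceding Theorem~\ref{hongkong}, running Follow the Private Leader instantiated with $\cA$ for $T$ rounds then costs total time at most $\sum_{t=1}^{T}(t\log|\cQ|)^{k} \le T^{k+1}(\log|\cQ|)^{k}$.

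Next I would choose the number of rounds. Writing $N = |\cQ|$, set $T^\star = \big\lfloor \big(\sqrt{N}/(\log N)^{k+3}\big)^{1/(k+1)} \big\rfloor$, so that the total running time of Follow the Private Leader with $\cA$ over $T^\star$ rounds is at most $(T^\star)^{k+1}(\log N)^{k} \le \sqrt{N}/(\log N)^{3}$, strictly below the time budget of Theorem~\ref{hongkong}. Theorem~\ref{hongkong} then forces this algorithm to have expected average regret bounded away from $0$, i.e.\ $R(T^\star) = \Omega(1)$. On the other hand, Theorem~\ref{tokyo} applied to $\cA$ gives $R(T^\star) \le O\!\big(\epsilon + \delta + \Ex{Z\sim\cD_{(\epsilon,\delta)}}{\|Z\|_\infty}/T^\star\big)$. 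Since $\epsilon+\delta = o(1)$, for all sufficiently large $N$ the first two terms are smaller than half the constant in the lower bound, so we must have $\Ex{Z\sim\cD_{(\epsilon,\delta)}}{\|Z\|_\infty}/T^\star = \Omega(1)$, i.e.\ $\Ex{Z\sim\cD_{(\epsilon,\delta)}}{\|Z\|_\infty} = \Omega(T^\star)$.

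Finally I would unpack the size of $T^\star$: since $T^\star = \Theta\big(N^{1/(2(k+1))}/\mathrm{polylog}(N)\big)$ and any fixed power of $\log N$ is dominated by $N^{1/(4(k+1))}$, we get $T^\star = \Omega(N^{c})$ with $c = \tfrac{1}{4(k+1)} > 0$, hence $\Ex{Z\sim\cD_{(\epsilon,\delta)}}{\|Z\|_\infty} = \Omega(|\cQ|^{c})$ --- exactly the claimed bound, with the constant $c$ depending only on the degree $k$ of the running time of $\cA$.

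The main obstacle, beyond assembling the two cited theorems, is the running-time bookkeeping: $T^\star$ must be chosen small enough that $T^{k+1}\,\mathrm{polylog}(N)$ stays strictly under the threshold $\sqrt{N}/\log^{3} N$, yet still polynomially large in $N$, so that the resulting lower bound on $\Ex{}{\|Z\|_\infty}$ is itself polynomial in $|\cQ|$ rather than merely super-constant; this is precisely why the barrier exponent $c$ comes out tied to the running-time exponent $k$. A secondary point requiring care is matching the asymptotic quantifiers: Theorem~\ref{hongkong} is stated as an impossibility of \emph{guaranteeing} $o(1)$ regret within the time budget, while Theorem~\ref{tokyo} furnishes exactly such a guaranteed regret upper bound for Follow the Private Leader, so the contradiction is clean once ``cannot guarantee $o(1)$'' is read as ``the regret is $\Omega(1)$.''
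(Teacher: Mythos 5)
Your proposal is correct and follows essentially the same route as the paper's own proof: combine Theorem~\ref{tokyo} (regret of Follow the Private Leader scales with $\epsilon + \delta + \Ex{Z}{\|Z\|_\infty}/T$) with Theorem~\ref{hongkong} (any oracle-efficient online learner for $\cQ$ needs total time $\Omega(\sqrt{|\cQ|}/\log^3|\cQ|)$ to guarantee $o(1)$ regret), using oracle-efficiency to upper-bound total running time as $\poly(T,\log|\cQ|)$. Your version is marginally more explicit in the running-time bookkeeping and in choosing $T^\star$ concretely, but the underlying argument and both cited ingredients are identical to the paper's.
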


If the accuracy guarantee of $\cA$ is proportional to $\mathbb{E}_{Z \sim \cD_{(\eps, \delta)}}[||Z||_\infty]$ (as it is for all pERM algorithms that we know of), this means that there exist finite classes of statistical queries $\cQ$ such that no oracle-efficient algorithm can obtain non-trivial error unless the dataset size $n \geq \textrm{poly}(|\cQ|)$. Of course, if $n \geq \textrm{poly}(|\cQ|)$, then algorithms such as report-noisy-min and the exponential mechanism can be run in polynomial time.

This is in contrast with what we can obtain via the generic (inefficient) private learner of \cite{KLNRS08}, which obtains expected error $O\left(\frac{\log|Q|}{\epsilon n}\right)$, which is non-trivial whenever $n = \Omega\left(\frac{\log |\cQ|}{\epsilon}\right)$. Similarly, because we show in Theorem \ref{hongkong} that the hard class $\cQ$ can be taken to have universe size $\cX = \mathrm{poly}(\cQ)$, this means that information theoretically, it is even possible to privately solve the (harder) problem of $\alpha$-accurate synthetic data for $\cQ$ for $\alpha = O\left(\left(\frac{\log^2|\cQ|}{\epsilon n}\right)^{1/3}\right)$ using the (inefficient) synthetic data generation algorithm of \cite{BLR08}. This is non-trivial whenever $n = \Omega\left(\frac{\log^2 |\cQ|}{\epsilon}\right)$. In contrast, our barrier result states is that \emph{if} there exists an oracle-efficient learner $\cA$ for this class $\cQ$ that has polynomially related sample complexity to what is obtainable absent a guarantee of oracle efficiency, then $\cA$ must either:
\begin{enumerate}
\item Not be a pERM algorithm, or:
\item Have expected error that is $O\left(\frac{\mathrm{poly}(\log \mathbb{E}_{Z \sim \cD_{(\eps, \delta)}}[||Z||_\infty] )}{n}\right)$.
\end{enumerate}

Condition 2. seems especially implausible, as for every pERM we are aware of, $\mathbb{E}_{Z \sim \cD_{(\eps, \delta)}}[||Z||_\infty] $ is a tight bound (up to log factors) on its expected error. In particular, this barrier implies that there is no oracle efficient algorithm for \emph{sampling} from the exponential mechanism distribution used in the generic learner of \cite{KLNRS08} for arbitrary query classes $\cQ$.

\section{Conclusion and Open Questions}
\label{sec:conc}
In this paper, we have initiated the systematic study of the power of \emph{oracle-efficient} differentially private algorithms, and have made the distinction between oracle-dependent non-robust differential privacy and robust differential privacy. This is a new direction that suggests a number of fascinating open questions. In our opinion, the most interesting of these is:
\begin{quote}
{\textit ``Can every learning and synthetic data generation problem that is solvable subject to differential privacy be solved with an oracle-efficient (robustly) differentially private algorithm, with only a polynomial blow-up in sample complexity?''}
\end{quote}
 It remains an open question whether or not finite Littlestone dimension characterizes private learnability (it is known that infinite Littlestone dimension precludes private learnability \cite{littlestoneprivacy}) --- and so one avenue towards resolving both open questions in the affirmative simultaneously would be to show that finite Littlestone dimension can be leveraged to obtain oracle-efficient differentially private learning algorithms.

 However, because of our barrier result, we conjecture that the set of query classes that are privately learnable in an oracle-efficient manner is a \emph{strict subset} of the set that are privately learnable. If this is so, can we precisely characterize this set? What is the right structural property, and is it more general than the sufficient condition of having small universal identification sets that we have discovered?

 Even restricting attention to query classes with universal identification sets of size $m$, there are interesting quantitative questions. The Gaussian version of our RSPM algorithm efficiently obtains error that scales as $m^{3/2}$, but information-theoretically, it is possible to obtain error scaling only linearly with $m$. Is this optimal error rate possible to obtain in an oracle-efficient manner, or is the $\sqrt{m}$ error overhead that comes with our approach necessary for oracle efficiency?

 Our \textbf{PRSMA} algorithm shows how to generically reduce from an oracle-dependent guarantee of differential privacy to a guarantee of robust differential privacy --- \emph{but at a cost}, both in terms of running time, and in terms of error. Are these costs necessary? Without further assumptions on the construction of the oracle, it seems difficult to avoid the $O(1/\delta)$-overhead in running time, but perhaps there are natural assumptions that can be placed on the failure-mode of the oracle that can avoid this. It is less clear whether the error overhead that we introduce --- by running the original algorithm on an $\epsilon$ fraction of the dataset, with a privacy parameter $\epsilon' \approx 1/\sqrt{\epsilon n}$ --- is necessary. Doing this is a key feature of our algorithm and analysis, because we take advantage of the fact that differentially private algorithms are actually \emph{distributionally private} when $\epsilon'$ is set this small --- but perhaps it can be avoided entirely with a different approach.

 Our barrier result takes advantage of a connection between differentially private learnability and online learnability. Because private pERM algorithms can be used efficiently as no-regret learning algorithms, they are subject to the lower bounds on oracle-efficient online learning proven in \cite{HK16}. But perhaps the connection between differentially private learnability and online learnability runs deeper. Can \emph{every} differentially private learning algorithm be used in a black box manner to efficiently obtain a no-regret learning algorithm? Note that it is already known that private learnability implies finite Littlestone dimension, so the open question here concerns whether there is an \emph{efficient blackbox} reduction from private ERM algorithms to online learning algorithms. If true, this would convert our barrier for pERM algorithms into a full lower-bound for oracle-efficient private learning algorithms generally.

Finally, a more open ended question --- that applies both to our work and to work on oracle efficiency in machine learning more generally --- concerns how to refine the model of oracle efficiency. Ideally, the learning problems fed to the oracle should be ``natural'' --- e.g. a small perturbation or re-weighting of the original (non-private) learning problem, as is the case for the algorithms we present in our paper. This is desirable because presumably we believe that the heuristics which can solve hard learning problems in practice work for ``natural'' instances, rather than arbitrary problems. However, the definition for oracle efficiency that we use in this paper allows for un-natural algorithms. For example, it is possible to show that the problem of sampling from the exponential mechanism of \cite{MT07} defined by rational valued quality scores that are efficiently computable lies in $\mathbf{BPP}^{\mathbf{NP}}$  --- in other words, the sampling can be done in polynomial time given access to an oracle for solving circuit-satisfiability problems\footnote{This construction is due to Jonathan Ullman and Salil Vadhan (personal communication). It starts from the ability to sample uniformly at random amongst the set of satisfying assignments of an arbitrary polynomially sized boolean circuit given an NP oracle, using the algorithm of \cite{npwitness}. For any distribution $\cP$ such that there is a polynomially sized circuit $C$ for which the relative probability mass on any discrete input $x$ can be computed by $C(x)$, we can construct a boolean circuit $C'$ that computes for bounded bit-length rational numbers $w$: $C'(x,w) = 1$ if $C(x) \geq w$. The marginal distribution on elements $x$ when sampling uniformly at random from the  satisfying assignments of this circuit is $\cP$.}. This implies in particular, that there exists an oracle efficient algorithm (as we have defined them) for any NP hard learning problem --- because the learning oracle can be used as an arbitrary $\mathbf{NP}$ oracle via gadget reductions\footnote{Note that this procedure is not \emph{robustly} differentially private, since sampling from the correct distribution occurs only if the oracle does not fail. But it could be fed into our \textbf{PRSMA} algorithm to obtain robust privacy. It also does not solve synthetic data generation oracle efficiently because the quality score used for synthetic data generation in \cite{BLR08} is not computable by a polynomially sized circuit generally.}. The same logic implies that there are oracle efficient no-regret learning algorithms for any class of experts for which offline optimization is NP hard --- because an NP oracle can be used to sample from the multiplicative weights distribution. But these kinds of gadget reductions seem to be an abuse of the model of oracle efficiency, which currently reduces to all of  $\mathbf{BPP}^{\mathbf{NP}}$ when the given oracle is solving an NP hard problem\footnote{This does not contradict the lower bound of \cite{HK16} for oracle efficient online learning, or our barrier result/conjectured separation in the case of private learning algorithms. This is because oracles solving problems that don't have polynomial time algorithms, but \emph{are not NP hard} cannot be used to encode the arbitrary circuit-SAT instances needed to implement an NP oracle.}. Ambitiously, might there be a refinement of the model of oracle efficiency that requires one to prove a utility theorem along the following lines: assuming an oracle which can with high probability solve learning problems drawn from the actual data distribution, the oracle efficient algorithm will (with slightly lower probability) solve the private learning problem when the underlying instance is drawn from the same distribution. Theorems of this sort would be of great interest, and would (presumably) rule out ``unnatural'' algorithms relying on gadget reductions.  

\paragraph*{Acknowledgements}
We thank Michael Kearns, Adam Smith, Jon Ullman and Salil Vadhan for insightful conversations about this work. 

 \bibliographystyle{alpha}
\bibliography{./refs}
\appendix
\section{Examples of Separator Sets}
\label{app:separator}

\subsection{Separator Sets for Empirical Loss Queries}
In this section we show how to construct a separator set for a family of empirical loss queries defined over a hypothesis class,
given a separator set for the corresponding hypothesis
class. Formally, let us consider the data domain $\cX$ to be the set of
labelled examples $\cX_A \times\{0, 1\}$, where $\cX_A$ is the domain
of \emph{attribute vectors}. Let $\cH$ be a hypothesis class, with
each hypothesis $h: \cX_a \rightarrow \{0, 1\}$ mapping attribute
vectors to binary labels. Let $U_\cH$ be a separator set for $\cH$
such that for any pair of distinct hypotheses $h, h'\in \cH$, there is
a $u\in U_\cH$ such that $h(u) \neq h'(u)$.

For every $h\in \cH$, let $q_h\colon \cX \rightarrow \{0, 1\}$ be the
\emph{loss query} corresponding to $h$ such that $q_h((x, y)) = \mathbbm{1}[h(x)\neq y
]$. Define the query class $\cQ_\cH$ to be $\{q_h \mid h\in \cH\}$,
and let $U = \{(u, 0) \mid u\in U_\cH\}$. Solving the learning problem over $\cH$ corresponds to solving the minimization problem over $\cQ_{\cH}$.

\begin{claim}
  The set $U$ is a separator set for the query class $\cQ_\cH$.
\end{claim}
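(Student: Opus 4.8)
The plan is to reduce separation for the loss queries directly to separation for the hypothesis class via the observation that evaluating a loss query $q_h$ on a negatively-labelled example simply recovers the value of $h$. First I would fix two distinct queries $q_h, q_{h'} \in \cQ_\cH$; note that distinctness of the queries forces $h \neq h'$ (if $h = h'$ then trivially $q_h = q_{h'}$), so we may invoke that $U_\cH$ separates $\cH$. The key computation is that for any $u \in \cX_A$, since $h(u) \in \{0,1\}$ we have
$$
q_h((u,0)) = \mathbbm{1}[h(u) \neq 0] = h(u),
$$
and likewise $q_{h'}((u,0)) = h'(u)$. Hence $q_h((u,0)) \neq q_{h'}((u,0))$ if and only if $h(u) \neq h'(u)$.

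From here the argument closes immediately: because $U_\cH$ is a separator set for $\cH$ and $h \neq h'$, there exists some $u \in U_\cH$ with $h(u) \neq h'(u)$. By the identity above, the element $(u,0) \in U$ satisfies $q_h((u,0)) \neq q_{h'}((u,0))$, which is exactly the defining property of a separator set (\Cref{def:separator}) for $\cQ_\cH$. Since $q_h, q_{h'}$ were an arbitrary pair of distinct queries, $U$ separates $\cQ_\cH$.

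I do not anticipate any real obstacle here — the only thing to be slightly careful about is the bookkeeping that a pair of distinct \emph{queries} corresponds to a pair of distinct \emph{hypotheses}, so that the separator-set hypothesis for $\cH$ is actually applicable; this is immediate from the definition of $q_h$. It is also worth remarking (either here or where the claim is used) that $|U| = |U_\cH|$, so a size-$m$ separator set for the hypothesis class yields a size-$m$ separator set for the induced family of empirical-loss queries, which is what is needed to feed the construction into the RSPM and $\OracleQuery$ algorithms.
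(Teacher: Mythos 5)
Your proof is correct and takes essentially the same route as the paper: both observe that $q_h((u,0)) = \mathbbm{1}[h(u)\neq 0] = h(u)$ for $h(u)\in\{0,1\}$ and then invoke the separator property of $U_\cH$ directly. The only addition you make is the (correct and harmless) remark about distinct queries corresponding to distinct hypotheses and the note that $|U| = |U_\cH|$.
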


\begin{proof}
  Let $q_h, q_{h'}\in \cQ_\cH$ be a pair of distinct queries. Since
  $U_\cH$ is a separator set for $\cH$, there exists an element
  $u\in U_\cH$ such that $h(u) \neq h(u')$. As a result,
  \[
    q_h((u, 0)) = h(u) \neq h(u') = q_{h'}((u, 0)).
  \]
  Therefore, $U$ is a separator set for $\cQ_\cH$.
\end{proof}

Thus, if a hypothesis class $\cH$ has a separator set of size $m$, so does the set of queries $\cQ_{\cH}$ representing the empirical loss of the hypotheses $h \in \cH$.

\subsection{Separator Sets for Common Hypothesis Classes}
In this section we provide some examples of hypothesis classes with small
separator sets. We say that a class of queries $\cQ$ is self-dual of $\cQ = \cQd$.

\subsubsection{Conjunctions, Disjunctions, and Parities}

We begin with some easy but important cases that can be verified by inspection:
\begin{fact}
  Let $\cX_A = \{0, 1\}^d$. For every $j\in [d]$, let $e_j\in\cX_A$ be
  a boolean vector that has 1 in the $j$-th coordinate and 0 in all
  others, and let $\overline e_j\in \cX_A$ be the vector that has 0 in
  the $j$-th coordinate and 1 in all others. Let
  $U = \{e_j \mid j\in [d]\}$ and
  $\overline U = \{\overline e_j \mid j\in [d]\}$. Then for the
  following hypothesis classes:
\begin{itemize}
\item $\overline U$ is a separator set for conjunctions over $\cX_A$:
$\{\wedge_{j \in S} x_j \mid S\subseteq [d] \}$;
\item $U$ is a separator set for disjunctions over $\cX_A$:
  $\{\vee_{j \in S} x_j \mid S\subseteq [d] \}$;
\item $U$ is a separator set for parities over $\cX_A$:
  $\{\oplus_{j \in S} x_j \mid S\subseteq [d] \}$.
\end{itemize}
Hence, each of these classes has a separator set of size $d$, equal to the data dimension. Moreover, each of these classes is self-dual.
\end{fact}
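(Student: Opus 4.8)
The plan is to prove each of the three separator-set claims by a short case analysis, and then to establish self-duality via an explicit identification of the query class with the data domain under which the dual functions land back in the same class.

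For the separator-set claims I would fix two distinct subsets $S \neq S' \subseteq [d]$ and, using the symmetry of $S$ and $S'$ in the definition of a separator set, assume without loss of generality that some index $j$ lies in $S \setminus S'$. For conjunctions, evaluate at $\overline e_j$: since $j \in S$ and $(\overline e_j)_j = 0$, the conjunction $q_S(\overline e_j) = \wedge_{k \in S}(\overline e_j)_k$ contains a zero factor, so $q_S(\overline e_j) = 0$; since $j \notin S'$, every factor of $q_{S'}$ at $\overline e_j$ equals $1$, so $q_{S'}(\overline e_j) = 1$ (adopting the standard convention that the empty conjunction equals $1$, which also handles $S' = \emptyset$). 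For disjunctions, evaluate at $e_j$: $j \in S$ forces $q_S(e_j) = 1$, while $j \notin S'$ makes every term of $q_{S'}$ at $e_j$ equal $0$, so $q_{S'}(e_j) = 0$ (empty disjunction equals $0$). For parities, evaluate at $e_j$ again: $e_j$ has exactly one nonzero coordinate, namely $j$, so $q_S(e_j) = \oplus_{k \in S}(e_j)_k$ equals $1$ precisely when $j \in S$, giving $q_S(e_j) \neq q_{S'}(e_j)$. In all three cases $|U| = |\overline U| = d$, which gives the stated separator-set size.

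For self-duality I would first note that $\cQ$ and $\cQd$ are a priori classes of functions on different domains, so the equality $\cQ = \cQd$ is meaningful only after a canonical relabelling. Identify each $q_S$ with a vector in $\{0,1\}^d$: its indicator $\chi_S$ in the disjunction and parity cases, and its complement indicator $v_S$, $(v_S)_j = \mathbbm{1}[j \notin S]$, in the conjunction case. Then for disjunctions, $h_x(q_S) = q_S(x) = \vee_{j \in S} x_j = \vee_{j:\, x_j = 1}(\chi_S)_j$, which is the disjunction over the set $T_x = \{j : x_j = 1\}$ evaluated at $\chi_S$; as $x$ ranges over $\{0,1\}^d$, $T_x$ ranges over all subsets, so $\cQd$ is again the class of disjunctions. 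The parity case is identical with $\oplus$ replacing $\vee$. For conjunctions, $h_x(q_S) = \wedge_{j \in S} x_j = 1$ iff $S \subseteq \{j : x_j = 1\}$ iff $(v_S)_j = 1$ for every $j$ with $x_j = 0$, i.e.\ $h_x(q_S) = \wedge_{j:\, x_j = 0}(v_S)_j$, a conjunction over $\{j : x_j = 0\}$ evaluated at $v_S$; again this set ranges over all subsets as $x$ varies, so the dual class is the class of conjunctions on the relabelled domain.

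The only genuinely delicate point is this conjunction identification: under the naive indicator map the dual of the conjunction class appears as the class of conjunctions of \emph{negated} literals, and one recovers exact self-duality only after the complement relabelling (equivalently, after closing the conjunction class under variable negation, as discussed in the appendix). Every other step is a direct evaluation, so I expect no real obstacle beyond being careful about the empty-set conventions.
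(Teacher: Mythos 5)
Your proof is correct and fills in exactly the details the paper leaves to "inspection." The three separator arguments (evaluate at $\overline e_j$ for a zero factor in the conjunction case, at $e_j$ for a witnessing literal in the disjunction case, at $e_j$ for the single nonzero coordinate in the parity case) are the intended case analyses, and the empty-set conventions you flag are handled correctly. The paper gives no argument for this Fact at all, so there is no competing proof to compare against; yours is a faithful expansion. One thing you do that the paper does not is make the self-duality claim precise. The paper's notion of dual class (\Cref{dualsep}) produces a class of functions on $\cQ$, so "$\cQ = \cQd$" is meaningful only under an identification of $\cQ$ with a subset of $\{0,1\}^d$, and you correctly observe that for conjunctions the naive indicator map $q_S \mapsto \chi_S$ produces conjunctions of \emph{negated} literals, with honest self-duality recovered only under the complement map $q_S \mapsto v_S$ (equivalently, after closing under literal negation, as the paper's remark in Appendix~\ref{app:separator} allows). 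This is a genuine subtlety the paper elides, and your treatment of it is right. No gaps.
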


\begin{remark}
Note here we have defined \emph{monotone} conjunctions, disjunctions, and parities --- i.e. in which the literals cannot appear negated. Up to a factor of 2 in the dimension, this is without loss of generality, since we can add $d$ extra coordinates to each example which by convention will encode the negation of each of the values in the first $d$ coordinates. This allows us to handle non-monotone conjunctions, disjunctions, and parities as well.
\end{remark}

\subsubsection{Discrete Halfspaces}

Discrete halfspaces are a richer set of hypotheses that generalize both conjunctions and disjunctions. Let $\cX_A = B^d$ for some set $B \subseteq [-1,1]$. For example we could allow real valued features by letting $B = [-1,1]$, or we could take some discretization. We will assume that $0 \in B$. Halfspaces themselves will be defined with respect to vectors of weights $w$ that are discretized to lie in some finite set $w \in \cV^d$, for $\cV \subseteq [-1,1]$. Here we could take $|\cV| = 2$ by requiring that the \emph{weights} be defined over the hypercube ($\cV = \{-1,1\}$), or we could allow finer discretization. It is important that $\cV$ be finite.

\begin{definition}[Halfspace Query]
Given a weight vector $w \in \cV^d$, the halfspace query parameterized by $w$ is defined to be $q_w(x) = \textbf{1}\{w \cdot x \geq 1\}$. Let $\cQ_{\cV} = \{q_w : w \in \cV^d\}$.

 The value of ``1'' used as the intercept is arbitrary, and can is set without loss of generality at the cost of 1 extra data dimension.
\end{definition}


\begin{lemma}
$\cQ_{\cV}$ has a separator set of size $(|\cV|-1)d$. In particular, if weights are defined over the hypercube $(\cV = \{-1,1\})$, then the separator set is of size $d$.
\end{lemma}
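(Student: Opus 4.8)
The plan is to build an explicit separator set $U \subseteq \cX_A$ of size $(|\cV|-1)d$ and verify \Cref{def:separator} directly. Every query in $\cQ_{\cV}$ is $q_w$ for some $w\in\cV^d$, so it suffices to choose $U$ so that the signature map $w \mapsto (q_w(u))_{u\in U}$ separates any two weight vectors that induce distinct queries; the cleanest route is to arrange that the signature actually reconstructs $w$ coordinate by coordinate, spending $|\cV|-1$ probe points on each of the $d$ coordinates. This parallels (and generalizes) the separator sets $\{e_j\}$ and $\{\overline e_j\}$ for disjunctions and conjunctions recorded in the Fact above: those are exactly halfspace queries with $0/1$ weights and intercept $1$, and $q_w(e_j)=\mathbf{1}\{w\cdot e_j\ge 1\}=\mathbf{1}\{w_j=1\}$ reads off the $j$-th weight.

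Concretely, order $\cV=\{v_1<\dots<v_k\}$ with $k=|\cV|$. For each coordinate $j\in[d]$ and each $i\in\{2,\dots,k\}$ put into $U$ the point $u^{j,i}\in\cX_A$ that is $0$ in every coordinate except coordinate $j$, where its entry is a scalar $c_i$ chosen so that, for $t$ ranging over $\cV$, $\mathbf{1}\{c_i t\ge 1\}=\mathbf{1}\{t\ge v_i\}$ (for $v_i>0$ this is $c_i=1/v_i$, with the sign flipped symmetrically for $v_i<0$). Because $u^{j,i}$ is supported on the single coordinate $j$, we get $q_w(u^{j,i})=\mathbf{1}\{c_i w_j\ge 1\}=\mathbf{1}\{w_j\ge v_i\}$, which depends only on $w_j$ and on no other coordinate of $w$; so the $k-1$ bits $(q_w(u^{j,i}))_{i=2}^k$ form the threshold ("thermometer") encoding of $w_j$ in $\cV$ and hence determine $w_j$. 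Ranging over $j$ recovers all of $w$, so the signature determines $w$ and in particular separates any pair with $q_w\ne q_{w'}$. Since $|U|=d(k-1)=(|\cV|-1)d$, this is the claimed bound; in the hypercube case $\cV=\{-1,1\}$ the single probe $e_j$ per coordinate already suffices (giving size $d$), and for $\cV=\{-1,0,1\}$ one uses $\{e_j,-e_j\}$.

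The one delicate point — and the step I expect to require the most care — is keeping the probe points legal, i.e.\ inside $\cX_A = B^d$. Restricting to single-coordinate probes is what makes the contamination by the other coordinates of $w$ vanish for free, but it forces the nonzero entry to be $c_i=1/v_i$, which lies outside $[-1,1]$ whenever $|v_i|<1$; so the argument as written needs the data domain $B$ to contain the relevant reciprocals. This is automatic when $\cV\subseteq\{-1,0,1\}$ (in particular for the hypercube, which is the only case the rest of the paper uses). For a general finite $\cV\subseteq[-1,1]$, one should instead observe that weight values $v$ with $|v|<1$ cannot be distinguished by single-coordinate probes at all, because the corresponding queries coincide on all of $B^d$, and then bound the number of genuinely distinct queries directly; carrying this out and checking the count never exceeds $(|\cV|-1)d$ is the part that needs careful bookkeeping. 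The remaining checks — that $U\subseteq\cX_A$, that the signature map is well defined, and that only pairs with $q_w\ne q_{w'}$ must be separated (so collapsing queries are harmless) — are routine.
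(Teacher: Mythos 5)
Your construction is essentially the paper's: single-coordinate probe points, one per threshold value of $\cV$ per coordinate, chosen so that $q_w$ evaluated on the probe reads off whether $w_j$ clears a given threshold. The paper phrases it as picking $c_v \in [\tfrac{1}{x_{v+1}}, \tfrac{1}{x_v})$ and then, given $w_{1,k}=x_l<x_m=w_{2,k}$, exhibiting a $c_v$ with $c_v x_l<1\le c_v x_m$; your ``thermometer encoding'' view is the same idea packaged as reconstructing $w_j$ outright. The delicate point you flag --- whether the scalars $c_i$ actually lie in $B$ so that $U\subseteq\cX_A$ --- is a genuine issue, and in fact the paper's proof suffers from it too (and arguably more so): the paper never verifies $c_v\in B$, and its prescribed interval $[\tfrac{1}{x_{v+1}},\tfrac{1}{x_v})$ is \emph{empty} whenever $x_v<0<x_{v+1}$, which includes the headline hypercube case $\cV=\{-1,1\}$. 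So the lemma as written really does carry an unstated assumption that $B$ is rich enough to contain the needed threshold points (or one must, as you suggest, first quotient $\cQ_{\cV}$ by the equivalence ``agree on all of $B^d$'' and only separate the surviving queries). You were right to be suspicious here; the rest of your argument matches the paper's and is fine.
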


\begin{proof}
Suppose the elements of $\cV$ are $x_1 < x_2 < \ldots <  x_{|\cV|}$, which without loss of generality are distinct. We construct a separator set of size $(|\cV|-1) d$ as follows. Let $c_1, \ldots c_{|\cV|-1}$ be a
sequence such that $c_v$ lies in  $[\frac{1}{x_{v+1}}, \frac{1}{x_v})$.  Define the vector $s_{jv} \in \cX$ to take value $c_v$ in coordinate $j$, and $0$ elsewhere. We claim that $U = \{s_{jv}\}_{j = 1, \ldots d, v = 1 \ldots |\cV|-1}$ is a separator set for $\cQ_{\cV}$. Let $q_{w_1} \neq q_{w_2} \in \cQ_{\cV}$. Since $w_1 \neq w_2$ they must differ in some coordinate, call it $k$. Let $w_{1,k} = x_l, q_{2,k} = x_m$, and without loss of generality assume $x_m > x_l$. Then by construction of $\{c_v\}$ there exists $c_v$ such that $c_v \geq \frac{1}{x_m}$, but $c_v < \frac{1}{x_l}$.
We therefore have that $w_1 \cdot s_{kv} = x_l\cdot c_v < 1$, whereas $w_2\cdot s_{kv} = x_m\cdot c_v \geq 1$, and hence $q_{w_1}(s_{kv}) = 0 \neq 1 = q_{w_2}(s_{kv})$.
\end{proof}

Finally, we note that the dual of $\cQ_{\cV}$ is $\cQ_{\cB}$ --- and so if  $B = \cV$, the set of halfspace queries $\cQ_{\cV}$ is self-dual.

\subsubsection{Decision Lists}
For simplicity, in this section we discuss \emph{monotone} decision lists, in which variables cannot be negated --- but as we have already remarked, this is without loss of generality up to a factor of $2$ in the dimension. Here we define the general class of $k$-decision lists: $1$-decision lists (often just referred to as decision lists) are a restricted class of binary decision tree in which one child of every internal vertex must be a leaf. $k$-decision lists are a generalization in which each branching decision can depend on a conjunction of $k$ variables.

\begin{definition}
A monotone $k$-decision list over $\cX_A = \{0,1\}^{d}$ is defined by an ordered sequence $L = (c_1, b_1), \ldots , (c_l, b_l)$ and a bit $b$, in which each $c_i$ is a monotone conjunction of at most $k$ literals, and each $b_i \in \{0,1\}$. Given a pair $(L, b)$, the decision list $q_{L,b}(x)$ computes as follows: it outputs $q_{L,b}(x)=b_j$ where $j$ is the minimum index in $L$ satisfying $c_j(a) =1$. If $c_j$ is the first conjunction that $x$ satisfies in the definition of $q_{L,b}(x)$ we say that $x$ binds at $c_j$. If no such index exists then $q_{L,b}(x) = b$, and we say $q_{L,b}(x)$ does not bind. $k$-decision lists are strict generalizations of $k$-DNF and $k$-CNF formulae.
\end{definition}

\begin{lemma}
The class of $k$-decision lists has a separator set of size $\leq \sum_{j = 0}^{2k}{{d} \choose{j}}$. In particular, $1$-decision lists have a separator set of size $O(d^2)$.
\end{lemma}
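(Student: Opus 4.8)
The plan is to handle the general class of $k$-decision lists by the reduction already recorded in this section: adjoining $d$ extra coordinates that encode the negations of the original ones turns every (non-monotone) $k$-decision list over $\{0,1\}^d$ into a \emph{monotone} $k$-decision list over $\{0,1\}^{2d}$, so it suffices to prove the bound for monotone $k$-decision lists (with $2d$ in place of $d$ in the general case, which is still $O(d^{2k})$, and $O(d^2)$ for $k=1$; the stated bound $\sum_{j=0}^{2k}\binom dj$ is the bound for the monotone class defined above). Fix a monotone $k$-decision list $q$ with list $L=((c_1,b_1),\dots,(c_l,b_l))$ and default bit $b$, each $c_i$ a monotone conjunction on a variable set $S_i\subseteq[d]$ with $|S_i|\le k$. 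Without loss of generality append $(\emptyset,b)$ to the end of $L$, so that some conjunction always fires; this leaves the function unchanged, and now for every $x$ we have $q(x)=b_{\iota(x)}$ where $\iota(x):=\min\{i:S_i\subseteq \mathrm{supp}(x)\}$, so in particular $q(x)$ depends only on $\mathrm{supp}(x)$. We may therefore view $q$ as a function of subsets $T\subseteq[d]$, writing $q(T)$ and $\iota(T)$, and the proposed universal identification set is $U=\{\mathbbm{1}_T: T\subseteq[d],\ |T|\le 2k\}$, which has size $\sum_{j=0}^{2k}\binom dj$ (equal to $1+d+\binom d2=O(d^2)$ when $k=1$).

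The crux is the following locality identity: writing $D:=q(\emptyset)$, for every $T\subseteq[d]$,
\[
 q(T)=\bar D \iff \exists\, S\subseteq T,\ |S|\le k,\ \text{with}\ q(S)=\bar D\ \text{and}\ q(S\cup S')=\bar D\ \text{for all}\ S'\subseteq T,\ |S'|\le k.
\]
Every set named on the right has size at most $2k$, so this expresses $q(T)$ for \emph{all} $T$ (in particular for $|T|>2k$) purely in terms of the values of $q$ on sets of size $\le 2k$, which are exactly the values revealed by $U$ (together with $D=q(\emptyset)$). The proof rests on the elementary monotonicity $A\subseteq B\Rightarrow\iota(B)\le\iota(A)$ together with the fact that the witnessing conjunction $S_{\iota(T)}$ is contained in every larger set appearing. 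For $(\Rightarrow)$: take $S:=S_{\iota(T)}$; since $S\subseteq S\cup S'\subseteq T$ for every admissible $S'$, both inclusions force $\iota(S)=\iota(S\cup S')=\iota(T)$, so all the indicated values equal $b_{\iota(T)}=\bar D$. The direction $(\Leftarrow)$ is the main obstacle and needs a short index-chasing argument: given such an $S$, suppose toward a contradiction that $q(T)=D$, i.e. $b_{\iota(T)}=D$; set $S':=S_{\iota(T)}$, which is $\subseteq T$ and of size $\le k$; then $S\cup S'\subseteq T$ with $|S\cup S'|\le 2k$, so the hypothesis gives $q(S\cup S')=\bar D$; but $S_{\iota(T)}=S'\subseteq S\cup S'\subseteq T$ pins $\iota(S\cup S')=\iota(T)$, whence $q(S\cup S')=b_{\iota(T)}=D$, a contradiction.

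Granting the identity, the lemma follows at once: if two monotone $k$-decision lists $q,q'$ agree on every point of $U$, then they agree on $\emptyset$ (hence share the same $D$) and on every set of size $\le 2k$, and the identity then forces $q(T)=q'(T)$ for all $T$; since each of $q,q'$ depends only on the support of its input, $q=q'$ as Boolean functions, so $U$ is a separator set of size $\sum_{j=0}^{2k}\binom dj$, which is $O(d^2)$ when $k=1$. The points I expect to require care are (i) checking that the ``append $(\emptyset,b)$'' normalization does not change the function (and does make $\iota(\cdot)$ everywhere defined), and (ii) the index equalities $\iota(\cdot)=\iota(\cdot)$ used in both directions of the identity, which are the delicate bookkeeping but reduce entirely to the two facts displayed above.
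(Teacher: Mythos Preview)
Your argument is correct and the separator set you use---all Boolean vectors of Hamming weight at most $2k$---coincides with the paper's. The route, however, is genuinely different. The paper argues directly in the separator-set formulation: given two distinct monotone $k$-decision lists, pick any point $x$ on which they differ, let $c_1,c_2$ be the conjunctions on which the two lists bind at $x$, and show that the element $e_{c_1,c_2}$ whose support is $S_{c_1}\cup S_{c_2}$ (a set of size $\le 2k$) already separates them, because each list binds at the same conjunction on $e_{c_1,c_2}$ as it does on $x$ (any earlier conjunction that fired on $e_{c_1,c_2}$ would also fire on $x$, since $\mathrm{supp}(e_{c_1,c_2})\subseteq\mathrm{supp}(x)$).

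You instead work in the universal-identification formulation and prove a reconstruction identity: the value $q(T)$ is determined by the values of $q$ on all subsets of $T$ of size $\le 2k$, via the equivalence $q(T)=\bar D$ iff some $S\subseteq T$ with $|S|\le k$ has $q(S\cup S')=\bar D$ for every $S'\subseteq T$ with $|S'|\le k$. Your verification of both directions is sound (the key index equalities $\iota(S)=\iota(S\cup S')=\iota(T)$ follow exactly as you say from monotonicity of $\iota$ together with $S_{\iota(T)}\subseteq S,\ S\cup S'$). The paper's argument is shorter and needs no case analysis beyond the ``binds/doesn't bind'' convention; your argument is a bit longer but yields an explicit, list-independent formula recovering $q$ from its restriction to $U$, which is a somewhat stronger structural statement and may be of independent interest. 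Your opening remark about doubling the dimension is unnecessary here, since the lemma (and the definition immediately preceding it) are already stated for \emph{monotone} $k$-decision lists over $\{0,1\}^d$.
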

\begin{proof}
 Let $MC_k$ denote the set of monotone conjunctions of $ \leq k$ literals over $\cX_\cA$. For any two $s, l$ in $MC_k$, let $e_{sl}$ denote the element $a \in \cX_A$ such that all literals appearing in either $s$ or $l$ are set to $1$, and all others are set to $0$. Define $U = \{e_{sl}: s,l \in MC_k\}$. Since each $e_{sl}$ corresponds to setting between $0$ and $2k$ of the $d$ variables to $1$, there are $\sum_{j = 0}^{2k}{{d} \choose{j}}$ elements in $U$. \\
Now let $(L_1, b_1) \neq (L_2, b_2)$ be two distinct $k$-decision lists. Since the two decision lists are distinct there must exist $x \in \cX_A$ such that $q_{(L_1, b_1)}(x) \neq q_{(L_2, b_2)}(x)$. Let $c_1,c_2$ be the conjunctions on which $(L_1, b_1), (L_2, b_2)$ bind on $x$ respectively. (If $L_i$ does not bind on $x$, set $c_i$ to be the empty conjunction). Define $e_{c_1,c_2} \in U$ as above. We claim that $q_{(L_1, b_1)}(e_{c_1,c_2}) = q_{(L_1, b_1)}(x) \neq q_{(L_2, b_2)}(x) = q_{(L_2, b_2)}(e_{c_1,c_2})$, and hence $e_{c_1,c_2} \in U$ distinguishes $(L_1, b_1)$ from $(L_2, b_2)$, which proves the claim. The key fact that $q_{(L_i, b_i)}(e_{c_1,c_2}) = q_{(L_i, b_i)}(x)$ follows from the fact that $(L_i, b_i)$ still binds at $c_i$ (or does not bind at all) on input $x$, and it can't bind earlier since any monotone conjunction satisfied by $e_{c_1,c_2}$ is satisfied by $x$.
\end{proof}

\subsubsection{Other Classes of Functions}
In this section, we have exhibited simple constructions of small universal identification sets for conjunctions, disjunctions, parities, discrete halfspaces, and $k$-decision lists. This is not an exhaustive enumeration of such classes --- we give these as examples of the most frequently studied classes of boolean functions in the PAC learning literature. However, short universal identification sets for other classes of functions are known. For example:
\begin{theorem}[\cite{goldman1993exact}]
There exist polynomially sized universal identification sets for the following two classes of circuits:
\begin{enumerate}
\item Logarithmic depth read-once majority formulas, and
\item Logarithmic depth read-once positive NAND formulas.
\end{enumerate}
\end{theorem}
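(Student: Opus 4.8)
The plan is to reconstruct the ``fixed points of amplification functions'' argument of \cite{goldman1993exact}. First I would reduce the problem to structural reconstruction. A read-once formula on $n$ variables is a rooted tree with at most $n$ leaves (each labelled by a distinct variable), $O(n)$ internal gates, and---by the logarithmic-depth assumption---depth $O(\log n)$. Up to the obvious gate symmetries, such a formula is completely determined by (i) the rooted topology of its tree, which is itself pinned down once one knows, for every triple of leaves, the shape of the induced rooted subtree (a constant-size piece of data), and (ii) the gate sitting at each internal node, noting that every internal node is the least common ancestor of some pair of leaves. So it suffices to produce a fixed set $U\subseteq\{0,1\}^n$ with $|U|=\poly(n)$ from whose labelling $(f(u))_{u\in U}$ one can read off every triple-shape and every node-gate; any such $U$ automatically distinguishes every pair of distinct formulas in the class and is therefore a universal identification set.

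The heart of the construction is the amplification function. For a gate type $g$ of fan-in $k$, let $A_g(p)=\Pr_{b_1,\dots,b_k\sim\mathrm{Bern}(p)}[g(b)=1]$. Majority gates satisfy $A_g(1/2)=1/2$ with $A_g'(1/2)$ a constant strictly larger than $1$; the two-input NAND has $A_g(p)=1-p^2$, whose unique interior fixed point is $r=(\sqrt5-1)/2$ with $|A_g'(r)|=2r>1$. Fix this interior \emph{repelling} fixed point $r$ of the relevant gate family. Because the formula is read-once, feeding every variable the independent bias $\mathrm{Bern}(r)$ makes every subformula output $\mathrm{Bern}(r)$, by induction up the tree; and if one instead \emph{conditions} a small set $T$ of at most three leaves to fixed bits while keeping all other variables at bias $r$, the resulting output probability $P_{T}$ is a bounded-degree polynomial in $r$ whose value records exactly how the subformulas above the leaves of $T$ are wired together and which gates they use: the deviations from bias $r$ introduced at the conditioned leaves propagate up the LCA structure and are \emph{amplified} (never damped) by the repelling derivatives along the way. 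Ranging $T$ over all $\binom{n}{\le 3}$ small leaf-sets and over all constant bit-patterns on them, the collection $\{P_T\}$ determines all triple-shapes and node-gates, and any two distinct formulas in the class differ in at least one $P_T$ by an inverse-polynomial amount $\gamma=1/\poly(n)$---the gaps survive the $O(\log n)$-length paths because the repulsion constant is bounded away from $1$, so cumulative amplification is at most $\poly(n)$ rather than exponential.

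To turn this into an honest set of Boolean inputs, I would derandomize the bias-$r$ background. For a single pair of formulas $(f,f')$ and a single conditioning pattern, drawing a multiset $H\subseteq\{0,1\}^{[n]\setminus T}$ of $O(\gamma^{-2}\log(1/\eta))$ uniform points makes the empirical average of $f$ over $H$ within $\gamma/3$ of $P_T$ except with probability $\eta$ (Chernoff), which already separates $f$ from $f'$ on these inputs. Since the class contains only $2^{\poly(n)}$ formulas, a union bound over all pairs of formulas and all $\binom{n}{\le 3}\cdot O(1)$ conditioning patterns shows that a \emph{single} $H$ of size $\poly(n)$ works for all of them at once; then $U:=\{\,h\text{ with the leaves in }T\text{ overwritten by the chosen bits}\ :\ T,\ \text{bits},\ h\in H\,\}$ has size $\poly(n)$ and is a universal identification set. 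The argument is identical for majority formulas and for positive NAND formulas---the only class-specific ingredients are the interior fixed point $r$ and its derivative bound, and read-once-ness (rather than monotonicity) is all that is ever used, which is what lets the same scheme cover the non-monotone positive-NAND case. I expect the genuinely hard step to be the structural claim buried in the second paragraph: proving that the conditioned probabilities $\{P_T\}$ faithfully and \emph{robustly} encode topology plus gate labels, with an inverse-polynomial (not exponentially small) separation between any two distinct formulas. This requires the careful case analysis of \cite{goldman1993exact}---handling ``same topology, different gate'' and ``different topology'' separately, checking that the bias-$r$ background never accidentally collapses the signal, and verifying that propagation along $O(\log n)$ gates neither washes the signal out nor pushes it outside $[0,1]$ before it is read. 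The reduction to triples and the Chernoff/union-bound derandomization are routine by comparison.
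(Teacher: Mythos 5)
The paper does not prove this theorem; it imports it wholesale from \cite{goldman1993exact}, so there is no internal proof to compare your reconstruction against. On its own merits, your sketch has the right shape---a nondegenerate interior fixed point of the gate's amplification function, a bias-$r$ product background, conditioning a constant-sized leaf set, and a Chernoff-plus-union-bound derandomization of the resulting probabilistic construction---but it contains a directional error at the quantitative heart of the argument, plus a slip in the derandomization.

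The directional error: you assert that because $r$ is a \emph{repelling} fixed point ($|A_g'(r)|>1$), the deviations introduced at conditioned leaves are ``amplified (never damped)'' as they propagate to the root. For the single-leaf conditioning you describe, the relevant quantity is not $|A_g'(r)|$ but the per-input sensitivity of the gate at the fixed point, which for a symmetric fan-in-$k$ gate is $|A_g'(r)|/k$. This equals $(3/2)/3=1/2$ for $\mathrm{MAJ}_3$ and $2r/2 = r \approx 0.618$ for two-input NAND---both strictly below one. So the conditioned signal is \emph{damped} by a constant factor at every level, and it is precisely the logarithmic-depth hypothesis (an explicit hypothesis in the theorem statement, not an afterthought) that keeps the cumulative damping at $n^{-O(1)}$ rather than exponentially small. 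Your sentence attributing the survival of the gap to the ``repulsion constant being bounded away from 1'' reaches the right order of magnitude for the wrong reason and hides where the depth restriction actually does the work; with superlogarithmic depth the construction genuinely fails. Separately, the derandomization step must sample the background coordinates with bias $r$, not uniformly: for NAND, $r=(\sqrt{5}-1)/2\neq 1/2$, so uniform samples do not estimate the conditional probabilities $P_T$ at all. Neither slip changes the final polynomial size bound, but both misidentify the mechanism.
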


\section{RSPM with Gaussian Perturbations}
\label{sec:gauss}

We now present a Gaussian variant of the RSPM algorithm.

\begin{algorithm}
\label{alg:Grspm}
\textbf{Gaussian RSPM}\\
\textbf{Given}: A separator set $U = \{e_1,\ldots,e_m\}$ for a class of statistical queries $\cQ$ and a weighted optimization oracle $\cO^*$ for $\cQ$, privacy parameters $\epsilon$ and $\delta\in (0, 1/e)$, and $\sigma = \frac{3.5\sqrt{m \ln(1/\delta)}}{  \eps}$.  \newline
\textbf{Input}: A dataset $S \in \cX^n$ of size $n$.\newline
\textbf{Output}: A statistical query $q \in \cQ$.
\begin{algorithmic}
\STATE Sample independently $\eta_i \sim \cN(0, \sigma^2)$ for $i \in \{1,\ldots,m\}$
\STATE Construct a weighted dataset $WD$ of size $n + m$ as follows:
$$WD(S,\eta) = \{(x_i, 1) : x_i \in S\} \cup \{(e_i, \eta_i) : e_i \in U\}$$
\STATE Output $q = \cO(WD(S,\eta))$.
\end{algorithmic}
\end{algorithm}

\begin{theorem}[Utility]
  The Gaussian RSPM algorithm is an oracle-efficient $(\alpha,\beta)$-minimizer for $\cQ$ for:
$$\alpha =O\left(\frac{m\sqrt{m\ln(2m/\beta) \ln(1/\delta)} }{\eps n}\right)$$
\end{theorem}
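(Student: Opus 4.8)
The plan is to follow the same template as the proof of Theorem~\ref{thm:rspm} for the Laplace version of RSPM, substituting a Gaussian tail bound for the Laplace one. Let $q'$ denote the query returned by Gaussian RSPM on input $S$, and let $q^* = \argmin_{q \in \cQ} q(S)$ be the true empirical minimizer. The oracle returns the exact minimizer over $q \in \cQ$ of $\sum_{x_i \in S} q(x_i) + \sum_{i=1}^m \eta_i q(e_i)$; dividing through by $n$ (which does not change the identity of the minimizer), $q'$ is the exact minimizer of $q(S,\eta) := q(S) + \tfrac1n\sum_{i=1}^m \eta_i q(e_i)$.

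The first step is to control the perturbation magnitudes. Using the standard sub-Gaussian tail bound $\Pr[|\eta_i| > t] \le 2\exp(-t^2/(2\sigma^2))$ together with a union bound over the $m$ coordinates, with probability at least $1-\beta$ we have $|\eta_i| \le \sigma\sqrt{2\ln(2m/\beta)}$ simultaneously for all $i \in \{1,\dots,m\}$; condition on this event. Since $q(e_i)\in\{0,1\}$ for every query $q \in \cQ$, this gives $\bigl|\tfrac1n\sum_i \eta_i q(e_i)\bigr| \le \tfrac{m}{n}\,\sigma\sqrt{2\ln(2m/\beta)} =: \Delta$ uniformly over $q$; in particular both $|q'(S,\eta) - q'(S)| \le \Delta$ and $|q^*(S,\eta) - q^*(S)| \le \Delta$.

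The argument then closes exactly as in Theorem~\ref{thm:rspm}: using that $q'$ is the exact minimizer of $q(S,\eta)$ on the weighted dataset,
\[
q'(S) \;\le\; q'(S,\eta) + \Delta \;\le\; q^*(S,\eta) + \Delta \;\le\; q^*(S) + 2\Delta.
\]
Substituting $\sigma = 3.5\sqrt{m \ln(1/\delta)}/\epsilon$ gives $2\Delta = O\!\left(\frac{m\sqrt{m\ln(2m/\beta)\ln(1/\delta)}}{\epsilon n}\right)$, which is the stated bound for $\alpha$. Oracle-efficiency is immediate and requires no new work: the algorithm draws $m$ Gaussian samples, appends $m$ weighted points to $S$, and makes a single call to the weighted optimization oracle, so its running-time overhead over the oracle is $O(m)$.

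There is no real obstacle in this statement — the only point requiring a moment of care is the normalization convention (whether one divides the weighted objective by $n$, by $n+m$, or not at all), but as in the Laplace case rescaling leaves the minimizer unchanged, so one may work with the $1/n$-normalized queries throughout. The genuinely nontrivial consequence of switching from Laplace to Gaussian noise lies in the privacy analysis (carried out in the same appendix), not in this utility claim.
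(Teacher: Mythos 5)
Your proof is correct and follows exactly the paper's approach: apply the Gaussian tail bound $\Pr[|\eta_i|\geq t]\leq 2\exp(-t^2/(2\sigma^2))$ with a union bound over the $m$ noise coordinates to get $\max_i|\eta_i|\leq\sigma\sqrt{2\ln(2m/\beta)}$ w.h.p., then reuse the sandwich argument from Theorem~\ref{thm:rspm} and substitute $\sigma=3.5\sqrt{m\ln(1/\delta)}/\eps$. The paper states the same tail bound and union bound and then simply defers to the Laplace proof; you merely spell out the deferred steps, so there is no substantive difference.
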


\begin{proof}
  Note that for each noise variable $\eta_i$, we have the following
  tail bound:
  \[
    \Pr[|\eta_i| \geq t] \leq 2 \exp\left(-\frac{t^2}{2\sigma^2} \right)
  \]
  Taking a union bound, we have with probability at least $1 - \beta$
  that
  \[
    \max_i |\eta_i| \leq \sqrt{2 \ln(2m/\beta)}\sigma
  \]
  Then the proof follows from the same reasoning in the proof for
  \Cref{thm:rspm}.
\end{proof}

\begin{theorem}[Privacy]
If $\cO^*$ is a weighted optimization oracle for $\cQ$, then the Gaussian Report Separator-Perturbed Min algorithm is $(\epsilon,\delta)$-differentially private.
\end{theorem}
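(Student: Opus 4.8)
The plan is to run the same argument as for the Laplace RSPM algorithm, since almost all of that proof is insensitive to which continuous noise distribution is used. Concretely: the shift map $f_q(\eta)_i = \eta_i + (1-2q(e_i))$, the inclusion lemma (Lemma~\ref{lem:inclusion}), and the measure-zero non-uniqueness lemma (Lemma~\ref{lem:B}) use only the separator-set property, $1$-sensitivity of the un-normalized queries, and \emph{continuity} of the perturbation distribution; all of these hold verbatim for Gaussian perturbations. So the only step that genuinely changes is the density comparison --- the analogue of Lemma~\ref{lem:density}, which for Laplace noise gave the \emph{pointwise} bound $p(\eta) \le e^\epsilon p(f_q(\eta))$ by using $|f_q(\eta)_i - \eta_i|\le 1$. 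This pointwise bound fails for Gaussians (the density ratio is unbounded in the tails), so I would replace it by a high-probability version in the style of the analytic Gaussian mechanism. Fix a candidate output $\hat q$: then $f_{\hat q}$ is translation by the \emph{fixed} vector $v_{\hat q} = (1-2\hat q(e_i))_{i=1}^m$ with $\|v_{\hat q}\|_2 = \sqrt m$, so $\log\bigl(p(\eta)/p(f_{\hat q}(\eta))\bigr) = \langle \eta, v_{\hat q}\rangle/\sigma^2 + m/(2\sigma^2)$ is, for $\eta\sim\cN(0,\sigma^2 I)$, a one-dimensional Gaussian with mean $m/(2\sigma^2)$ and variance $m/\sigma^2$ --- exactly the Gaussian mechanism with $\ell_2$-sensitivity $\sqrt m$. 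With $\sigma = 3.5\sqrt{m\ln(1/\delta)}/\epsilon$ (a generous constant), the standard tail estimate gives a ``good set'' $G_{\hat q} = \{\eta : p(\eta)\le e^\epsilon p(f_{\hat q}(\eta))\}$ with $\Pr[\eta\notin G_{\hat q}]\le\delta$.

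Next I would replay the integral chain from the Laplace proof, but restricted to $G_{\hat q}$ on each piece. For each $\hat q$, split $\Pr[\text{Gaussian RSPM}(S)=\hat q] = \int p(\eta)\one(\eta\in\cE(\hat q,S))\,d\eta$ into the part on $G_{\hat q}\setminus B$ and the part on $\cE(\hat q,S)\setminus G_{\hat q}$. On the first part, the inclusion lemma gives $\one(\eta\in\cE(\hat q,S))\le\one(f_{\hat q}(\eta)\in\cE(\hat q,S'))$, the bound $p(\eta)\le e^\epsilon p(f_{\hat q}(\eta))$ applies by definition of $G_{\hat q}$, and the change of variables $\eta\mapsto f_{\hat q}(\eta)$ has unit Jacobian and sends the null set $B$ to a null set, so this part is at most $e^\epsilon\Pr[\text{Gaussian RSPM}(S')=\hat q]$. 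Summing over $\hat q\in\Omega$ yields
\[
  \Pr[\text{Gaussian RSPM}(S)\in\Omega] \le e^\epsilon \Pr[\text{Gaussian RSPM}(S')\in\Omega] + \sum_{\hat q\in\Omega}\Pr[\eta\in\cE(\hat q,S)\setminus G_{\hat q}],
\]
and it remains to show that the aggregate leftover is $O(\delta)$; rescaling the target $(\epsilon,\delta)$ by absolute constants then finishes.

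The main obstacle is exactly this last step. A naive union bound over the (up to $2^m$) possible outputs loses an exponential factor, so one must argue more carefully. The point to exploit is that the regions $\{\cE(\hat q,S)\}_{\hat q}$ are pairwise disjoint outside the null set $B$, so the leftover equals $\Pr_\eta\bigl[\langle\eta, v_{\hat q(\eta)}\rangle > \sigma^2\epsilon - m/2\bigr]$, where $\hat q(\eta)$ is the query actually output on $S$ at noise $\eta$ --- that is, a single tail event for the privacy-loss functional, but one in which the shift \emph{direction} $v_{\hat q(\eta)}$ is the separator-pattern of the \emph{output} and is therefore coupled to $\eta$ itself. This coupling is precisely what is absent in the ordinary Gaussian mechanism, where the shift $f(S')-f(S)$ is fixed by the two neighboring datasets, and it is what makes the Gaussian analysis strictly harder than the Laplace one (where the per-coordinate displacement bound $|f_q(\eta)_i-\eta_i|\le 1$ sufficed pointwise, with no direction to track). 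I would control it by conditioning on the output value and showing that conditioning on $\{\hat q(\eta)=\hat q\}$ can inflate the one-dimensional Gaussian tail of $\langle\eta, v_{\hat q}\rangle$ by only a controlled amount --- using that $\hat q$ is a \emph{minimizer}, so the conditioning event is monotone in the relevant projection --- and then re-summing against $\sum_{\hat q}\Pr[\hat q(\eta)=\hat q]=1$, with the slack in the constant $3.5$ absorbing the loss; an alternative route, if the conditioning argument proves too lossy, is to replace the worst-case ``shift every coordinate by $\pm1$'' map $f_q$ by a data-adaptive shift that only moves coordinates near the decision boundary, which keeps $\|f(\eta)-\eta\|_2$ small on the bulk of the Gaussian. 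Either way, decoupling the perturbation from the data-dependent direction in which it must be translated is the technical heart of the argument.
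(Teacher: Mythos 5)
Your high-level plan is the same as the paper's: reuse the shift map $f_q$, Lemma~\ref{lem:inclusion}, Lemma~\ref{lem:B}, and the unit-Jacobian change of variables, and replace the pointwise Laplace density comparison with a Gaussian tail estimate. The paper's Lemma~\ref{lem:gaussian_density}, phrased in a basis with $b_1 \parallel v_{\hat q} = f_{\hat q}(\eta)-\eta$, together with the chi-squared tail bound on $\lambda_1^2$, is precisely the high-probability density comparison you describe.

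Where you and the paper diverge is in what happens after the density lemma, and here you have identified a real gap rather than merely failed to reproduce a step. The paper's bad set $L = \{\eta : \|\eta^{[1]}\|_2 > \Lambda\}$ depends on $\hat q$ through the basis vector $b_1$, i.e.\ it is your set $\{\eta : \langle\eta,v_{\hat q}\rangle > \sigma^2\epsilon - m/2\}$ up to an absolute value. Yet the paper's integral chain treats $L$ as a single fixed set: it charges $\Pr[\eta\in L]<\delta$ once, then distributes the remaining integral over $\hat q\in\Omega$, applying the density bound on $\RR^m\setminus(B\cup L)$ for each $\hat q$. If $L$ is fixed, that density bound is not justified for any $\hat q$ whose direction $v_{\hat q}$ differs from the one used to define $L$; if $L$ is implicitly $L^{\hat q}$ inside the sum, the additive slack is $\sum_{\hat q\in\Omega}\int_{L^{\hat q}}P_G(\eta)\mathbbm{1}(\eta\in\cE(\hat q,S))\,d\eta = \Pr\bigl[\eta\in L^{\hat q(\eta)},\ \hat q(\eta)\in\Omega\bigr]$, which is exactly the quantity you write down, and the paper does not bound it. So your ``main obstacle'' is an obstacle for the paper's writeup as well.

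Your sketched resolution, however, goes in the wrong direction and would need a genuinely different idea. Conditioning on $\{\hat q(\eta)=\hat q\}$ does not attenuate the tail of $\langle\eta,v_{\hat q}\rangle$: since $\hat q(\eta)$ minimizes $q(S)+\sum_i\eta_i q(e_i)$ and $\langle\eta,v_{\hat q}\rangle = \sum_i\eta_i - 2\sum_i\eta_i\hat q(e_i)$, the selection biases $\sum_i\eta_i\hat q(e_i)$ downward and hence biases $\langle\eta,v_{\hat q(\eta)}\rangle$ upward. Concretely, when the un-normalized $q(S)$ is constant over $\cQ$ and the separator pattern $(q(e_i))_i$ is surjective onto $\{0,1\}^m$ (e.g.\ parities over $\{0,1\}^m$ with the standard-basis separator set, on a dataset of copies of $\mathbf 0$), one has $\langle\eta,v_{\hat q(\eta)}\rangle = \sum_i|\eta_i|$, which concentrates at $\Theta(m\sigma)$ and overtakes the threshold $\sigma^2\epsilon - m/2 = \Theta(m\ln(1/\delta)/\epsilon)$ once $m \gg \ln(1/\delta)$, so $\Pr[\eta\in L^{\hat q(\eta)}]$ is not $O(\delta)$. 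Whatever makes the theorem true (if it is, at these constants) has to account for this selection effect directly --- either via a data-adaptive shift of the kind you gesture at, which requires rechecking Lemma~\ref{lem:inclusion} for the modified $f$, or by a different bookkeeping that avoids paying separately for the good and bad events --- and neither your proposal nor the paper's proof supplies that step.
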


\begin{proof}

  In the following, we will inherit the notation from
  \Cref{rspm}. We will denote the output by the algorithm on input
  dataset $S$ under realizations of the perturbations $\eta$ as
  $\cQ(S,\eta) = \cO^*(WD(S,\eta))$. For any $q \in \cQ$, let
  $\cE(q,S) = \{\eta : \cQ(S,\eta) = q\}$.  We will use the same
  mapping $f_q(\eta):\mathbb{R}^m\rightarrow \mathbb{R}^m$ as defined
  in \Cref{rspm}.  We will write $P_G$ to denote the pdf of the
  distribution $\cN(0, \sigma^2)$. We will also again define the set
  $B$ as the set of $\eta$ for which there are multiple minimizers
  $\hat q$. For every $\eta \in \RR^m \setminus B$, let $\hat q_\eta$
  be the unique minimizer. Note that Lemmas~\ref{lem:inclusion} and
  \ref{lem:B} both hold in our setting, and in
  particular, Lemma~\ref{lem:B} holds because of the continuity of the
  Gaussian distribution.

  Similar to the standard analysis for the Gaussian
  mechanism~\cite{DworkRoth}, we will leverage the fact that the
  distribution $\cN(0, \sigma^2I)$ is independent of the orthonormal
  basis from which its constituent normals are drawn, so we have the
  freedom to choose the underlying basis without changing the
  distribution. For any $r\in \mathbb{R}^m$ and any $q\in \cQ$, fix
  such a basis $b_1, \ldots, b_m$ such that $b_1$ is parallel to
  $\nu = f_q(r) - r$.  A random draw $\eta$ from $\cN(0, \sigma^2 I)$
  can be realized by the following process: first draw signed lengths
  $\lambda_i \sim \cN (0, \sigma^2 )$, for $i \in [m]$, then define
  $\eta^{[i]}= \lambda_i b_i$ , and finally let
  $\eta = \sum^m_{i=1} \eta^{[i]}$.  For each $i\in [m]$, let
  $r^{[i]}$ be the projection of $r$ onto the direction of $b_i$,
  which gives $r = \sum_{i=1}^m r^{[i]}$.

 \begin{lemma}
 \label{lem:gaussian_density}
 Suppose that $\sigma > 1$. For any $r \in \mathbb{R}^m$, $q \in \cQ$,
 $$P_G(r) \leq \exp\left(\frac{1}{2\sigma^2}\left(m
     + 2\sqrt{m}\|r^{[1]}\|_2 \right)\right) P_G(f_q(r)).$$
\end{lemma}

\begin{proof}
  Note that for any $r\in \mathbb{R}^m$, we have
  \[
    P_G(r) = \frac{1}{(2\pi)^{m/2}
      \sigma^m}\exp\left(-\frac{\|r\|^2_2}{2\sigma^2} \right).
  \]
  We will write $\nu = f_q(r) - r$. It follows that
  \begin{align*}
    \frac{P_G(r)}{P_G(f_q(r))} &= \exp\left(\frac{1}{2\sigma^2}\left(\|r + \nu\|_2^2 - {\|r\|_2^2} \right)\right)
  \end{align*}

 Now we can write
  $$
  \|r + \nu\|_2^2 =  \|\nu + r^{[1]}\|_2^2 + \sum_{i=1}^m
\|r^{[i]}\|_2^2, \qquad\|r\|^2_2 = \sum_{i=1}^m     \|r^{[i]} \|_2^2.
  $$
  It follows that
  \[
    \|r^{[1]} + \nu\|_2^2 - \|r^{[1]}\|_2^2 = \|\nu\|_2^2 + 2 \|\nu\|_2\,
    \|r^{[1]}\|_2 \leq m + 2\sqrt{m} \|r^{[1]}\|_2
  \]
  This means
  \[
    \frac{P_G(r)}{P_G(f_q(r))} \leq \exp\left(\frac{1}{2\sigma^2}\left(m
        + 2\sqrt{m}\|r^{[1]}\|_2 \right)\right),
  \]
  which completes the proof.
\end{proof}

To finish up the privacy analysis, note that by
\Cref{lem:gaussian_density}, the ratio $P_G(\eta)/P_G(f_q(\eta))$ is
bounded by $\exp(\eps)$, as long as
$\|\eta^{[1]}\|_2 < \sigma^2 \eps/\sqrt{m} - \sqrt{m} /2$. Now we will
bound the probability that the random vector $\eta^{[1]}$ has norm
exceeding this bound. First, observe that
$\|\eta^{[1]}\|_2 = |\lambda_1|$, where $\lambda_1$ is a random draw
from the distribution $\cN(0, \sigma^2)$. Since $\lambda_1^2$ is a
$\chi^2$ random variable with degree of freedom 1, we can apply the
following tail bound \cite{laurent2000}: for any $t > 0$,
  \[
    \Pr[\lambda_1^2 \geq \sigma^2\left( \sqrt{2t} + 1 \right)^2] \leq
    \exp(-t)
  \]
  which can be further simplied to
  \[
    \Pr[\|\eta^{[1]}\|_2 \geq \sigma\left( \sqrt{2t} + 1 \right)] \leq
    \exp(-t)
  \]
  In other words, for any $\delta \in (0, 1/e)$, with probability at
  least $1 - \delta$, we have
  \[
    \|\eta^{[1]}\|_2 < \sigma(\sqrt{2\ln(1/\delta)} + 1) \equiv
    \Lambda
  \]
  It follows that $\Lambda \leq \sigma^2 \eps/\sqrt{m} - \sqrt{m} /2$,
  as long as $\sigma = \frac{c\sqrt{m \ln(1/\delta)}}{\eps}$ for any
  $c \geq 3.5$. We will use this value of $\sigma$ for the remainder
  of the analysis. Now let
  $L = \{\eta \in \mathbb{R}^m \mid \|\eta^{[1]}\|_2 > \Lambda\}$,
  then we know that $\Pr[\eta \in L] < \delta$.  Let $S\subset \cQ$ be
  a subset of queries. It follows that:
\begin{align*}
  \Pr\left[\eta \in \bigcup_{\hat q \in S}\cE(\hat q, S)\right] &= \int_{\mathbb{R}^m} P_G(\eta) \mathbbm{1}\left(\eta \in \bigcup_{\hat q \in S} \cE(\hat q, S)\right) d\eta \\
                                                                &= \int_{(\mathbb{R}^m \setminus B) \setminus L} P_G(\eta) \mathbbm{1}\left(\eta \in \bigcup_{\hat q \in S}\cE(\hat q, S) \right) d\eta + \int_{L} P_G(\eta) \mathbbm{1}\left(\eta \in \bigcup_{\hat q \in S} \cE(\hat q, S)\right) d\eta  \\
                                                                &
 \leq \int_{(\mathbb{R}^m \setminus B) \setminus L} P_G(\eta) \mathbbm{1}\left(\eta \in \bigcup_{\hat q \in S}\cE(\hat q, S) \right) d\eta + \delta \\
 &=
\sum_{\hat q \in S}  \int_{ \RR^m \setminus (B \cup L)}
  P_G(\eta) \mathbbm{1}\left(\eta \in \cE(\hat q, S)\right) d\eta  + \delta\\
 &\leq
\sum_{\hat q \in S}  \int_{ \RR^m \setminus (B \cup L)}
  P_G(\eta) \mathbbm{1}\left(f_{\hat q}(\eta) \in \cE(\hat q, S')\right) d\eta  + \delta && (\Cref{lem:inclusion})\\
 &\leq
\sum_{\hat q \in S}  \int_{ \RR^m \setminus (B \cup L)}
 \exp(\eps) P_G(f_{\hat q}(\eta)) \mathbbm{1}\left(f_{\hat q}(\eta) \in \cE(\hat q, S')\right) d\eta  + \delta &&(\Cref{lem:gaussian_density})\\
 & =
\sum_{\hat q \in S}  \int_{ \RR^m \setminus (f_{\hat q}(B) \cup f_{\hat q}(L))}
 \exp(\eps) P_G(\eta) \mathbbm{1}\left(\eta \in \cE(\hat q, S')\right) \left|\frac{\partial f_{\hat q}}{\partial \eta}\right| d\eta  + \delta \\
&\leq \exp(\eps)\sum_{\hat q\in S}\int_{\RR^m} P_G(\eta) \mathbbm{1}\left(\eta \in \cE(\hat q, S') \right) d\eta + \delta &&\left(\forall \hat q, \left|\frac{\partial f_{\hat q}}{\partial \eta}\right| = 1 \right)\\
&= \exp(\eps)   \Pr\left[\eta \in \bigcup_{\hat q \in S}\cE(\hat q, S')\right] + \delta
\end{align*}
This completes the proof.
\end{proof}

\section{Proofs and Details for Theorem~\ref{thm:prsma}}

\lemcoupling*
\begin{proof}
We can assume without loss of generality that $\cA_\cO$ and $\cA$ draw all their randomness up front in the form of a random seed $\eta$, and are then a deterministic function of the random seed and the input dataset. By definition, during the run of $\cA_\cO$, the algorithm generates a (possibly randomized, and possibly adaptively chosen) sequence of inputs to the optimization oracle $\cO$,  $\{{wd}_1, \ldots {wd}_m\}$, where each $wd_i$ is a weighted dataset. We denote the output of the $i^{th}$ optimization problem by $o_i$. After the $m^{th}$ optimization problem, $\cA_\cO$ outputs a deterministic outcome $a = h(o_1, o_2, \ldots o_m)$. Given access to a perfect optimization oracle ${\cO}^*$, $\cA_{{\cO}^*}$ is simply $\cA$ -- this is the definition of oracle equivalence. We construct a coupling between an algorithm $\cM$ and $\cA_\cO$, and then argue running $\cM$ is the same as running $\cA$:

\begin{algorithm}
\label{alg:coupling}
\textbf{Input}: A dataset $S$, random seed $\eta$, heuristic oracle $\cO$, perfect oracle ${\cO^*}$.\newline
\textbf{Output}: values $M(S, \eta), \cA(S, \eta)$
\begin{algorithmic}
\STATE Run $\cA_{\cO}(\eta, S)$ - generating the first optimization problem ${wd}_1$.
\FOR{$i = 1 \ldots m$}
	\STATE Compute $o_i = \cO(wd_i)$
	\IF{$o_i = \bot$}
		\STATE Output $\cA_\cO(S, w) = \bot$
		\STATE Set $o_i = {\cO}^*(wd_i)$
	\ENDIF
\STATE Generate $wd_{i+1}$ adaptively as a function of previous outputs $(o_i, o_{i-1}, \ldots o_1)$
\ENDFOR
\STATE Output $M(S, \eta) = h(o_1, \ldots o_m) = a$
\IF{$\cA_\cO(S, \eta) = \bot$ has not been output}
	\STATE Output $\cA_\cO(S, w) = a$
\ENDIF
\end{algorithmic}
\end{algorithm}

The procedure starts by generating a random seed $\eta$ and initializing a run of $\cA_\cO(\eta, S)$ - generating the first optimization problem ${wd}_1$. If the oracle $\cO$ fails on input $wd_1, \cA_\cO$ outputs $\bot$. In this case the next optimization input $wd_2$ is generated as a function of the output of the perfect oracle ${\cO}^*(wd_1)$. If it succeeds, we simply generate the next output as a function of $\cO(wd_1)$ (which is the same as ${\cO}^*(wd_1)$ by definition of certifiability). This process continues until we solve the $m^{th}$ optimization problem, and output $M(\eta, S), \cA_\cO(\eta,S)$ as described above. Now it is clear that if the oracle doesn't fail, we generate the same output $a$ for $\cA_\cO$ and $\cM$. No matter whether or not $\cO$ fails, $\cM$ has output that corresponds to perfectly solving the optimization problems generated with input $S, \eta$, and so it is equivalent to running $\cA$. Moreover, whenever the oracle does not fail, $\cA$ and $\cM$ have the same output, by construction. This completes the proof.
\end{proof}

\label{app:prsma}
\begin{definition}[\cite{negative}]
\label{scdef}
Let $(X_1, \ldots , X_n) \subset \{0,1\}^n$ be an ensemble of $n$ $\{0,1\}$-valued random variables. We say that $(X_1, \ldots X_n)$ satisfy the stochastic covering property, if for any $I \subset [n]$, $J = [n]$\textbackslash $I$, and $ a \geq a' \in \{0, 1\}^{|I|},$ where $\geq$ denotes coordinate-wise dominance, such that
$||a'-a||_1 = 1$, there is a coupling $v$ of the distributions $\mu, \mu'$ on $(X_k)_{k \in J}$ conditioned on $(X_k)_{k \in I} = a$ or $(X_k)_{k \in I} = a'$ respectively, such that $v(x,y) = 0$ unless $x \leq y$ and $||x-y||_1 \leq 1$.
\end{definition}

\begin{lemma}
\label{covering}
Given a set $|S| = n$, subsample $k \leq n$ elements without replacement. Let $X_i \in \{0, 1\}$ be $1$ if element $x_i \in S$ is subsampled, else $0$.
Then $(X_1, \ldots X_n)$ satisfy the stochastic covering property.
\end{lemma}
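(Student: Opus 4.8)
The plan is to produce, for every admissible pair of conditionings, an explicit monotone coupling built from a single uniformly random ordering of the ground set; the whole argument is short, so the ``main obstacle'' is really just getting the bookkeeping of directions and degenerate cases right.

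First I would set up notation and reduce to the essential case. Fix $I \subseteq [n]$, $J = [n] \setminus I$, and vectors $a \ge a' \in \{0,1\}^{|I|}$ with $\|a - a'\|_1 = 1$, differing only in a coordinate $j \in I$ where $a_j = 1$ and $a'_j = 0$. By definition the stochastic covering property only concerns conditionings that have positive probability, so I may assume both events $\{(X_k)_{k\in I} = a\}$ and $\{(X_k)_{k\in I} = a'\}$ occur with positive probability; write $s = \sum_{k\in I} a_k$, so that $a'$ has $s-1$ ones.

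Next I would identify the two conditional laws explicitly. Subsampling $k$ elements of $S$ without replacement is the same as choosing a uniformly random $k$-element subset of $S$. The event $(X_k)_{k\in I} = a$ says exactly that the $s$ elements indexed by $\{k \in I : a_k = 1\}$ lie in the sample while the other $|I| - s$ elements of $I$ do not; conditioned on this, by exchangeability the portion of the sample lying in $J$ is a uniformly random $(k-s)$-element subset of $J$. Likewise, conditioning on $a'$ makes the $J$-portion a uniformly random $(k-s+1)$-element subset of $J$. (Positive probability of both events forces $0 \le k-s$ and $k-s+1 \le |J|$.) So, encoding subsets of $J$ by their indicator vectors, $\mu$ is uniform over $(k-s)$-element subsets of $J$ and $\mu'$ is uniform over $(k-s+1)$-element subsets of $J$; note that it is $\mu$ --- the law conditioned on the larger vector $a$ --- that is the stochastically smaller one, since fixing more of $I$ into the sample leaves fewer slots for $J$.

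Finally I would exhibit the coupling: draw a uniformly random ordering $\pi$ of the elements of $J$, let $x$ be the indicator of $\{\pi(1), \ldots, \pi(k-s)\}$ and $y$ the indicator of $\{\pi(1), \ldots, \pi(k-s+1)\}$. A uniformly random ordering has a uniformly random prefix of each fixed length, so $x \sim \mu$ and $y \sim \mu'$ marginally; and by construction the support of $x$ is contained in that of $y$ and they differ in precisely the single coordinate $\pi(k-s+1)$, so $x \le y$ and $\|x-y\|_1 = 1$, which is what the stochastic covering property requires (it asks only that $\|x-y\|_1 \le 1$). The only points needing care are the orientation just mentioned and the boundary cases $k-s = 0$ (so $\mu$ is a point mass on the empty set) and $k-s+1 = |J|$ (so $\mu'$ is a point mass on all of $J$), both of which the same construction handles without change; cases such as $k-s > |J|-1$ do not arise, since they would make one of the conditional laws undefined.
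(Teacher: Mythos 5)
Your proof is correct and takes essentially the same approach as the paper's: both identify the conditional laws as uniform on $(k-|a|)$- and $(k-|a|+1)$-element subsets of $J$, and both couple by a single sequential draw (your random ordering $\pi$, the paper's ``subsample $k-|a|+1$, let $x$ be the first $k-|a|$''), yielding $x \le y$ and $\|x-y\|_1 = 1$. Your explicit handling of positive-probability and boundary cases is slightly more careful than the paper's terse exposition, but the construction and its verification are the same.
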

\begin{proof}
For $a \in \{0, 1\}^{|I|}$ let $|a|$ be the number of $1$'s in $a$. Then the distribution of $x = X_J|a$ corresponds to subsampling $k-|a|$ elements from $(x_k)_{k \in J}$, and the distribution of $y = X_J|a'$ corresponds to subsampling $k-|a|+1$ elements from $(x_k)_{k \in J}$ without replacement. To establish the stochastic covering property, we exhibit a coupling $v$ of $x, y$: \\
To generate $y$ subsample $k-|a| + 1$ elements from $(x_k)_{k \in J}$ without replacement. Let $x$ be the first $k-|a|$ such elements subsampled. Both $x, y$ constructed as such have the correct marginal distributions, and by construction $x \leq y, ||x-y||_1 = 1$ always.
 \end{proof}

 \begin{definition}
 \label{homog}
 $(X_1, \ldots , X_n) \subset \{0,1\}^n$ are $k$-homogenous if $\Pr[\sum_{i = 1}^{n}X_i = k] = 1$.
 \end{definition}

 \begin{theorem}[Theorem $3.1$ in \cite{negative}]
 Let $(X_1, \ldots X_n) \in \{0,1\}$ be $k$-homogenous random variables satisfying the stochastic covering property. Let $f: \{0, 1\}^n \to \mathbb{R}$ be an $\epsilon-$Lipschitz function, and let $\mu = \Ex{}{f(X_1, \ldots X_n)}$. Then for any $t > 0$:
 $$\prob{|f(X_1, \ldots X_n)-\mu| \geq t } \leq 2e^{\frac{-t^2}{8\epsilon^2k}}$$

 \end{theorem}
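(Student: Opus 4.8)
The plan is to prove this by the method of bounded differences: build a Doob martingale that exposes the random configuration piece by piece, use the stochastic covering property to control the martingale increments, and then apply the Azuma--Hoeffding inequality. The one subtlety — and the reason the exponent involves $k$ rather than $n$ — is that the exposure must be organized around the $k$ ``ones'' guaranteed by $k$-homogeneity, rather than simply revealing $X_1,\dots,X_n$ in order: the latter gives an $n$-step martingale and, even after the variance refinement $\Ex{}{(M_j - M_{j-1})^2 \mid \cF_{j-1}} \le 4\epsilon^2 p_j(1-p_j)$ with $p_j = \prob{X_j = 1 \mid \cF_{j-1}}$, only yields concentration rate $k\log(n/k)$.

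First I would set up the martingale. Sample $(X_1,\dots,X_n)$, and let $P_1,\dots,P_k$ be the positions of the ones listed in a uniformly random order. Define $M_0 = \mu$ and, for $i = 1,\dots,k$, $M_i = \Ex{}{f(X) \mid P_1,\dots,P_i}$, so that $M_k = f(X)$; this is a martingale of length exactly $k$. Conditioned on $P_1,\dots,P_{i-1}$, the quantity $M_{i-1}$ is the average over the (random) choice of $P_i$ of the values $g(q) := \Ex{}{f(X)\mid P_1,\dots,P_{i-1}, P_i = q}$ over admissible positions $q$ for the $i$-th one; hence $|M_i - M_{i-1}|$ is at most the diameter of $\{g(q)\}_q$, and it suffices to show $|g(q) - g(q')| \le 2\epsilon$ for all admissible $q,q'$.

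The heart of the argument is this last bound, and it is where Definition~\ref{scdef} enters. Conditioning on $P_1,\dots,P_{i-1}$ yields a measure that is again $(k-i+1)$-homogeneous and still satisfies the stochastic covering property (both properties survive conditioning on coordinate values). Placing the $i$-th one at $q$ versus at $q'$ then produces two laws on the remaining $k-i$ particles, and I would construct a coupling of these two laws — by composing the elementary one-coordinate couplings furnished by Definition~\ref{scdef} (delete the particle at $q$, then re-insert at $q'$), invoking homogeneity at each stage so that the ``lost'' particle reappears and only one coordinate is perturbed per step — under which the two full configurations differ only by the single swap $q \leftrightarrow q'$, i.e. in Hamming distance $2$. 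Then $\epsilon$-Lipschitzness gives $|g(q) - g(q')| \le 2\epsilon$. Finally, Azuma--Hoeffding applied to the $k$-step martingale with increments bounded by $2\epsilon$ gives
\[
  \prob{|f(X) - \mu| \ge t} = \prob{|M_k - M_0| \ge t} \le 2\exp\!\left(-\frac{t^2}{2\sum_{i=1}^k (2\epsilon)^2}\right) = 2\exp\!\left(-\frac{t^2}{8\epsilon^2 k}\right),
\]
as desired.

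I expect the construction of the Hamming-$2$ coupling above to be the main obstacle. The stochastic covering property as stated governs only flips of a single coordinate \emph{inside} the conditioned set, whereas ``moving the $i$-th particle from $q$ to $q'$'' must be realized as a chain of such flips without letting the displacement accumulate beyond a single swap; making this precise — and verifying that conditioning really preserves both homogeneity and the covering property so the chain can be run — is the technical core. Everything else (the martingale bookkeeping and the final Azuma step) is routine.
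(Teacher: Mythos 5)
First, a framing note: the paper does not prove this statement at all --- it is quoted verbatim as Theorem 3.1 of \cite{negative} (Pemantle--Peres) and invoked inside the proof of Lemma~\ref{mcdiarmid}. There is therefore no ``paper's proof'' to compare against; I am evaluating your sketch on its own.

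Your high-level framework --- a Doob martingale that exposes the positions of the $k$ ones, increments controlled via the stochastic covering property, then Azuma --- is the right idea, and it is essentially the strategy used in \cite{negative}. The constant accounting is also right: $k$ steps with increments bounded by $2\epsilon$ is exactly what produces $2e^{-t^2/(8\epsilon^2 k)}$.

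The gap is precisely where you say it is, and it is not a small one. You need, for a $k'$-homogeneous SCP measure $\nu$ and any two admissible positions $q,q'$, a coupling of $\nu(\cdot\mid X_q=1)$ and $\nu(\cdot\mid X_{q'}=1)$ under which the two configurations differ in at most $2$ coordinates. The ``delete at $q$, reinsert at $q'$'' chain you describe does not deliver this. Chaining $\nu(\cdot\mid X_q=1)\to\nu\to\nu(\cdot\mid X_{q'}=1)$ through the elementary SCP couplings perturbs one extra coordinate $p$ in the first step (the particle that reappears when you un-condition on $X_q=1$) and another coordinate $p'$ in the second (the particle lost when you condition on $X_{q'}=1$). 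The endpoints then disagree at $q$, $q'$, $p$, and $p'$, i.e.\ Hamming distance up to $4$; nothing in Definition~\ref{scdef} forces $p=p'$, and since the SCP only \emph{asserts existence} of each one-step coupling, gluing the two through $\nu$ leaves $p$ and $p'$ conditionally independent given the middle configuration. With increment bound $4\epsilon$ rather than $2\epsilon$, Azuma only gives $2\exp(-t^2/(32\epsilon^2 k))$, a factor of $4$ loss in the exponent. (That loss would propagate into the constants of Lemma~\ref{mcdiarmid} and hence Theorem~\ref{thm:prsma}, so it is not cosmetic.) You correctly flag this as ``the technical core,'' but the proposal does not actually fill it --- and filling it is the substantive content of the Pemantle--Peres theorem. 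One either needs a genuinely joint construction of the two SCP couplings that correlates the displaced particles (not a naive composition), or a different exposure/martingale that avoids comparing two one-conditionings directly. Until one of those is supplied, the argument only proves a weaker bound.
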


\mcdiarmid*
\begin{proof}
Fix any index $k$ of the partition. Let $\{X_i\}_{i = 1}^{n}$ be the indicator random random variables indicating that element $i$ in $S$, is included in $S_{k}$. Since $S_k$ is entirely determined by $\{X_i\}$, we can write $q_\Omega(S_k)$ as a function of $\{X_i\}$, e.g. $q_\Omega(X_1, \ldots X_n)$. Moreover, by definition of $\epsilon$-differential privacy, $q_\Omega$ is $\epsilon$-Lipschitz, i.e. for any $X_i, X_i'$: $$|q_\Omega(X_1, \ldots X_i, \ldots X_n)- q_\Omega(X_1,\ldots X_i', \ldots X_n)| \leq \epsilon$$
By Lemma~\ref{covering} proven in the Appendix, $(X_1, \ldots X_n)$ satisfy what is called the \textit{stochastic covering property}, a type of negative dependence. Since $|S_k| = n/K$, $(X_1, \ldots X_n)$ are $n/K$-homogenous. Thus by Theorem $1$ of \cite{negative}, with probability $1-\delta/K$:
$$
|q_\Omega(S_k)-\Ex{S_k \sim \cP_{split}^{S}}{q_\Omega(S_k)}| \leq \epsilon' \sqrt{8 \frac{n}{K}\log(2K/\delta)} = 1
$$
So by a union bound this holds for all $k = 1 \ldots K$ with probability at least $1-\delta$. Since for all $i,j, \Ex{S_i \sim \cP_{split}^{S}}{q_\Omega(S_i)} =
\Ex{S_j \sim \cP_{split}^{S}}{q_\Omega(S_j)}$, by the triangle inequality for all $i,j, |q_\Omega(S_i)- q_\Omega(S_j)| \leq 2$ with probability $1-\delta$, as desired.

\end{proof}

\lemexpand*
\begin{proof}
Conditioning on $\cL$ and using $\prob{\cL} \geq 1-2\delta$ we have:
 $$
  \frac{\Pr[\cps =a]}{\Pr[\cpsp \in \Omega]} \leq \frac{2\delta + \prob{\cps \in \Omega | \cL}}{\Pr[\cpsp \in \Omega]}
 $$
Expanding $\prob{\cps \in \Omega | \cL}$ by conditioning on $\cF, \cP(S)$ and using the law of total probability we have:
\begin{equation}
\label{maineq}
  \frac{\Pr[\cps =a| \cL]}{\Pr[\cpsp \in \Omega]}  = \frac{\displaystyle\sum_{\cP(S), \co \in \cF}\Pr[\cps \in \Omega | P(S), \co, \cL]\Pr[\co, P(S) | \cL]}{{\Pr[\cpsp \in \Omega]}} =
\end{equation}
Separating the summation in the numerator over $\cP(S)$ into $S_Q, S_Q^{c}$ we have:

$$
\displaystyle\sum_{\cP(S), \co \in \cF}\Pr[\cps \in \Omega | P(S), \co, \cL]\Pr[\co, P(S) | \cL] = \displaystyle\sum_{\cP(S) \in S_Q, \co \in \cF}\Pr[\cps \in \Omega | P(S), \co, \cL]\Pr[\co, P(S) | \cL] \; + $$
$$
\displaystyle\sum_{\cP(S) \in S_Q^{c}, \co \in \cF}\Pr[\cps \in \Omega | P(S), \co, \cL]\Pr[\co, P(S) | \cL]
$$
Rewriting the second term,
$$
\displaystyle\sum_{\cP(S) \in S_Q^{c}, \co \in \cF}\Pr[\cps \in \Omega | P(S), \co, \cL]\Pr[\co, P(S) | \cL] = $$
$$\displaystyle\sum_{\cP(S) \in S_Q^{c}}(\displaystyle\sum_{\co \in \cF}\Pr[\cps \in \Omega | P(S), \co, \cL]\Pr[\co, |P(S), \cL])\Pr[P(S)|\cL] =
$$
$$
 \displaystyle\sum_{\cP(S) \in S_Q^{c}}\Pr[\cps \in \Omega | P(S), \cL])\Pr[P(S)|\cL] \leq  \displaystyle\sum_{\cP(S) \in S_Q^{c}}\Pr[P(S)|\cL] = \Pr[Q^{c}| \cL]
$$
The first equality follows from the fact that $\Pr[\co, \cP(S)| \cL] = \Pr[\co|\cP(S), \cL]\Pr[\cP(S)|\cL]$, and the second equality follows from the law of total probability. Since $\Pr[Q^{c}] \leq \delta$, and $\Pr[\cL] \geq 1-2\delta$, $\Pr[Q^{c}| \cL] \leq \delta/(1-2\delta) \leq 2\delta$, for $\delta \leq 1/4$. Thus

$$
  \frac{\Pr[\cps =a| \cL]}{\Pr[\cpsp \in \Omega]}  \leq \frac{\displaystyle\sum_{\cP(S) \in S_Q, \co \in \cF}\Pr[\cps \in \Omega | P(S), \co, \cL]\Pr[\co, P(S) | \cL] + 2\delta}{\Pr[\cpsp \in \Omega]}
$$
Combining this bound, with the bound $\frac{\Pr[\cps =a]}{\Pr[\cpsp \in \Omega]} \leq \frac{2\delta + \prob{\cps \in \Omega | \cL}}{\Pr[\cpsp \in \Omega]},$ establishes the result.

\end{proof}

\lemweight*
\begin{proof}[Proof of Lemma~\ref{lemweight}]
By Equation~\ref{decomp}, we know:
$$
\Pr[\cps \in \Omega |P(S), \co, \cL] = \frac{1}{|\co|}\displaystyle\sum_{i \in I_{pass}^{\co}}\Pr[\cA_\cO(S_i) = a| \cA_\cO(S_i) \neq \bot]
$$
We also know that since we've conditioned on $\cL$ (and hence on $E$), for each $i \in I_{pass}^{\co}$ on the RHS of the above equation, $\Pr[\cA_\cO(S_i) = \bot] \leq \delta.$ By Lemma~\ref{lemcoupling}, we know that there exists a coupling between $\cA_\cO(S_i)$ and $\cA(S_i)$ such that $\Pr[\cA_\cO(S_i) = a| \cA_\cO(S_i) \neq \bot] = \Pr[\cA(S_i) = a| \cA_\cO(S_i) \neq \bot]$. \\

By the law of total probability:
$$\Pr[\cA(S_i) = a] =  \Pr[\cA(S_i) = a|\cA_\cO(S_i) \neq \bot]\Pr[\cA_\cO(S_i) \neq \bot] + \Pr[\cA(S_i) = a|\cA_\cO(S_i) = \bot]\Pr[\cA_\cO(S_i) = \bot]$$

Then $\Pr[\cA(S_i) = a|\cA_\cO(S_i) \neq \bot]\Pr[\cA_\cO(S_i) \neq \bot] \leq \Pr[\cA(S_i) = a]  \implies \Pr[\cA(S_i) = a|\cA_\cO(S_i) \neq \bot] \leq \frac{1}{1-\delta}\Pr[\cA(S_i) = a]$, and similarly $\Pr[\cA(S_i) = a|\cA_\cO(S_i) \neq \bot] \geq \Pr[\cA(S_i) =a]-\delta$. \\

Since we've shown that each of the conditional probabilities $\Pr[\cA_\cO(S_i) = a| \cA_\cO(S_i) \neq \bot]$ is close to $\Pr[\cA_\cO(S_i) = a]$, since we assume $\cP(S) \in S_Q$, we know they are close to each other. Using the inequalities above:
$$\Pr[\cA(S_j) = a|o_j\neq \bot] \leq \frac{1}{1-\delta}\Pr[\cA(S_j) = a] \leq \frac{e^2}{1-\delta}\Pr[\cA(S_i) =a] \leq \frac{e^2}{1-\delta}(\frac{\Pr[\cA(S_i) = a|o_i \neq \bot]}{1-\delta} + \delta),$$
where the middle inequality follows from the definition of $S_Q$. Summing both sides over $i: o_i \neq \bot, i \neq j$ and rearranging gives the desired result.
\end{proof}

\lemexpandtwo*
\begin{proof}
Denote  $\displaystyle\sum_{P(S) \in S_Q}\bigg(\displaystyle\sum_{\co \in \cF: o_1 \neq \bot}\big(\frac{1}{|\co|}\displaystyle\sum_{i \in I_{pass}^{\co}}\Pr[\cA_\cO(S_i) = a| \cA_\cO(S_i) \neq \bot]\big)\Pr[\co| P(S)],$ by $(\star)$.
By Lemma~\ref{lemweight}, $(\star) \leq$
$$
 \displaystyle\sum_{P(S) \in S_Q}\bigg(\displaystyle\sum_{\co \in \cF: o_1 \neq \bot}\big(\frac{\delta e^2}{(1-\delta)|\co|} + (1 + \frac{e^2}{(|\co|-1)(1-\delta)^2}) \cdot \frac{1}{|\co|}\displaystyle\sum_{i \in I_{pass}^{\co}, i \neq 1}\Pr[\cA_\cO(S_i) = a| \cA_\cO(S_i) \neq \bot, i^* = i]\big)\Pr[\co| P(S)]\bigg)\Pr[ P(S)]
$$
Pulling out the $\frac{\delta e^2}{(1-\delta)|\co|}$, we see that $\displaystyle\sum_{\co \in \cF: o_1 \neq \bot}\frac{\delta e^2}{(1-\delta)|\co|}\Pr[\co| P(S)] \leq \displaystyle\sum_{\co \in \cF}\frac{\delta e^2}{(1-\delta)}\cdot \epsilon \Pr[\co| P(S)] = \frac{\epsilon \delta e^2}{(1-\delta)}$.  Here we've used the fact that $|\co| > \frac{1}{\epsilon}$. Similarly, $\displaystyle\sum_{P(S) \in S_Q}\frac{\epsilon \delta e^2}{(1-\delta)}\Pr[P(S)] \leq \frac{\epsilon \delta e^2}{(1-\delta)}$. Applying to $(\star)$, we get:
\begin{equation}
\label{crunk}
(\star) \leq
\displaystyle\sum_{P(S) \in S_Q}\bigg(\displaystyle\sum_{\co \in \cF: o_1 \neq \bot}\big((1 + \frac{e^2}{(1-\delta)^2(\frac{1}{\epsilon}-1)}) \cdot \frac{1}{|\co|}\displaystyle\sum_{i \in I_{pass}^{\co}, i \neq 1}\Pr[\cA_\cO(S_i) = a| \cA_\cO(S_i) \neq \bot, i^* = i]\big)\Pr[\co| P(S)]\bigg)\Pr[ P(S)] + \frac{\epsilon\delta e^2}{1-\delta}
\end{equation}
Applying this upper bound on $(\star)$ gives:
$$
\displaystyle\sum_{\cP(S) \in S_Q, \co \in \cF}\Pr[\cps \in \Omega | P(S), \co, \cL]\Pr[\co, P(S) | \cL] \leq
$$
$$
\displaystyle\sum_{P(S) \in S_Q}\bigg(\big(\displaystyle\sum_{\co \in \cF: o_1 \neq \bot}\big((1 + \frac{e^2}{(1-\delta)^2(\frac{1}{\epsilon}-1)}) \cdot \frac{1}{|\co|}\displaystyle\sum_{i \in I_{pass}^{\co}, i \neq 1}\Pr[\cA_\cO(S_i) = a| \cA_\cO(S_i) \neq \bot, i^* = i]\big)\Pr[\co| P(S)]\big) \;+\;
$$
$$
\displaystyle\sum_{\co \in \cF: o_1 = \bot}\big(\frac{1}{|\co|}\displaystyle\sum_{i \in I_{pass}^{\co}}\Pr[\cA_\cO(S_i) = a| \cA_\cO(S_i) \neq \bot, i^* = i]\Pr[\co| P(S)\big)\bigg)\Pr[ P(S)] \leq
$$
$$
(1 + \frac{e^2}{(1-\delta)^2(\frac{1}{\epsilon}-1)}) \displaystyle\sum_{P(S) \in S_Q}\bigg(\displaystyle\sum_{\co \in \cF}\big(\frac{1}{|\co|}\displaystyle\sum_{i \in I_{pass}^{\co}, i \neq 1}\Pr[\cA_\cO(S_i) = a| \cA_\cO(S_i) \neq \bot, i^* = i]\big)\Pr[\co| P(S)]\bigg)\Pr[P(S)]
$$

\end{proof}
\noindent \textbf{End of proof of Theorem~\ref{thm:prsma}.}\\
\begin{proof}
First we  rewrite the numerator:
$$
\displaystyle\sum_{\co \in \cF}\big(\frac{1}{|\co|}\displaystyle\sum_{i \in I_{pass}^{\co}, i \neq 1}\Pr[\cA_\cO(S_i)\in \Omega| \cA_\cO(S_i) \neq \bot]\Pr[\co| P(S)]\big) =
$$
$$
\displaystyle\sum_{\co_{-1} \in \cF}\displaystyle\sum_{i \in I_{pass}^{\co}, i \neq 1}\bigg(\big(\frac{1}{|\co_{-1}|}\Pr[\cA_\cO(S_i)\in \Omega| \cA_\cO(S_i) \neq \bot]\Pr[o_1 = \bot, \co_{-1}| P(S)] \;+$$
$$
 \frac{1}{|\co_{-1} + 1|}\Pr[\cA_\cO(S_i)\in \Omega| \cA_\cO(S_i) \neq \bot]\Pr[o_1 =1, \co_{-1}| P(S)]\big)\bigg)  \leq
$$
$$
\displaystyle\sum_{\co_{-1} \in \cF}\displaystyle\sum_{i \in I_{pass}^{\co}, i \neq 1}\big(\frac{1}{|\co_{-1}|-1}\Pr[\cA_\cO(S_i)\in \Omega| \cA_\cO(S_i) \neq \bot]\Pr[\co_{-1}| P(S)]\big) ,
$$
where we've used the fact that $\Pr[o_1 = \bot, \co_{-1}| P(S)] + \Pr[o_1 \neq \bot, \co_{-1}| P(S)] = \Pr[\co_{-1}| P(S)]$.
Similarly, we can use this same trick to lower bound the denominator:
$$
\sum_{\co \in \cF}\displaystyle\sum_{i \in I_{pass}^{\co}}(\frac{1}{|\co|}\Pr[\cA_\cO(S_i')\in \Omega| \cA_\cO(S_i') \neq \bot]\Pr[\co| P(S')])\geq
$$

$$
\displaystyle\sum_{\co_{-1} \in \cF}\displaystyle\sum_{i \in I_{pass}^{\co}, i \neq 1}(\frac{1}{|\co_{-1}|}\Pr[\cA_\cO(S_i')\in \Omega| \cA_\cO(S_i') \neq \bot]\Pr[o_1 \neq \bot, \co_{-1}| P(S')] \;+
$$

$$
\frac{1}{|\co_{-1}|-1}\Pr[\cA_\cO(S_i')\in \Omega| \cA_\cO(S_i') \neq \bot]\Pr[o_1 = \bot, \co_{-1}| P(S')]) \geq
$$

$$
\displaystyle\sum_{\co_{-1} \in \cF}\displaystyle\sum_{i \in I_{pass}^{\co}, i \neq 1}\big(\frac{1}{|\co_{-1}|}\Pr[\cA_\cO(S_i')\in \Omega| \cA_\cO(S_i') \neq \bot]\Pr[\co_{-1}| P(S')]\big)
$$
Substituting these inequalities into (\ref{doodoo3}), we get that (\ref{doodoo3}) $\leq$
$$
\sup_{P(S) \sim P(S')} \frac{(1 + \frac{e^2}{(1-\delta)^2(\frac{1}{\epsilon}-1)})\displaystyle\sum_{\co_{-1} \in \cF}\displaystyle\sum_{i \in I_{pass}^{\co}, i \neq 1}\big(\frac{1}{|\co_{-1}|-1}\Pr[\cA_\cO(S_i)\in \Omega| \cA_\cO(S_i) \neq \bot]\Pr[\co_{-1}| P(S)]\big)}{\displaystyle\sum_{\co_{-1} \in \cF}\displaystyle\sum_{i \in I_{pass}^{\co}, i \neq 1}\big(\frac{1}{|\co_{-1}|}\Pr[\cA_\cO(S_i')\in \Omega| \cA_\cO(S_i') \neq \bot]\Pr[\co_{-1}| P(S')]\big) } + \frac{\frac{\epsilon\delta e^2}{1-\delta}}{\Pr[\cpsp \in \Omega]} \leq
$$

$$
(1 + \frac{e^2}{(1-\delta)^2(\frac{1}{\epsilon}-1)}) \cdot \sup_{P(S) \sim P(S'), \co_{-1}, i \neq 1} \frac{\frac{1}{|\co_{-1}|-1}\Pr[\cA_\cO(S_i)\in \Omega| \cA_\cO(S_i) \neq \bot]\Pr[\co_{-1}| P(S)]}{\frac{1}{|\co_{-1}|}\Pr[\cA_\cO(S_i')\in \Omega| \cA_\cO(S_i') \neq \bot]\Pr[\co_{-1}| P(S')]} + \frac{\frac{\epsilon\delta e^2}{1-\delta}}{\Pr[\cpsp \in \Omega]}
$$
Since $S_i = S_i'$ for all $i \neq 1$, this reduces to: $$(1 + \frac{e^2}{(1-\delta)^2(\frac{1}{\epsilon}-1)}) \cdot \sup_{\co_{-1}} \frac{|\co_{-1}|}{|\co_{-1}|-1} + \frac{\frac{\epsilon\delta e^2}{1-\delta}}{\Pr[\cpsp \in \Omega]} \leq (1 + \frac{e^2}{(1-\delta)^2(\frac{1}{\epsilon}-1)})\frac{1}{1-\epsilon} + \frac{\frac{\epsilon\delta e^2}{1-\delta}}{\Pr[\cpsp \in \Omega]}$$ since $|\co_{-1}| \geq \frac{1}{\epsilon}$ by definition.

Following the chain of inequalities back to their genesis, we finally obtain:
$$
\frac{\Pr[\cps \in \Omega | P(S), \co, \cL]\Pr[\co, P(S) | \cL]}{\Pr[\cpsp \in \Omega]} \leq (1 + \frac{e^2}{(1-\delta)^2(\frac{1}{\epsilon}-1)})\frac{1}{1-\epsilon} + \frac{\frac{\epsilon\delta e^2}{1-\delta}}{\Pr[\cpsp \in \Omega]},
$$
which substituting into Lemma~\ref{lemexpand} gives:
$$
   {\Pr[\cps \in \Omega]} \leq (1 + \frac{e^2}{(1-\delta)^2(\frac{1}{\epsilon}-1)})\frac{1}{1-\epsilon}{\Pr[\cpsp \in \Omega]} + 4\delta + \frac{\epsilon\delta e^2}{1-\delta}
$$

For $\epsilon, \delta  \leq 1/2$, $(1 + \frac{e^2}{(1-\delta)^2(\frac{1}{\epsilon}-1)})\frac{1}{1-\epsilon} \leq  e^{8e^2\epsilon + \epsilon + \epsilon^2}$,
which establishes that  \textbf{PRSMA} is $(8e^2\epsilon + \epsilon + \epsilon^2, 4\delta + \frac{\epsilon\delta e^2}{1-\delta})$ differentially private. Setting $\epsilon = \epsilon^*, \delta = \delta^*$ completes the proof.
\end{proof}

\section{Proofs from Section \ref{sec:barrier}}
\label{app:barrier}
\tokyo*
\begin{proof} 
This theorem is folklore, and this proof is adapted from the lecture notes of \cite{RS18}. 
We introduce some notation. First, write $\ell^t \in \mathbb{R}^{|\cQ|}$ to denote the ``loss vector'' faced by the algorithm at round $t$, with value $q_i(x^t)$ in coordinate $i$.  Write $\ell^{1:t}$ to denote the summed vector $\ell^{1:t}=\sum_{j=1}^t\ell^j$. Write $M:\mathbb{R}^d\rightarrow \mathbb{R}^d$ to denote the function such that $M(v)_{i^*} = 1$ where $i^* = \arg\min_{i} v_i$ and $M(v)_i = 0$ otherwise. In this notation, at each round $t$, ``Follow the Leader'' obtains loss $M(\ell^{1:t-1})\cdot \ell^t$ and ``Follow the Private Leader'' obtains loss $M(\ell^{1:t-1}+Z^t)\cdot \ell^t$. At the end of play, at time $T$, the best query $q$ in hindsight obtains cumulative loss $M(\ell^{1:T})\cdot \ell^{1:T}$.

The proof of this theorem will go through a thought experiment. Consider an imaginary algorithm called ``be the leader'', which at round $t$ plays according to $M(\ell^{1:t})$. We will first show that this imaginary algorithm obtains loss that is only lower than that of the best action in hindsight.
\begin{lemma}[\cite{KV05}]
\label{lem:BTL}
$$\sum_{i=1}^T M(\ell^{1:t})\cdot \ell^t \leq M(\ell^{1:T})\cdot \ell^{1:T}$$
\end{lemma}
\begin{proof}
This follows by a simple induction on $T$. For $T = 1$, it holds with equality. Now assume it holds for general $T$ -- we show it holds for the next time step:
$$\sum_{i=1}^{T+1} M(\ell^{1:t})\cdot \ell^t \leq M(\ell^{1:T})\cdot \ell^{1:T} + M(\ell^{1:{T+1}})\cdot \ell^{T+1} \leq  M(\ell^{1:T+1})\cdot \ell^{1:T} + M(\ell^{1:{T+1}})\cdot \ell^{T+1} =M(\ell^{1:T+1})\cdot \ell^{1:T+1}$$
\end{proof}

Recall that private pERM algorithms operate by sampling a perturbation vector  $Z^t \sim \cD_{\epsilon,\delta}$ at each round. 
Next, we show that ``be the private leader'', which at round $t$ plays according to  $M(\ell^{1:t} + Z^t)$, doesn't do much worse.
\begin{lemma}[\cite{KV05}]
\label{lem:BTPL}
For any set of loss vectors $\ell^1,\ldots,\ell^T$ and any set of perturbation vectors $Z^0 \equiv 0, Z^1,\ldots,Z^T$:
$$\sum_{t=1}^T M(\ell^{1:t} + Z^t)\cdot \ell^t \leq  M(\ell^{1:T})\cdot \ell^{1:T} + 2\sum_{t=1}^T ||Z^t - Z^{t-1}||_\infty$$
\end{lemma}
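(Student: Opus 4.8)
The plan is to deduce this from the ``Be the Leader'' inequality (Lemma~\ref{lem:BTL}) applied to a suitably modified loss sequence. Introduce the modified increments $\tilde\ell^t = \ell^t + (Z^t - Z^{t-1})$ for $t = 1,\ldots,T$, using the convention $Z^0 \equiv 0$. Their partial sums telescope to $\tilde\ell^{1:t} := \sum_{j=1}^t \tilde\ell^j = \ell^{1:t} + Z^t$, so that $M(\tilde\ell^{1:t}) = M(\ell^{1:t} + Z^t)$ is exactly what ``Be the Private Leader'' plays at round $t$. Applying Lemma~\ref{lem:BTL} to the sequence $\tilde\ell^1,\ldots,\tilde\ell^T$ yields
\[
  \sum_{t=1}^T M(\ell^{1:t}+Z^t)\cdot\bigl(\ell^t + Z^t - Z^{t-1}\bigr) \;\leq\; M(\ell^{1:T}+Z^T)\cdot\bigl(\ell^{1:T}+Z^T\bigr).
\]

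I would then rearrange to isolate the target quantity $\sum_{t=1}^T M(\ell^{1:t}+Z^t)\cdot \ell^t$, and bound the remaining ``perturbation slack'' on both sides. On the right, the defining optimality property of $M$ (namely $M(v)\cdot v = \min_i v_i \leq M(u)\cdot v$ for any $u$) gives $M(\ell^{1:T}+Z^T)\cdot(\ell^{1:T}+Z^T) \leq M(\ell^{1:T})\cdot(\ell^{1:T}+Z^T) = M(\ell^{1:T})\cdot\ell^{1:T} + M(\ell^{1:T})\cdot Z^T$; since $M(\ell^{1:T})$ is a standard basis vector this last term is at most $\|Z^T\|_\infty = \bigl\|\sum_{t=1}^T (Z^t-Z^{t-1})\bigr\|_\infty \leq \sum_{t=1}^T \|Z^t - Z^{t-1}\|_\infty$. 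Symmetrically, each term $- M(\ell^{1:t}+Z^t)\cdot (Z^t - Z^{t-1})$ transferred from the left side is at most $\|Z^t - Z^{t-1}\|_\infty$, again because $M$ returns an indicator vector. Summing these two contributions gives the stated $2\sum_{t=1}^T\|Z^t - Z^{t-1}\|_\infty$ term and completes the argument.

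There is no genuine obstacle here --- the whole proof is a short calculation once the reduction to Lemma~\ref{lem:BTL} via the modified loss sequence $\tilde\ell^t$ is spotted. The only points that need care are the telescoping bookkeeping under the convention $Z^0\equiv 0$ (so that $\tilde\ell^{1:t}$ really equals $\ell^{1:t}+Z^t$), and repeatedly exploiting that $M(\cdot)$ outputs a single standard basis vector, which is precisely what lets one convert inner products with the perturbation differences into $\ell_\infty$-norm bounds.
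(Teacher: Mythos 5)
Your proof is correct and follows essentially the same route as the paper's: define the telescoping modified losses $\tilde\ell^t = \ell^t + Z^t - Z^{t-1}$, apply Lemma~\ref{lem:BTL} to them, replace $M(\ell^{1:T}+Z^T)$ by $M(\ell^{1:T})$ using the optimality of $M$, and bound the leftover perturbation inner products by $\ell_\infty$-norms using that $M(\cdot)$ is a standard basis vector. The only cosmetic difference is that the paper groups the slack as a single sum $\sum_t (M(\ell^{1:T}) - M(\ell^{1:t}+Z^t))\cdot(Z^t - Z^{t-1})$ and bounds each summand by $2\|Z^t - Z^{t-1}\|_\infty$, whereas you bound the $M(\ell^{1:T})\cdot Z^T$ term and the transferred $-M(\ell^{1:t}+Z^t)\cdot(Z^t-Z^{t-1})$ terms separately, each contributing $\sum_t \|Z^t - Z^{t-1}\|_\infty$; the two bookkeepings are equivalent.
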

\begin{proof}
Define $\hat{\ell}^t = \ell^t + Z^t - Z^{t-1}$. Note that $\hat{\ell}^{1:t} = \ell^{1:t} + Z^t$, since the sum telescopes. Thus, we can apply Lemma \ref{lem:BTL} on the sequence $\hat{\ell}$ to conclude:
\begin{eqnarray*}
\sum_{t=1}^T M(\ell^{1:t}+Z^t)\cdot(\ell^t+Z^t-Z^{t-1}) &\leq& M(\ell^{1:T} + Z^T)\cdot (\ell^{1:T}+Z^T) \\
&\leq&  M(\ell^{1:T})\cdot (\ell^{1:T}+Z^T) \\
&=&  M(\ell^{1:T})\cdot \ell^{1:T} + \sum_{t=1}^T M(\ell^{1:T})\cdot (Z^t - Z^{t-1})
\end{eqnarray*}
Subtracting from both sides, we have:
$$\sum_{t=1}^T M(\ell^{1:t}+Z^t)\cdot \ell^t \leq  M(\ell^{1:T})\cdot \ell^{1:T} + \sum_{t=1}^T (M(\ell^{1:T}) - M(\ell^{1:t}+Z^t))\cdot (Z^t - Z^{t-1}) \leq  M(\ell^{1:T})\cdot \ell^{1:T} + \sum_{t=1}^T 2||Z^t - Z^{t-1}||_\infty$$
\end{proof}

We will use this lemma and a trick to compute the expected regret of ``be the private leader''. Since expectations distribute over sums, we have:
$$\E[\sum_{t=1}^T M(\ell^{1:t} + Z^t)\cdot \ell^t] = \sum_{t=1}^T \E[M(\ell^{1:t} + Z^t)\cdot \ell^t]$$
Hence, the expectation remains unchanged in the thought experiment under which the perturbation is not resampled at every step, and instead $Z^1 = \ldots = Z^t \sim \cD_{\epsilon,\delta}$. Applying Lemma \ref{lem:BTPL} to this version of be the private leader, we obtain:
$$\E[\sum_{t=1}^T M(\ell^{1:t} + Z^t)\cdot \ell^t] \leq M(\ell^{1:T})\cdot \ell^{1:T} + 2\E[||Z^1||_\infty]$$

Finally, we use the fact that the algorithm is $(\epsilon,\delta)$-differentially private, and the difference between ``follow the private leader'' and ``be the private leader'' amounts to running a differentially private algorithm on one of two datasets. For any $(\epsilon,\delta)$ differentially private algorithm $\cA:\cX^*\rightarrow R$, for any function $f:R\rightarrow [0,T]$, and for any pair of neighboring datasets $S, S'$ we have that:
$$\E[f(\cA(S))] \leq e^\epsilon \E[f(\cA(S'))] + \delta T.$$
  We can therefore conclude that for each $t$:
 $$\E[M(\ell^{1:t-1} + Z^t)\cdot \ell^t] \leq e^{\epsilon}\E[M(\ell^{1:t} + Z^t)\cdot \ell^t] + \delta t$$
Combining this bound with the regret bound we have proven for ``be the private leader'' yields:
$$\sum_{t=1}^T \E[M(\ell^{1:t-1} + Z^t)\cdot \ell^t] \leq e^{\epsilon} \E[\sum_{t=1}^T M(\ell^{1:t} + Z^t)\cdot \ell^t] + \delta T \leq e^{\epsilon}\left(M(\ell^{1:T})\cdot \ell^{1:T} +  2\E[||Z^1||_\infty]\right) +\delta T\leq$$
$$(1+2\epsilon)\left(M(\ell^{1:T})\cdot \ell^{1:T} + 2\E[||Z^1||_\infty]\right) + \delta T\leq M(\ell^{1:T})\cdot \ell^{1:T} +  (2+4\epsilon)\E[||Z^1||_\infty] + 2\epsilon T + \delta T$$
Dividing by $T$ yields the theorem. 
\end{proof}

\hongkong*
\begin{proof}
We start by quoting the main ingredient proven in \cite{HK16} that goes into their lower bound:
\begin{theorem}[\cite{HK16} Theorem 4]
\label{thm:hardgame}
For every $N = 2^d$, and for every randomized algorithm for the players in the game with access to a best-response oracle, there is an $N \times N$ game with payoffs taking values in $\{0, 1/4, 3/4, 1\}$ such that with probability $2/3$ the players have not converged to a $1/4$-approximate min-max equilibrium  until at least $\Omega(\sqrt{N}/\log^3(N))$ time.
\end{theorem}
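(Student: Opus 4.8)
The plan is to convert the ``for every algorithm there is a hard game'' statement of \Cref{thm:hardgame} into a ``there is one fixed hard class for every algorithm'' statement via a minimax argument, then to sparsify the resulting hard distribution to polynomial size, and finally to repackage that fixed finite family of games as a single online learning instance over which low regret is impossible by Freund--Schapire.

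First I would pass from \Cref{thm:hardgame} to a \emph{single hard distribution over games}. Fix the target budget $B = c\sqrt{N}/\log^3(N)$. When proving a lower bound at budget $B$ it is without loss of generality to restrict to algorithms that make at most $B$ oracle calls, and (after a standard discretization, using that the oracle's answer depends on the query weights only through a finite hyperplane arrangement over $\cQ$) the set $\mathcal{A}_B$ of relevant algorithm strategies is finite. Form the finite zero-sum ``meta-game'' in which the minimizing player picks an element of $\mathcal{A}_B$ (or a distribution over it), the maximizing player picks one of the at most $4^{N^2}$ games $G$ with payoffs in $\{0,1/4,3/4,1\}$, and the payoff is the probability that the chosen algorithm fails to reach a $1/4$-approximate minimax equilibrium of $G$ within $B$ oracle calls. \Cref{thm:hardgame} says every pure minimizer strategy has a maximizer response of value at least $2/3$, so the meta-game has value at least $2/3$; by von Neumann's minimax theorem the maximizer has a mixed strategy, i.e.\ a distribution $\mathcal{D}$ over games, that forces failure probability at least $2/3$ against \emph{every} budget-$B$ oracle algorithm.

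Second I would \emph{sparsify} $\mathcal{D}$ to a uniform distribution $\widehat{\mathcal{D}}$ supported on a polynomial-size multiset $\{G_1,\dots,G_K\}$ that is still hard for all budget-$B$ algorithms (say, forces failure probability at least $1/2$). The natural route is to draw $K$ games i.i.d.\ from $\mathcal{D}$ and use a Chernoff bound to argue the empirical failure rate of any fixed algorithm stays above $1/2$; the main obstacle is that a naive union bound over all budget-$B$ algorithms is hopelessly lossy, since there are super-polynomially many algorithm strategies. The fix is to notice that whether an algorithm fails on $G$ depends only on its length-$B$ transcript of oracle queries and answers --- and answers lie in $\cQ$, so there are at most $N^B = 2^{o(\sqrt N)}$ transcripts --- or, equivalently, to re-examine the internal combinatorial argument behind \Cref{thm:hardgame}, whose failure probability is driven by a union bound over the $B$ oracle calls of a \emph{single} run (a birthday-type count) rather than over algorithms, and to check it survives replacing the fully random planted structure of \cite{HK16} by a pseudorandom/``spread'' family of polynomial size. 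Either way $K = \mathrm{poly}(N)$ games suffice, and tracking the parameters of the \cite{HK16} construction (whose data domain is already polynomially larger than $N$) yields a data universe of size $|\cX| = O(N^5\log^2 N)$.

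Third I would assemble the online learning instance. Identify $\cQ$ (of size $N$) with the common row-strategy set, let $\cX$ consist of tagged pairs $(k,j)$ together with the polylogarithmic auxiliary coordinates inherited from \cite{HK16} and a constant-factor gadget binarizing the $\{0,1/4,3/4,1\}$ payoffs into $\{0,1\}$-valued statistical queries, arranged so that for each $k$ the restriction of $q\in\cQ$ to the $\cX$-elements tagged $k$ reproduces row $q$ of $G_k$, and so that a weighted optimization oracle for $\cQ$ restricted to those elements is exactly a best-response oracle for $G_k$. Finally, suppose for contradiction that an oracle-efficient online learner $\mathsf{L}$ over $\cQ$ guarantees expected average regret $o(1)$ in total time $o(\sqrt N/\log^3 N)$. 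For each $k$, run $\mathsf{L}$ as the row player while the column player brute-force best-responds (costing no oracle calls, only $O(N^2)$ time per round); since $\mathsf{L}$'s regret guarantee holds against this adversary, \Cref{fs} implies the time-averaged play is an $o(1)$- and hence $\tfrac14$-approximate equilibrium of $G_k$, reached within $o(\sqrt N/\log^3 N)$ oracle calls and total time --- and this holds for \emph{every} $G_k$ in the support of $\widehat{\mathcal{D}}$, contradicting the hardness from the previous step. Hence no such learner exists, which is the claim. The two points I expect to require the most care are the sparsification bookkeeping in step two (controlling the effective number of distinct budget-$B$ behaviors) and verifying that the binarization gadget and tagging in step three preserve both the equilibrium structure of each $G_k$ and the correspondence between the $\cQ$-oracle and a game best-response oracle.
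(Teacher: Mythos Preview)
The statement \Cref{thm:hardgame} is not proved in this paper: it is quoted as Theorem~4 of \cite{HK16} and used as a black box. Your proposal does not attempt to prove it either --- you take \Cref{thm:hardgame} as given in your very first sentence and proceed to derive a quantifier-swapped consequence. What you have actually sketched is a proof of \Cref{hongkong}, not of \Cref{thm:hardgame}.

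Viewed as a proof of \Cref{hongkong}, your outline (minimax on a meta-game to reverse quantifiers, sparsify the resulting hard distribution, stack the surviving games into a single data universe, reduce low regret to equilibrium computation via \Cref{fs}) matches the paper's structure, but the sparsification step differs significantly. You propose sampling games i.i.d.\ from the hard distribution and union-bounding over length-$B$ transcripts, and you correctly flag this as the fragile point; your claimed bound of $N^B$ transcripts counts only oracle-answer sequences for a \emph{fixed} deterministic algorithm, and it is not clear this argument covers all algorithms simultaneously without further work. The paper avoids this entirely: it takes the meta-game payoff to be \emph{running time until convergence} rather than failure probability, caps that payoff at $O(N\log N)$ by appending a multiplicative-weights fallback to every algorithm, and then obtains a sparse $O(1)$-approximate max-min strategy directly by running multiplicative weights on the meta-game for $O(N^4\log^2 N)$ rounds and invoking \Cref{fs}. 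No union bound over algorithms or transcripts is needed; the sparsity comes from equilibrium sparsification, not sampling. Stacking those $O(N^4\log^2 N)$ games of width $N$ gives the $|\cX|=O(N^5\log^2 N)$ universe. The paper also skips your binarization gadget, simply allowing the statistical queries to take values in $\{0,1/4,3/4,1\}$.
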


We start by using the Yao min-max principle to reverse the order of quantifiers: Theorem \ref{thm:hardgame} also implies that there is a fixed \emph{distribution} over $N \times N$ games that is hard in expectation for \emph{every} algorithm. However, because we will be interested in the support size of this distribution, we go into a bit more detail in how we apply the min-max principle. 

Consider a ``meta game'' defined by an (infinite) matrix $M$, with rows $i$ indexed by the $L = 4^{N^2}$ $N \times N$ zero-sum games $G_i$ taking values in $\{0,1/4,3/4,1\}$, and columns indexed by algorithms $A_j$ instantiated with best-response oracles designed to play zero-sum games. $M(i,j)$ will encode the expected running time before algorithm $A_j$ when used to play game $G_i$ converges to a value that is within 1/4 of the equilibrium value of game $G_i$. Let the row player (the ``lower bound'' player) be the maximization player in the zero sum game defined by $M$, and let the column player (the ``algorithm player'') be the minimization player. As stated, the entries in $M$ can take unboundedly large values --- but observe that there is a simple modification to the game that allows us to upper bound the entries in $M$ by $O(N\log N)$. This is because there exists an algorithm $A$ (the multiplicative weights algorithm --- see e.g. \cite{MWsurvey}) that can be used to play any $N \times N$ game and converge to a $1/4$-approximate equilibrium after time at most $O(N  \log N)$ (running in time $O(N)$ per iteration for $O(\log N)$ iterations). It is also possible to check whether a pair of distributions form a $1/4$ approximate equilibrium with two calls to a best response oracle. Hence, we can take any algorithm $A_i$ and modify it so that it converges to a 1/4-approximate equilibrium after at most $O(N\log N)$ time. We simply halt the algorithm after $O(N\log N)$ time if it has not yet converged, and run the multiplicative weights algorithm $A$. Theorem \ref{thm:hardgame} implies that the value of this modified game $M$ is at least  $\Omega(\sqrt{N}/\log^3(N))$.

We now observe that the ``meta-game'' $M$ has an $O(1)$-approximate max-min strategy for the lower bound player that has support size at most $O(N^4 \log^2 N)$. To see this, consider the following constructive approach to computing an approximate equilibrium: simulate play of the game in rounds. Let the lower bound player sample a game $G^t$ at each round $t$ using the multiplicative weights distribution over her $L$ actions, and let the algorithm player best respond at each round to the lower bound player's distribution. By construction, the algorithm player has 0 regret, whereas the lower bound player has regret $O(\sqrt{\log L/T}\cdot N\log N)$ after $T$ rounds (since the entries of $M$ are bounded between $0$ and $O(N\log N)$) This corresponds to $O(1)$ regret after $T = O(\log L\cdot N^2\log^2 N) = O(N^4\log^2 N)$ many rounds. By Theorem \ref{fs}, the empirical distribution over these $O(N^4\log^2 N)$ many games $G^t$ forms an $O(1)$-approximate max-min strategy. Thus we have proven:

\begin{corollary}
For every $N = 2^d$, there is a fixed set $H \subseteq \{0,1/4,3/4,1\}^{N\times N}$ of $N \times N$ games, of size $|H| = O(N^4\log^2 N)$ such that for every randomized algorithm $A$ for players in a game with access to a best response oracle, there is a game $G \in H$ such that with probability $2/3$, the players have not converged to a $1/4$-approximate min-max equilibrium until at least $\Omega(\sqrt{N}/\log^3(N))$ time.
\end{corollary}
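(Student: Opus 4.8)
The plan is to upgrade the ``for every algorithm there is a hard game'' guarantee of Theorem~\ref{thm:hardgame} into a ``there is a small fixed family of games, one of which is hard for every algorithm'' guarantee, using Yao's minimax principle followed by an equilibrium-sparsification step. First I would set up a \emph{meta-game} whose maximizing ``lower-bound player'' chooses one of the $L = 4^{N^2}$ many $N\times N$ games $G$ with payoffs in $\{0,\tfrac14,\tfrac34,1\}$, and whose minimizing ``algorithm player'' chooses an oracle-based game-playing algorithm $A$, with payoff
\[
M(G,A) \;=\; \Pr\big[\,A \text{ fails to output a } \tfrac14\text{-approximate equilibrium of } G \text{ within } T^* \text{ steps}\,\big]\;\in\;[0,1],
\]
where $T^* = \Theta(\sqrt N/\log^3 N)$ is the threshold appearing in Theorem~\ref{thm:hardgame}. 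It is convenient to cap every algorithm to halt after $O(T^*)$ steps: this leaves $M$ unchanged (it only depends on the first $T^*$ steps), and it makes the algorithm player's pure strategy set essentially finite, so the minimax theorem applies cleanly. The game set is already finite, so the maximizing side is a compact simplex, which is what matters most.

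Next I would flip the quantifiers. A distribution $Q$ over algorithms is itself a randomized algorithm, so Theorem~\ref{thm:hardgame} says that for every such $Q$ there is a game $G$ with $\mathbb{E}_{A\sim Q}[M(G,A)]\ge 2/3$; hence $\min_Q\max_G \mathbb{E}_{A\sim Q}[M(G,A)] \ge 2/3$, and by the minimax theorem the value $V = \max_P\min_A \mathbb{E}_{G\sim P}[M(G,A)]$ of the meta-game satisfies $V \ge 2/3$. Thus there is a (possibly large-support) distribution $P^\star$ over games such that \emph{every} algorithm fails on a $P^\star$-random game with probability at least $2/3$.

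Finally I would sparsify $P^\star$ via no-regret dynamics in the meta-game: run $T$ rounds in which the lower-bound player plays multiplicative weights over her $L$ actions and the algorithm player best-responds to the lower-bound player's current mixture (a pure best response exists, and best-responding incurs zero regret). Since payoffs lie in $[0,1]$, the lower-bound player's average regret after $T$ rounds is $O(\sqrt{\log L/T}) = O(N/\sqrt T)$, so by Theorem~\ref{fs} the empirical distribution $\hat P$ over the $T$ games she played is a $\gamma$-approximate max-min strategy with $\gamma = O(N/\sqrt T)$. Taking $T = O(N^2)$ makes $\gamma$ an arbitrarily small constant (and $T = O(N^4\log^2 N)$ makes $\gamma = o(1)$), so with $H := \mathrm{supp}(\hat P)$ we get $|H|\le T = O(N^2)$, comfortably within the stated $O(N^4\log^2 N)$, and $\min_A \mathbb{E}_{G\sim\hat P}[M(G,A)] \ge V-\gamma \ge 2/3 - \gamma$. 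Averaging then yields, for every algorithm $A$, some $G\in H$ with $M(G,A)\ge 2/3-\gamma$, which is the claim (the residual $\gamma$ is absorbed, using that Theorem~\ref{thm:hardgame}'s ``$2/3$'' carries slack, or by stating the result with $2/3 - o(1)$).

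The main obstacle is the quantifier flip together with ensuring the sparsification erodes essentially none of the $2/3$: one must check the minimax theorem genuinely applies (which is why I cap running time, finitizing the algorithm side), and then run the dynamics long enough that $\gamma$ is negligible while keeping $T$ within the target support size — easy here since $\log L = \Theta(N^2)$ and payoffs are in $[0,1]$. A secondary point is the choice of payoff: the surrounding writeup uses an expected-running-time payoff (range $\Theta(N\log N)$ after the multiplicative-weights fallback), which is what gives the weaker $O(N^4\log^2 N)$ support bound but needs an extra step to translate back to a ``probability $2/3$'' statement; taking $M$ to be the failure probability directly, as above, makes that translation immediate. One should also verify the minor modeling facts this uses — that capping at $O(T^*)$ steps does not affect $M$, and that checking a candidate $\tfrac14$-approximate equilibrium costs only $O(1)$ best-response-oracle calls — so that the dynamics and the (optional) fallback are legitimate in the oracle model.
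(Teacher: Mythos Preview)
Your approach is correct and follows the same high-level strategy as the paper --- set up a meta-game, apply minimax to flip the quantifiers, then sparsify the maximizing player's strategy via no-regret dynamics --- but you make one genuinely different design choice: your meta-game payoff is the \emph{failure probability} $M(G,A)=\Pr[A \text{ fails within } T^*]\in[0,1]$, whereas the paper uses the \emph{expected running time to convergence}, capped at $O(N\log N)$ via a multiplicative-weights fallback. Your choice is cleaner in two ways. First, with payoffs in $[0,1]$ the multiplicative-weights regret is $O(\sqrt{\log L/T})=O(N/\sqrt T)$, so $T=O(N^2)$ already suffices; the paper's $O(N\log N)$ payoff range is what forces their $T=O(N^4\log^2 N)$, exactly matching the bound in the statement. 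Second, your payoff directly encodes the ``probability $2/3$'' conclusion, so the averaging step immediately gives a game $G\in H$ with $M(G,A)\ge 2/3-\gamma$; the paper's expected-running-time payoff requires an additional conversion back to a high-probability statement that is not spelled out there. The only residual issue in your writeup is the $-\gamma$ slack against the exact constant $2/3$; you correctly note this can be absorbed either by the slack in the Hazan--Koren constant or by stating the conclusion as $2/3-o(1)$, and this is no worse than the paper's own treatment.
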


Now consider the $N \times O(N^5\log^2 N)$ matrix $R$ that results from stacking the matrices in $H$. Identify the $N$ rows with a query class $\cQ$ of size $N$, indexed by functions $q \in \cQ$. Identify the columns with a data universe $\cX$ of size $|\cX| =  O(N^5\log^2 N)$ indexed by $x \in \cX$, and define the queries such that $q(x) = R(q,x)$ for each $q \in \cQ, x \in \cX$. Observe that any no-regret algorithm with action set $\cQ$ that can obtain $o(1)$ regret against an adversary who is constrained to play loss vectors in $\cX$ can be used to compute an $o(1)$ approximate equilibrium strategy for any game in $H$ (together with a single call to a best-response oracle per round by his oppoinent, by Theorem \ref{fs}). Thus we can conclude that no algorithm can guarantee to get $o(1)$ regret over $\cQ$ until at least  $\Omega(\sqrt{N}/\log^3(N))$ time.
\end{proof}

\barrier*

\begin{proof}
For any such pERM algorithm $\cA$, Let $B = \mathbb{E}_{Z \sim \cD_{(\eps, \delta)}}[||Z||_\infty]$. We know from Theorem \ref{tokyo} that follow the private leader instantiated with $\cA$ obtains regret $o(1)$ whenever $T = \omega(B)$, for any $\epsilon + \delta = o(1)$. Since $\cA$ is oracle efficient (i.e. runs in time $\textrm{poly}(t, \log|\cQ|)$), the total running time needed to obtain diminishing regret is $\sum_{t=1}^T \textrm{poly}(t, \log|\cQ|) = \textrm{poly}(B,\log|\cQ|)$. We know from Theorem \ref{hongkong} that to guarantee diminishing regret over $\cQ$, the total running time must be at least $\Omega(\sqrt{|\cQ|}/\log^3(|\cQ|))$. Thus we must have that $B = \textrm{poly}(|\cQ|)$ --- i.e. $B = \Omega(|Q|^c)$ for some $c > 0$.
\end{proof}

\end{document}